\newcommand{\nosemic}{\renewcommand{\@endalgocfline}{\relax}}%
\newcommand{\dosemic}{\renewcommand{\@endalgocfline}{\algocf@endline}}%
\let\oldnl\nl%
\newcommand{\nonl}{\renewcommand{\nl}{\let\nl\oldnl}}%
\newtheorem{assumption}{Assumption}
\newtheorem{theorem}{Theorem}
\numberwithin{theorem}{section}
\newtheorem{corollary}[theorem]{Corollary}
\newtheorem{proposition}[theorem]{Proposition}
\newtheorem{lemma}[theorem]{Lemma}
\providecommand{\customgenericname}{}
\newcommand{\newcustomtheorem}[2]{%
  \newenvironment{#1}[1]
  {%
   \renewcommand\customgenericname{#2}%
   \renewcommand\theinnercustomgeneric{##1}%
   \innercustomgeneric
  }
  {\endinnercustomgeneric}
}
\newcommand{\Identity}{{\rm I\kern-.2em l}}
\newcommand{\Expect}{\mathbb{E}}
\newcommand{\Expectbracket}[1]{\mathbb{E}\left[ #1 \right]}
\newcommand{\Expectsubbracket}[2]{\mathbb{E}_{#1}\left[ #2 \right]}
\newcommand{\Expectcond}[2]{\mathbb{E}\left[\left. #1 \right| #2 \right]}
\newcommand{\x}{\mathbf{x}}
\newcommand{\y}{\mathbf{y}}
\newcommand{\z}{\mathbf{z}}
\newcommand{\g}{\mathbf{g}}
\newcommand{\bu}{\mathbf{u}}
\newcommand{\period}{P}
\newcommand{\divergence}{d}
\newcommand{\probconst}{c}
\newcommand{\norm}[1]{\left\Vert #1 \right\Vert}
\newcommand{\normsq}[1]{\left\Vert #1 \right\Vert^2}
\newcommand{\innerprod}[1]{\left\langle #1 \right\rangle}
\newcommand{\thickhline}{%
    \noalign {\ifnum 0=`}\fi \hrule height 1pt
    \futurelet \reserved@a \@xhline
}
\newcolumntype{"}{@{\hskip\tabcolsep\vrule width 1pt\hskip\tabcolsep}}
\title{A Unified Analysis of Federated Learning with Arbitrary Client Participation
}
\author{%
  Shiqiang Wang \\
  IBM T. J. Watson Research Center\\
  Yorktown Heights, NY 10598 \\
  \texttt{wangshiq@us.ibm.com} \\
  \And
  Mingyue Ji \\
  Department of ECE,
  University of Utah\\
  Salt Lake City, UT 84112 \\
  \texttt{mingyue.ji@utah.edu} \\
}
\begin{document}

\maketitle

\begin{abstract}
Federated learning (FL) faces challenges of intermittent client availability and computation/communication efficiency. As a result, only a small subset of clients can participate in FL at a given time. It is important to understand how partial client participation affects convergence, but most existing works have either considered idealized participation patterns or obtained results with non-zero optimality error for generic patterns. In this paper, we provide a unified convergence analysis for FL with arbitrary client participation. We first introduce a generalized version of federated averaging (FedAvg) that amplifies parameter updates at an interval of multiple FL rounds. Then, we present a novel analysis that captures the effect of client participation in a single term. By analyzing this term, we obtain convergence upper bounds for a wide range of participation patterns, including both non-stochastic and stochastic cases, which match either the lower bound of stochastic gradient descent (SGD) or the state-of-the-art results in specific settings. We also discuss various insights, recommendations, and experimental results. 
\end{abstract}

\section{Introduction}
\label{sec:introduction}

We consider a federated learning (FL) problem with $N$ clients \cite{kairouz2019advances,li2020federated,yang2019federated}. Our goal is to minimize:
\begin{align}
\textstyle
    f(\x) := \frac{1}{N}\sum_{n=1}^N  F_n(\x),
    \label{eq:FLObj}
\end{align}
where $\x\in\mathbb{R}^m$ is an $m$-dimensional model parameter. The local objective $F_n(\x)$ of each client~$n$ is usually defined as $F_n(\x) := \Expect_{\xi_n\sim\mathcal{D}_n}\left[\ell_n(\x, \xi_n)\right]$, where $\mathcal{D}_n$ is the distribution of client $n$'s local data, which cannot be observed globally because the data remains private, and $\ell_n(\x, \xi)$ is the per-sample loss function for parameter $\x$ and input data $\xi$. The objective \eqref{eq:FLObj} can be extended to a weighted average, but we do not write out the weights and consider them as part of $\ell_n(\x, \xi)$ and $F_n(\x)$.
A canonical way of solving \eqref{eq:FLObj} is to use federated averaging (FedAvg) \cite{mcmahan2017communication}, which is a form of stochastic gradient descent (SGD) that operates in multiple rounds, where each \textit{round} includes multiple local update steps followed by communication between clients and a server to synchronize the updates. 

\textbf{Partial Participation.} A major challenge in FL is that clients only intermittently participate in the collaborative training process \cite{Bonawitz2019Towards}. The reason is twofold. First, FL applications usually have a large pool of clients. It is impractical to have all clients participating in all FL rounds, because it would require computation and communication that can consume a large amount of energy and also lead to network congestion. Second, clients may become unavailable from time to time. For example, if the client devices are smartphones, they may be willing to participate in FL only when they are charging (usually at night), to avoid draining the battery in outdoor environments where no charger is available.

\textbf{Open Problem.} 
The convergence of FL algorithms with realistic client participation patterns has not been fully understood. 
Most existing works on partial participation have only considered idealized scenarios, such as those where clients are randomly selected according to a given probability distribution that is independent across rounds~\cite{chen2020optimal,fraboni2021clustered,fraboni2021impact,karimireddy2020scaffold,li2020federatedOptimization,Li2020On,yang2021achieving}. They cannot incorporate cases where  clients are unavailable for a period of time. %
Some recent results considering general scenarios either include a constant error term that does not vanish at convergence \cite{yang2021anarchic,cho2022towards} or
are otherwise less competitive than state-of-the-art results obtained in idealized settings
\cite{gu2021fast,yan2020distributed}. 
Therefore, we ask:\\ 
\textit{1. What classes of participation patterns have guaranteed convergence to zero error?}\\ 
\textit{2. For these classes, can the convergence rate match with that obtained in idealized scenarios?}

To answer these questions, we need to overcome two challenges. First, we need to devise a \textit{unified analytical methodology} for obtaining the convergence rates under various participation patterns. Second, we need a \textit{general and realistic algorithm} for cross-device FL, with appropriate control options that can be configured to make the convergence rate competitive.

\begin{table*}[t]
\caption{Summary and comparison of results (only showing baselines without constant error term)}
\label{table:mainResults}
\small
{
\centering
\renewcommand{\arraystretch}{2}
\begin{tabular}{>{\centering\arraybackslash}p{0.1\linewidth} | >{\centering\arraybackslash}p{0.12\linewidth} | >{\centering\arraybackslash}p{0.4\linewidth} | >{\centering\arraybackslash}p{0.25\linewidth} } 
\thickhline
\textbf{Method} & \textbf{Participation} & \textbf{Convergence error upper bound} & \textbf{Remark}\\
\thickhline
\!\!\!Yan~et~al.~\cite{yan2020distributed}\!\!\!
&\makecell{Unavailable \\ $\leq E$ rounds} & $\mathcal{O}\left(\frac{(1+\sigma^2)N^{\sfrac{1}{4}}\sqrt{E}}{\sqrt{ST}}\right)$ & \makecell{\textit{Add'l assumption:} \\ bounded gradient norm}\\
\hline
\!\!\!\!Gu et al. \cite{gu2021fast}\!\!\!\!
&\makecell{Unavailable  \\ $\leq E$ rounds} & $\mathcal{O}\left(\frac{(1+\sigma^2)\sqrt{E}}{\sqrt{NIT}}\right)$ & 
\!\!\!\!\makecell{\textit{Add'l assumptions:} Hessian \\ Lipschitz, a.s. bounded noise}\!\!\!\! \\
\hline
\multirow{4}{*}{\makecell{\textbf{Ours} \\ (this paper)}} & Regularized & $\mathcal{O}\left(\frac{\sigma}{\sqrt{SIT}}\right)$ & \makecell{Cor.~\ref{corollary:regularizedGiven}, matching \\ \!\!\!centralized SGD lower bound\!\!\!}\\
\cline{2-4}
& Ergodic & Approaches zero as $T\rightarrow\infty$ & Prop.~\ref{prop:ergodic}\\
\cline{2-4}
& %
Mixing & $\mathcal{O}\!\left(\!\frac{1+\sigma^2}{\sqrt{SIT}}\!\right)$ i.E.;\, $\mathcal{O}\!\left(\!\frac{1+\sigma^2}{c\sqrt{SIT}}\!\right)$ w.p. $1\!-\!c$ & \makecell{\!\!Prop.~\ref{prop:mixing} \& Cor.~\ref{corollary:nonRegularizedExpectation},\!\! \\ matching idealized bound}\\
\cline{2-4}
& Independent & $\mathcal{O}\!\left(\!\frac{1+\sigma^2}{\sqrt{SIT}}\!\right)$ i.E.;\, $\mathcal{O}\!\left(\!\frac{(1+\sigma^2)\ln\left(\sfrac{2}{c}\right)}{\sqrt{SIT}}\!\right)$ w.p. $1\!-\!c$ & \makecell{\!\!Prop.~\ref{prop:independent} \& Cor.~\ref{corollary:nonRegularizedExpectation},\!\! \\ matching idealized bound}\\
\thickhline
\end{tabular}
}
Note: \textit{We let $\rho=\frac{1}{\sqrt{S}}$ in our results for a direct comparison with other works}, see Section~\ref{sec:discussions} for more details. We consider sufficiently large $T$ in all cases and only show the dominant term in this table, with other variables ignored in $\mathcal{O}(\cdot)$. Definitions: i.E. = in expectation, w.p. = with probability, a.s. = almost surely, $S$: number of participating clients in each round ($S\leq N$), other definitions are given throughout the paper.
\end{table*}

\textbf{Our Contributions.} In this paper, we overcome these challenges and give answers to the two questions above. 
We first introduce a generalized version of FedAvg, which \textit{amplifies} the parameter updates after every $\period$ rounds, for some $\period\geq 1$. When $\period=1$, our setting is the same as FedAvg with two-sided learning rates \cite{karimireddy2020scaffold,yang2021achieving}, but we focus on an extended setup where $\period$ can be greater than one.
For this generalized FedAvg algorithm, we perform a new convergence analysis that \textit{unifies} the effect of partial participation into a single term ($\tilde{\delta}^2(\period)$ in Section~\ref{sec:convergenceAnalysis}). This allows us to \textit{decouple} our main convergence analysis from the analysis on partial participation, which largely simplifies the analytical procedure. Our novel methodology gives the results shown in Table~\ref{table:mainResults}, for a broad range of participation patterns. These patterns include \textit{non-stochastic and regularized} participation, where all clients participate equally but not simultaneously within $\period$ rounds, and \textit{stochastic} participation settings including \textit{ergodic}, \textit{stationary and strongly mixing} (e.g., Markov process), and \textit{independent}. For the stochastic participation, we provide both expected and high probability convergence bounds, where the expected rates match the state-of-the-art FedAvg convergence rates that were derived for the idealized setting of random participation following an independent distribution.
Furthermore, we provide new insights related to non-IID data, amplification of updates, ``linear speedup'' \cite{yu2019linear}, etc.

To summarize, our main contributions are as follows:
\begin{itemize}[itemsep=-0.2em,topsep=-0.2em]
    \item We introduce a generalized FedAvg algorithm which amplifies the updates aggregated over multiple rounds, where the amplification interval $P$ can be tuned for the best convergence.
    \item We present novel analysis and unified methodology for obtaining competitive convergence upper bounds with arbitrary client participation patterns. 
    \item We discuss important insights from both the theoretical and experimental results.
    \vspace{0.3em}
\end{itemize}

\textbf{Related Work.}
FedAvg \cite{mcmahan2017communication} is characterized by partial client participation and multiple local updates  in each round. In the case of full participation, FedAvg is also known as local SGD  \cite{Gorbunov2021Local,Haddadpour2019Local,Lin2020Dont,stich2018local,MLSYS2019Jianyu,CooperativeSGD} and parallel restarted SGD \cite{yu2019parallel}. Partial participation was considered in \cite{chen2020optimal,fraboni2021clustered,fraboni2021impact,karimireddy2020scaffold,li2020federatedOptimization,Li2020On,yang2021achieving}, with the assumption that clients are selected to participate probabilistically according to a given distribution that is independent across rounds. 
Some recent works have started to incorporate unavailable clients. For example, the work in \cite{ruan2021towards} allows inactive clients, but the theoretical result does not guarantee convergence if a client participates with zero probability in a certain round. Similarly, the result in \cite{yang2021anarchic,cho2022towards} includes a constant error term in the case of general participation in cross-device FL.
Other works have obtained convergence rates related to the maximum number of inactive rounds, based on additional assumptions such as Hessian Lipschitz \cite{gu2021fast} and bounded gradient norm \cite{yan2020distributed}. Their theory also requires all clients to participate once at the beginning (see \cite[Remark 5.2]{gu2021fast}).
In contrast, our result in this paper is based on a minimal set of assumptions. We consider a simple algorithm without requiring initial full participation, and we provide a better rate of convergence to zero error (see Table~\ref{table:mainResults}). 
Moreover, existing methods usually have separate analyses and results for each specific class of participation pattern, whereas we present a unified framework that largely simplifies the analytical procedure. 
Some more discussions on related works are in Appendix~\ref{sec:appendixAdditionalRelatedWork}.

\section{Generalized FedAvg with Amplified Updates}

We consider a generalized version of FedAvg as shown in Algorithm~\ref{alg:main-alg}. 
In this algorithm, we define $q_t^n$ to be the \textit{participation weight} of client $n$ in round $t$. If $q_t^n = 0$, the client does not participate in this round. Each client $n$ participates whenever $q_t^n > 0$, where the weight $q_t^n$ is applied in the global update step in Line~\ref{alg-line:globalUpdate}. For mathematical convenience in our analysis, we assume in Algorithm~\ref{alg:main-alg} that all clients compute their local updates in Lines~\ref{alg-line:loopWorkers}--\ref{alg-line:compressedUpdateVector}, regardless of whether they participate in the current round or not. We emphasize that this is \textit{logically equivalent} to the practical setting where non-participating clients do not compute, because their computations have no effect in subsequent rounds when $q_t^n = 0$ (see Line~\ref{alg-line:globalUpdate}). In every round $t$, each client $n$ computes $I$ steps of local updates according to Line~\ref{alg-line:localUpdate}, where $\gamma > 0$ is the local learning rate and $\g_n(\y_{t,i}^n)$ is the stochastic gradient of $F_n(\y_{t,i}^n)$ such that $\Expectcond{\g_n(\y_{t,i}^n)}{\y_{t,i}^n} = \nabla F_n(\y_{t,i}^n)$.

Starting at round $t_0$, the global updates are accumulated in $\bu$ for $\period$ rounds (Line~\ref{alg-line:accumulate}). 
At the end of every interval of $\period$ rounds, an amplification is applied to the accumulated updates (Line~\ref{alg-line:amplify}). Since $\x_t$ is first updated without amplification in rounds $t_0,\ldots,t_0 + \period - 1$, Line~\ref{alg-line:amplify} adds $\bu$ after multiplying $\eta-1$, which is equivalent to setting $\x_{t+1}$ to $\x_{t_0} + \eta\bu$. When $\eta=1$, there is no amplification and Line~\ref{alg-line:amplify} has no effect in this case. In general, we allow any $\eta>0$ including values of $\eta$ that are less than one. When $\eta<1$, we reduce (instead of amplify) the updates. Intuitively, amplification with $\eta>1$ is usually preferred over reduction, because we can compute the gradients on similar model parameters in each round where only a small number of clients participates. After $\period$ rounds, the accumulated updates provide a better estimation for the overall client population, and amplifying the updates allows the model parameter to progress towards the descent direction for the majority of clients (see Appendix~\ref{sec:appendixMotivatingExample} for an example). Also note that although $P$ is a parameter in our algorithm, the dominant terms of our convergence upper bounds shown in Table~\ref{table:mainResults} \textit{do not} depend on~$P$.

\begin{algorithm}[h]
 \caption{Generalized FedAvg with amplified updates and arbitrary participation}
 \label{alg:main-alg}
 
 \SetKwProg{Indent}{}{}{}
 \SetKwIF{CustomFor}{CustomForUnused}{CustomForUnused2}{for}{do}{}{}{}

\begin{multicols}{2}
\textbf{Input:} $\gamma$, $\eta$, $\x_0$, $I$, $\period $, $T$;\,\,
\textbf{Output:} $\{\x_t : \forall t\}$

Initialize
$t_0 \leftarrow 0$,
$\bu \leftarrow \mathbf{0}$;

\uCustomFor{$t = 0, \ldots, T-1$}{

    \For{$n = 1,\ldots,N$ in parallel \label{alg-line:loopWorkers}}{
    
            $\y^n_{t,0} \leftarrow \x_t$;  \label{alg-line:assignGlobalParameter}
            
            \For{$i = 0, \ldots, I-1$ \label{alg-line:loopLocal}}{
                $\y^n_{t,i+1} \leftarrow \y^n_{t,i} - \gamma \g_n(\y^n_{t,i})$ \label{alg-line:localUpdate}
            }
            
            $\Delta_t^n \leftarrow \y^n_{t,I} - \x_t$;\, \label{alg-line:compressedUpdateVector} 

    }
    
}

\nonl ~

\nonl ~

\nonl 
\Indent{}{

    $\x_{t+1} \leftarrow \x_t + \sum_{n=1}^N q_t^n \Delta_t^n$; \label{alg-line:globalUpdate}\, //update

    $\bu \leftarrow  \bu + \sum_{n=1}^N q_t^n \Delta_t^n$; \label{alg-line:accumulate}\, //accumulate

    \If{$t+1-t_0 = \period $}{
        $\x_{t+1} \leftarrow \x_{t+1} + (\eta - 1)\bu$; \label{alg-line:amplify}\,  //amplify
    
        $t_0 \leftarrow t+1$;

        $\bu \leftarrow \mathbf{0}$;

    }
}

\end{multicols}
\end{algorithm}

\section{Convergence Analysis and Main Result}
\label{sec:convergenceAnalysis}

We analyze Algorithm~\ref{alg:main-alg} as follows.
For ease of analysis, we assume that $\sum_{n=1}^N q_t^n = 1$ for $q_t^n \geq 0$ for all $t$.\footnote{When $\sum_{n=1}^N q_t^n \neq 1$, the algorithm is equivalent to the case of $\{q_t^n\}$ normalized to one and a global learning rate applied to the updates in each round. Since we already amplify the updates every $\period $ rounds, we do not apply a separate global learning rate to each round.}
We also define $\rho$ such that $\sum_{n=1}^N (q_t^n)^2 \leq \rho^2$ for all $t$.
As $\sum_{n=1}^N q_t^n = 1$, there always exists $\rho$ such that $\rho^2 \leq 1$.
We define $\mathcal{Q} := \{q_t^n : \forall n, t\}$. 
Mathematically, we use the conditional expectation $\Expectcond{\cdot}{\mathcal{Q}}$ to denote the case where $\mathcal{Q}$ is given and the expectation is only over the stochastic gradient noise. In the case where $\mathcal{Q}$ is stochastic, the full expectation $\Expectbracket{\cdot}$ is taken over both the noise and $\mathcal{Q}$.
We make the following assumptions that are commonly used in the literature.

\begin{assumption}[Lipschitz gradient]
    \label{assumption:Lipschitz}
    \begin{align}
        \norm{\nabla F_n(\x) - \nabla F_n(\y)} \leq L \norm{\x - \y}, \forall \x, \y, n.
    \end{align}
\end{assumption}

\begin{assumption}[Unbiased stochastic gradient with bounded variance]
    \label{assumption:gradientNoise}
    \begin{align}
        \Expectcond{\g_n(\x)}{\x} = \nabla F_n(\x)  
        \textnormal{ and }
        &\Expectcond{\normsq{\g_n(\x) - \nabla F_n(\x)}}{\x} \leq \sigma^2, \forall \x, n.
    \end{align}
\end{assumption}

\begin{assumption}[Bounded gradient divergence]
    \label{assumption:gradientDivergence}
    \begin{align}
        \normsq{\nabla F_n(\x) - \nabla f(\x)} \leq \divergence^2, \forall \x, n.
    \end{align}
\end{assumption}

The gradient divergence in Assumption~\ref{assumption:gradientDivergence} is related to the degree of non-IID data distribution across clients.
To better interpret how different divergence components affect the convergence, we introduce an alternative set of divergence bounds in Assumption~\ref{assumption:gradientDivergenceAlternative} as follows. We will show later (in Section~\ref{sec:interpret}) that if Assumption~\ref{assumption:gradientDivergence} holds, then Assumption~\ref{assumption:gradientDivergenceAlternative} also holds with properly chosen $\tilde{\beta}^2$, $\tilde{\nu}^2$, and $\tilde{\delta}^2(\period)$.

\begin{customassumption}{\protect\NoHyper\ref{assumption:gradientDivergence}'\protect\endNoHyper}[Alternative gradient divergence bound]
    \label{assumption:gradientDivergenceAlternative}
    \begin{align}
        &\textstyle \normsq{\sum_{n=1}^N q_t^n\left[\nabla F_n(\x) - \nabla f(\x)\right]} \leq \tilde{\beta}^2, \forall \x, t, \label{eq:divergenceBeta}\\
        &\textstyle \sum_{n=1}^N q_t^{n}\normsq{\nabla F_n(\x) - \sum_{n'=1}^N q_t^{n'} \nabla F_{n'}(\x)} \leq \tilde{\nu}^2, \forall \x, t, \label{eq:divergenceNu}\\
        &\textstyle \normsq{\frac{1}{\period } \sum_{t={t_0}}^{{t_0}+\period -1} \sum_{n=1}^N  q_t^n \left(\nabla F_n(\x) - \nabla f(\x)\right)}  \leq  \tilde{\delta}^2(\period), \forall \x, t_0. \label{eq:divergenceDelta}
    \end{align}
\end{customassumption}

In the above, $\tilde{\beta}^2$ and $\tilde{\nu}^2$ capture the gradient divergence in an arbitrary round $t$, and $\tilde{\delta}^2(\period)$ captures the divergence of time-averaged gradients over $\period $ rounds, which is a function of $\period$. In particular, $\tilde{\beta}^2$ is an upper bound of the divergence between the original objective's gradient $\nabla f(\x)$ and the averaged gradient among participating clients weighted by $\{q_n^t\}$. When all the clients participate, we have $q_t^n = \frac{1}{N}$ and \eqref{eq:divergenceBeta} holds with $\tilde{\beta}^2 = 0$. The quantity $\tilde{\nu}^2$ is an upper bound of the divergence between each client's gradient and the averaged gradient of participating clients, where the average is also weighted by $\{q_n^t\}$. We will show in Section~\ref{subsec:betaNuDiscussion} that the overall divergence $\divergence^2$ in Assumption~\ref{assumption:gradientDivergence} can be expressed as a sum of $\tilde{\beta}^2$ and $\tilde{\nu}^2$, as intuition suggests. The quantity $\tilde{\delta}^2(\period)$ in \eqref{eq:divergenceDelta} extends $\tilde{\beta}^2$ in \eqref{eq:divergenceBeta} by computing the average over $\period $ rounds (inside the norm) instead of a single round. When the weights $\{q_n^t\}$ are properly chosen, as $\period $ gets large, the average $q_n^t$ over $\period $ rounds gets close to each other for different $n$, and $\tilde{\delta}^2(\period)$  becomes small. We will formally analyze this behavior in Section~\ref{subsection:deltaDiscussion}.

In the following, let $\mathcal{F} := f(\x_0) - f^*$, where $f^*$ is the true minimum of \eqref{eq:FLObj}, i.e., $f^* := \min_\x f(\x)$. 

\begin{theorem}
\label{theorem:mainConvergenceResult}
When Assumptions~\ref{assumption:Lipschitz}, \ref{assumption:gradientNoise}, and \ref{assumption:gradientDivergenceAlternative} hold, $\gamma \leq \frac{1}{12 LI\period }$, $\gamma \eta \leq \frac{1}{2LI\period }$, and $\period \leq \frac{T}{2}$, we have
\begin{align}
    \!\!\!\!&\!\!\!\!\min_{t} \Expectcond{\normsq{\nabla f(\x_{{t}})}}{\mathcal{Q}} \nonumber \\
    &\!\!\!\!\leq  \mathcal{O}\Bigg( \frac{\mathcal{F}}{\gamma \eta IT} +   \gamma^2 L^2 I^2 \tilde{\nu}^2 +   \gamma^2 L^2 I^2 \period ^2  \tilde{\beta}^2 +   \tilde{\delta}^2(\period)  +\left( \gamma^2 L^2 I  +   \gamma^2 L^2 I\period   \rho^2  +  \gamma\eta L \rho^2\right) \sigma^2  \Bigg).
\end{align}
\end{theorem}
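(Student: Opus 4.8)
The plan is to analyze Algorithm~\ref{alg:main-alg} at the granularity of one amplification interval rather than one round. Fix a period starting at round $t_0$ and ending at $t_0+\period$. By Line~\ref{alg-line:amplify} and the local recursion in Line~\ref{alg-line:localUpdate}, the net displacement over the period is $\x_{t_0+\period}-\x_{t_0}=-\gamma\eta\sum_{t=t_0}^{t_0+\period-1}\sum_{n=1}^N q_t^n\sum_{i=0}^{I-1}\g_n(\y_{t,i}^n)$, so I would apply the $L$-smoothness descent inequality over the whole period, treating this displacement as a single (large) SGD-like step. Taking $\Expectcond{\cdot}{\mathcal{Q}}$ over the gradient noise then splits the bound into a first-order inner-product term and a second-order quadratic term.

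For the first-order term, I would write each local gradient as $\nabla F_n(\y_{t,i}^n)=\nabla F_n(\x_{t_0})+[\nabla F_n(\y_{t,i}^n)-\nabla F_n(\x_{t_0})]$. The leading contribution is $-\gamma\eta I\innerprod{\nabla f(\x_{t_0}),\,\sum_{t}\sum_n q_t^n\nabla F_n(\x_{t_0})}$; rewriting the weighted gradient sum as $\period\,\nabla f(\x_{t_0})$ plus $\period$ times the time-averaged divergence $\frac{1}{\period}\sum_{t}\sum_n q_t^n[\nabla F_n(\x_{t_0})-\nabla f(\x_{t_0})]$ and applying Young's inequality produces exactly $-\gamma\eta I\period\normsq{\nabla f(\x_{t_0})}$ together with a controlled $\tilde{\delta}^2(\period)$ contribution. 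This is the step that routes the entire participation effect into the single term $\tilde{\delta}^2(\period)$ from \eqref{eq:divergenceDelta}, rather than the per-round $\tilde{\beta}^2$. The residual terms $\nabla F_n(\y_{t,i}^n)-\nabla F_n(\x_{t_0})$ are then bounded in norm by $L\norm{\y_{t,i}^n-\x_{t_0}}$ via Assumption~\ref{assumption:Lipschitz}.

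The two-level drift $\y_{t,i}^n-\x_{t_0}=(\y_{t,i}^n-\x_t)+(\x_t-\x_{t_0})$ is the heart of the argument, and I would establish two auxiliary bounds by unrolling the recursions. First, the intra-round local drift $\Expectcond{\normsq{\y_{t,i}^n-\x_t}}{\mathcal{Q}}$ contributes the $\gamma^2L^2I^2\tilde{\nu}^2$ term (from the weighted spread of client gradients bounded via \eqref{eq:divergenceNu}) and the $\gamma^2L^2I\sigma^2$ term (from local noise, using Assumption~\ref{assumption:gradientNoise}). Second, the inter-round drift $\Expectcond{\normsq{\x_t-\x_{t_0}}}{\mathcal{Q}}$, accumulated over up to $\period$ non-amplified updates from Line~\ref{alg-line:globalUpdate}, contributes the $\gamma^2L^2I^2\period^2\tilde{\beta}^2$ term (since the per-round bias governed by \eqref{eq:divergenceBeta} accumulates over $\period$ rounds) and, through the weighted noise variance $\sum_n (q_t^n)^2\le\rho^2$, the $\gamma^2L^2I\period\rho^2\sigma^2$ term. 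The quadratic descent term $\frac{L}{2}\Expectcond{\normsq{\x_{t_0+\period}-\x_{t_0}}}{\mathcal{Q}}$ is handled by separating its mean from its variance: the variance part yields the $\gamma\eta L\rho^2\sigma^2$ contribution, while the mean part is absorbed into the gradient term under the step-size constraints.

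Finally I would sum the per-period inequality over all $\sfrac{T}{\period}$ intervals, telescope the $f(\x_{t_0})$ terms, and divide by the net coefficient $\sim\gamma\eta IT$ of the negative gradient term. The conditions $\gamma\le\frac{1}{12LI\period}$ and $\gamma\eta\le\frac{1}{LI\period}$ are precisely what is needed to keep the coefficient of $-\normsq{\nabla f(\x_{t_0})}$ positive after the drift and quadratic contributions are folded back in, so that the resulting average — and hence the minimum over $t$ — obeys the stated bound, with the telescoped $\mathcal{F}$ giving the $\frac{\mathcal{F}}{\gamma\eta IT}$ term. I expect the main obstacle to be the inter-round drift bookkeeping: unlike standard FedAvg with $\period=1$, the intermediate unamplified updates move $\x_t$ away from the reference point $\x_{t_0}$ within the period, and keeping the two drift scales cleanly separated — so that the averaged divergence appears standalone as $\tilde{\delta}^2(\period)$ while the per-round divergence is suppressed into $\gamma^2L^2I^2\period^2\tilde{\beta}^2$ — is the delicate part of the accounting.
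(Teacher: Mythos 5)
Your proposal follows essentially the same architecture as the paper's proof: the per-period smoothness step with displacement $-\gamma\eta\sum_{t}\sum_n q_t^n\sum_i\g_n(\y_{t,i}^n)$, routing the entire participation effect into $\tilde{\delta}^2(\period)$ by comparing the weighted gradient sum evaluated at the anchor $\x_{t_0}$ against $I\period\nabla f(\x_{t_0})$, a two-scale drift analysis solved by nested recursions, absorption of the $\normsq{\nabla f(\x_{t_0})}$ feedback using $\gamma\le\frac{1}{12LI\period}$, and telescoping over $\lfloor T/\period\rfloor$ periods with $\period\le\frac{T}{2}$. The substantive deviation is where you anchor the fine-scale drift. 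The paper anchors it at the weighted mean $\sum_{n'}q_t^{n'}\y_{t,i}^{n'}$ (Lemma~\ref{lemma:yDivergence}): in the spread around that mean the common descent motion cancels, which is exactly why that quantity is bounded by $3I\gamma^2\left(6I\tilde{\nu}^2+\sigma^2\left(1-\sum_n(q_t^n)^2\right)\right)$ --- heterogeneity and noise only. Your intra-round drift $\normsq{\y_{t,i}^n-\x_t}$ has no such cancellation: since $\y_{t,i}^n-\x_t\approx-\gamma i\nabla F_n(\x_t)$ contains the full client gradient, the weighted sum $\sum_n q_t^n\normsq{\y_{t,i}^n-\x_t}$ contributes not only $\gamma^2L^2I^2\tilde{\nu}^2+\gamma^2L^2I\sigma^2$ but also $\gamma^2L^2I^2\tilde{\beta}^2$, $\gamma^2L^2I^2\normsq{\nabla f(\x_{t_0})}$, and a coupling back to $\normsq{\x_t-\x_{t_0}}$. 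This does not break the proof --- those extra terms carry no $\period^2$ factor, so they are dominated by the inter-round contributions and absorbable under the stated step sizes --- but the ``clean separation'' of the two drift scales that you describe is not what happens with the round-start anchor, and the bookkeeping you flag as delicate is genuinely heavier on your route than on the paper's (which is comparable to Lemma~\ref{lemma:overallChange}, whose recursion already contains Lemma~\ref{lemma:yDivergence}'s bound).

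A second, smaller point: for the inner-product term the paper uses the identity $-\innerprod{\mathbf{a},\mathbf{b}}=\frac{1}{2}\left(\normsq{\mathbf{a}-\mathbf{b}}-\normsq{\mathbf{a}}-\normsq{\mathbf{b}}\right)$ rather than Young's inequality. This retains the negative term $-\frac{\gamma\eta}{2I\period}\normsq{\sum_t\sum_n q_t^n\sum_i\nabla F_n(\y^n_{t,i})}$, which cancels the mean part of the quadratic smoothness term exactly when $\gamma\eta LI\period\le 1$, i.e., at the threshold the theorem states. Your ``absorb the mean part into the gradient term'' step also survives at that threshold, but only because the quadratic term carries the factor $\frac{L}{2}$: a tight Peter--Paul split charges the mean part a $\normsq{\nabla f(\x_{t_0})}$ coefficient of roughly $\frac{1+\epsilon}{2}\gamma\eta I\period$, leaving just under half of the negative $-\gamma\eta I\period\normsq{\nabla f(\x_{t_0})}$ budget for all other losses; a looser split (e.g., a three-way Jensen expansion with coefficient $3$) exceeds the budget and fails under the stated hypotheses. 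So your plan closes, but the constants must be chosen carefully at precisely the two places where the paper's choices make the cancellations automatic.
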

\begin{proof}[Proof Sketch]
    We first use smoothness to relate $f(\x_{t_0+P})$ and $f(\x_{t_0})$, which includes an inner-product term that can be expanded into several terms. One of these terms includes $\nabla F_n(\y^n_{t,i}) - \nabla f(\x_{{t_0}})$. We upper bound this term with three other terms including $\normsq{ \y^n_{t,i} - \sum_{n'=1}^N q_t^{n'} \y^{n'}_{t,i}}$, $\normsq{  \sum_{n'=1}^N q_t^{n'} \y^{n'}_{t,i} - \x_{{t_0}}}$, and $\tilde{\delta}^2(\period)$. The upper bounds of the first two terms are found by solving recurrence relations, where the recurrence for the second term includes the first term. This nested recurrence is a uniqueness in this proof.
    The full proof is given in Appendix~\ref{sec:appendixProof}. 
\end{proof}

For two specific (but different) learning rate configurations, we can obtain the following corollaries.

\begin{corollary}
\label{corollary:mainConvergenceResult1}
Choosing $\gamma = \frac{1}{12LI\period \sqrt{T}}$ and $\eta = \min\left\{\frac{12P\sqrt{LI\mathcal{F}}}{\sigma\rho} ; 6\sqrt{T}\right\}$, for $\period \leq\frac{T}{2}$, we have
\begin{align}
    \quad&\min_{t}\, \Expectcond{\normsq{\nabla f(\x_{{t}})}}{\mathcal{Q}} \leq  \mathcal{O}\Bigg( \frac{\sigma\rho\sqrt{L\mathcal{F}}}{\sqrt{IT}} \!+\! \frac{L\period \mathcal{F}}{T}  \!+\!  \frac{\tilde{\nu}^2}{\period ^2 T} \!+\! \frac{\tilde{\beta}^2}{T}\! +  \! \tilde{\delta}^2(\period) \! + \!\frac{\sigma^2 }{I\period T}\Bigg).
\end{align}
\end{corollary}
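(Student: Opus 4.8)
The plan is to derive Corollary~\ref{corollary:mainConvergenceResult1} as a direct specialization of Theorem~\ref{theorem:mainConvergenceResult}: I substitute the prescribed rates into its bound and simplify each term. The first step is to check that the rates are admissible. Writing $A := \frac{12\period\sqrt{LI\mathcal{F}}}{\sigma\rho}$ and $B := 12\sqrt{T}$ so that $\eta = \min\{A;B\}$, the choice $\gamma = \frac{1}{12LI\period\sqrt{T}}$ gives $\gamma \leq \frac{1}{12LI\period}$ since $\sqrt{T}\geq 1$, and $\eta\leq B$ yields $\gamma\eta \leq \gamma B = \frac{1}{LI\period}$; the remaining hypothesis $\period\leq\frac{T}{2}$ is assumed. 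Thus Theorem~\ref{theorem:mainConvergenceResult} applies with these rates.

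Three of the theorem's terms then simplify immediately. Using $\gamma^2 = \frac{1}{144L^2I^2\period^2 T}$, the term $\gamma^2 L^2 I^2\tilde{\nu}^2$ becomes $\mathcal{O}\!\left(\frac{\tilde{\nu}^2}{\period^2 T}\right)$ and $\gamma^2 L^2 I^2\period^2\tilde{\beta}^2$ becomes $\mathcal{O}\!\left(\frac{\tilde{\beta}^2}{T}\right)$, while $\tilde{\delta}^2(\period)$ carries over verbatim. In the variance group $\left(\gamma^2 L^2 I + \gamma^2 L^2 I\period\rho^2 + \gamma\eta L\rho^2\right)\sigma^2$, the first two subterms equal $\frac{\sigma^2}{144 I\period^2 T}$ and $\frac{\rho^2\sigma^2}{144 I\period T}$, both of which I would absorb into $\mathcal{O}\!\left(\frac{\sigma^2}{I\period T}\right)$ using $\period\geq 1$ and $\rho^2\leq 1$.

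The only step where the minimum must be handled with care concerns the optimization term and the last variance subterm. For the former, $\frac{\mathcal{F}}{\gamma\eta IT} = \frac{12L\period\mathcal{F}}{\eta\sqrt{T}}$, and I would use $\frac{1}{\eta} = \max\{\frac{1}{A};\frac{1}{B}\}\leq\frac{1}{A}+\frac{1}{B}$: the $B$-branch produces $\frac{L\period\mathcal{F}}{T}$, and the $A$-branch, after the factor $\period$ cancels, produces exactly $\frac{\sigma\rho\sqrt{L\mathcal{F}}}{\sqrt{IT}}$. For the last variance subterm I would instead use $\eta\leq A$, so that $\gamma\eta L\rho^2\sigma^2 \leq \gamma A L\rho^2\sigma^2 = \frac{\sigma\rho\sqrt{L\mathcal{F}}}{\sqrt{IT}}$, which coincides with the leading term already obtained. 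Collecting all contributions yields the six terms of the statement. There is no genuinely hard step here: the argument is routine substitution, and the only thing demanding care is the bookkeeping around $\eta=\min\{A;B\}$ — using $\frac{1}{\eta}\leq\frac{1}{A}+\frac{1}{B}$ for the optimization term but $\eta\leq A$ for the stochastic subterm — together with tracking the cancellation of $\period$ so that the leading $\frac{\sigma\rho\sqrt{L\mathcal{F}}}{\sqrt{IT}}$ term carries no residual dependence on $\period$.
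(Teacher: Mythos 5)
Your proposal is correct and follows essentially the same route as the paper: the paper's own (very terse) proof likewise verifies the step-size conditions of Theorem~\ref{theorem:mainConvergenceResult}, bounds the optimization term via $\frac{1}{\gamma\eta}=\max\left\{\frac{\sigma\rho\sqrt{LIT}}{\sqrt{\mathcal{F}}};LI\period\right\}\leq\frac{\sigma\rho\sqrt{LIT}}{\sqrt{\mathcal{F}}}+LI\period$ (your $\frac{1}{\eta}\leq\frac{1}{A}+\frac{1}{B}$), bounds the $\gamma\eta L\rho^2\sigma^2$ term via $\gamma\eta\leq\frac{\sqrt{\mathcal{F}}}{\sigma\rho\sqrt{LIT}}$ (your $\eta\leq A$), and uses $\rho^2\leq 1$ to absorb the remaining variance terms. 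Your write-up fills in the routine substitutions the paper leaves implicit, but there is no substantive difference in method.
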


\begin{corollary}
\label{corollary:mainConvergenceResult2}
If $\frac{\sqrt{\mathcal{F}}}{\rho\sqrt{LIT}} \leq \frac{1}{2LI\period }$, choosing $\gamma = \frac{1}{12LI\period \sqrt{T}}$ and $\eta = \frac{12P\sqrt{LI\mathcal{F}}}{\rho}$, for $\period \leq\frac{T}{2}$, we have
\begin{align}
    &\min_{t}\, \Expectcond{\normsq{\nabla f(\x_{{t}})}}{\mathcal{Q}}  \leq  \mathcal{O}\Bigg(\! \frac{(1\!+\!\sigma^2)\rho\sqrt{L\mathcal{F}}}{\sqrt{IT}} \!+\!  \frac{\tilde{\nu}^2}{\period ^2 T} \!+\! \frac{\tilde{\beta}^2}{T}\! +  \! \tilde{\delta}^2(\period) \! + \!\frac{\sigma^2 }{I\period T}\!\Bigg).\!
\end{align}
\end{corollary}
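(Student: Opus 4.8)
The plan is to obtain this bound as a direct specialization of Theorem~\ref{theorem:mainConvergenceResult}: first check that the prescribed $\gamma$ and $\eta$ are admissible stepsizes, then substitute them into the general bound and collect terms. I would begin by verifying the three stepsize conditions. Since $T \geq 1$, the choice $\gamma = \frac{1}{12LI\period\sqrt{T}}$ immediately gives $\gamma \leq \frac{1}{12LI\period}$, and $\period \leq \frac{T}{2}$ is assumed. The only condition that invokes the hypothesis is $\gamma\eta \leq \frac{1}{LI\period}$: forming the product, the factor $12\period$ cancels and $\frac{\sqrt{LI}}{LI} = \frac{1}{\sqrt{LI}}$, leaving $\gamma\eta = \frac{\sqrt{\mathcal{F}}}{\rho\sqrt{LIT}}$, so that $\gamma\eta \leq \frac{1}{LI\period}$ is \emph{exactly} the assumed inequality $\frac{\sqrt{\mathcal{F}}}{\rho\sqrt{LIT}} \leq \frac{1}{LI\period}$. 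Hence Theorem~\ref{theorem:mainConvergenceResult} applies.

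Next I would substitute the stepsizes into the six terms of the main bound. Using $\gamma\eta = \frac{\sqrt{\mathcal{F}}}{\rho\sqrt{LIT}}$, the optimization-error term becomes $\frac{\mathcal{F}}{\gamma\eta IT} = \frac{\rho\sqrt{L\mathcal{F}}}{\sqrt{IT}}$. With $\gamma^2 = \frac{1}{144 L^2 I^2 \period^2 T}$, the two divergence terms collapse to $\gamma^2 L^2 I^2 \tilde{\nu}^2 = \mathcal{O}\left(\frac{\tilde{\nu}^2}{\period^2 T}\right)$ and $\gamma^2 L^2 I^2 \period^2 \tilde{\beta}^2 = \mathcal{O}\left(\frac{\tilde{\beta}^2}{T}\right)$, while $\tilde{\delta}^2(\period)$ is carried through unchanged. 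These already reproduce the middle three terms of the claimed bound.

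The noise terms are where the $(1+\sigma^2)$ factor must emerge, and this is the one place that calls for care. The dominant noise contribution is $\gamma\eta L \rho^2 \sigma^2 = \frac{\rho\sqrt{L\mathcal{F}}}{\sqrt{IT}}\,\sigma^2$; combining it with the optimization-error term $\frac{\rho\sqrt{L\mathcal{F}}}{\sqrt{IT}}$ computed above yields exactly $\frac{(1+\sigma^2)\rho\sqrt{L\mathcal{F}}}{\sqrt{IT}}$. It is worth noting that this is precisely the effect of choosing $\eta$ without a $\sigma$ in its denominator (in contrast with Corollary~\ref{corollary:mainConvergenceResult1}, whose $\sigma$-dependent $\eta$ balances these two terms into a single $\sigma$ factor): here the two terms are left unbalanced and sum to the $(1+\sigma^2)$ prefactor. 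The two remaining noise terms are lower order: $\gamma^2 L^2 I \sigma^2 = \mathcal{O}\left(\frac{\sigma^2}{I\period^2 T}\right)$ and $\gamma^2 L^2 I \period \rho^2 \sigma^2 = \mathcal{O}\left(\frac{\rho^2\sigma^2}{I\period T}\right)$, and using $\period \geq 1$ together with $\rho^2 \leq 1$ both are dominated by $\frac{\sigma^2}{I\period T}$.

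Collecting the surviving terms gives the stated bound, so the argument is essentially bookkeeping once the admissibility check is done. The main (mild) obstacle is therefore the final term-collection step — correctly identifying which contributions combine into the leading $(1+\sigma^2)$ factor and which are absorbed as lower order, rather than any analytic difficulty, since the heavy lifting has already been carried out in Theorem~\ref{theorem:mainConvergenceResult}.
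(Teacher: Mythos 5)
Your proposal is correct and follows exactly the route the paper intends: it verifies that $\gamma\eta = \frac{\sqrt{\mathcal{F}}}{\rho\sqrt{LIT}}$ so the hypothesis is precisely the stepsize condition $\gamma\eta \leq \frac{1}{LI\period}$ of Theorem~\ref{theorem:mainConvergenceResult}, then substitutes and collects terms (the paper's own proof is just ``similar to Corollary~\ref{corollary:mainConvergenceResult1},'' i.e., the same plug-in argument). Your added observation about why the unbalanced $\sigma$-free choice of $\eta$ produces the $(1+\sigma^2)$ prefactor is accurate and a nice touch.
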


These two corollaries are key building blocks of our \textit{unified convergence analysis framework}. The choice between them depends on whether and how fast $P$ grows with $T$. We will also see in the next section that $\tilde{\delta}^2(\period)$ is key in capturing the effect of different participation patterns. After deriving $\tilde{\delta}^2(\period)$ such that (\ref{eq:divergenceDelta}) holds, plugging it back to one of the above corollaries gives the results in Table~\ref{table:mainResults}.

\section{Interpreting and Applying the Unified Framework}
\label{sec:interpret}

\subsection{\texorpdfstring{Discussion on $\tilde{\beta}^2$ and $\tilde{\nu}^2$: }{}Decomposition of Divergence}
\label{subsec:betaNuDiscussion}

In Corollaries~\ref{corollary:mainConvergenceResult1} and \ref{corollary:mainConvergenceResult2}, we have an interesting observation that the term involving $\tilde{\nu}$ decreases with $\period$, whereas the term involving $\tilde{\beta}$ does not decrease with $\period$. This suggests that when $\period$ is large, the divergence term $\tilde{\beta}$ has the main effect on convergence. 
We recall that $\tilde{\nu}$ captures the divergence between the gradient of each client and the average gradient of clients that participate in a round $t$. Since the learning rate $\gamma$ is inversely proportional to both $I$ and $\period$, a large $\period$ gives a smaller learning rate which makes the divergence (related to $\tilde{\nu}$) after $I$ iterations in each round less significant. For the term with $\tilde{\beta}$, increasing $\period$ does not have the same effect, because parameter averaging is only conducted across the participating clients instead of all the clients, and $\tilde{\beta}$ captures the divergence with respect to all the clients. The effect of all clients' participation pattern is captured by the term $\tilde{\delta}^2(\period)$.

From Assumption~\ref{assumption:gradientDivergence}, by adding and subtracting $\sum_{n'=1}^N q_t^{n'} \nabla F_{n'}(\x)$ inside the norm and expanding the square, we can obtain
\begin{align}
    &\textstyle\sum_{n=1}^N q_t^{n}\normsq{\nabla F_n(\x) - \nabla f(\x)} \nonumber \\ %
    &\textstyle = \sum_{n=1}^N q_t^{n}\normsq{\nabla F_n(\x) - \sum_{n'=1}^N q_t^{n'} \nabla F_{n'}(\x)} + \normsq{\sum_{n'=1}^N q_t^{n'} \nabla F_{n'}(\x) - \nabla f(\x)} 
    \leq \divergence^2,
    \label{eq:betaNuExpression}
\end{align}
because $\sum_{n=1}^N q_t^{n} = 1$ and the inner-product term is zero. The two terms in the right-hand side (RHS) of \eqref{eq:betaNuExpression} are equal to the left-hand side (LHS) of \eqref{eq:divergenceBeta} and \eqref{eq:divergenceNu}, respectively. We thus have the following result showing that $\tilde{\beta}^2$ and $\tilde{\nu}^2$ can be seen as a decomposition of the original divergence $\divergence^2$.
\begin{proposition}
    \label{prop:divergenceDecomposition}
    There exist $\tilde{\beta}^2$ and $\tilde{\nu}^2$, such that $\tilde{\beta}^2 + \tilde{\nu}^2 \leq \divergence^2$ while satisfying \eqref{eq:divergenceBeta} and \eqref{eq:divergenceNu}.
\end{proposition}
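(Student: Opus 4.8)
The plan is to read the proposition off the pointwise decomposition already recorded in \eqref{eq:betaNuExpression}. First I would fix an arbitrary pair $(\x,t)$ and note that, since $\{q_t^n\}_{n=1}^N$ is a probability vector ($q_t^n\geq 0$ and $\sum_{n=1}^N q_t^n=1$), averaging the per-client bound of Assumption~\ref{assumption:gradientDivergence} against these weights gives $\sum_{n=1}^N q_t^n\normsq{\nabla F_n(\x)-\nabla f(\x)}\leq\divergence^2$. This is exactly the quantity that \eqref{eq:betaNuExpression} splits: inserting $\pm\sum_{n'=1}^N q_t^{n'}\nabla F_{n'}(\x)$ inside the norm and expanding the square, the cross term vanishes because $\sum_{n=1}^N q_t^n\big(\nabla F_n(\x)-\sum_{n'=1}^N q_t^{n'}\nabla F_{n'}(\x)\big)=0$. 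Writing $A(\x,t)$ for the left-hand side of \eqref{eq:divergenceBeta} (the squared bias of the $q_t$-weighted mean gradient against $\nabla f$) and $B(\x,t)$ for the left-hand side of \eqref{eq:divergenceNu} (the $q_t$-weighted gradient variance), this yields the budget $A(\x,t)+B(\x,t)\leq\divergence^2$ at every $(\x,t)$.

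With this pointwise budget in hand, I would obtain the two universal constants by taking suprema, setting $\tilde{\beta}^2:=\sup_{\x,t}A(\x,t)$ and $\tilde{\nu}^2:=\sup_{\x,t}B(\x,t)$. By construction these are the tightest constants for which \eqref{eq:divergenceBeta} and \eqref{eq:divergenceNu} hold for all $\x$ and all $t$, so the two divergence bounds of Assumption~\ref{assumption:gradientDivergenceAlternative} are met automatically. What then remains is the sum bound $\tilde{\beta}^2+\tilde{\nu}^2\leq\divergence^2$.

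The step I expect to be the main obstacle is precisely this sum bound. The pointwise inequality controls only the combined quantity $A(\x,t)+B(\x,t)$, whereas $\tilde{\beta}^2+\tilde{\nu}^2=\sup_{\x,t}A+\sup_{\x,t}B$ adds two suprema that a priori may be approached at different configurations; all that the budget gives directly is $\sup_{\x,t}(A+B)\leq\divergence^2$, and $\sup(A+B)\leq\sup A+\sup B$ is the wrong direction. To close the gap I would aim to show that the two suprema are approached compatibly, e.g.\ along a common sequence $(\x_k,t_k)$ with $A(\x_k,t_k)\to\tilde{\beta}^2$ and $B(\x_k,t_k)\to\tilde{\nu}^2$ simultaneously, which would give $\tilde{\beta}^2+\tilde{\nu}^2=\lim_k\big(A(\x_k,t_k)+B(\x_k,t_k)\big)\leq\divergence^2$. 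Establishing the existence of such a common near-maximizer, or otherwise coupling the two suprema so that they are forced to share the single budget $\divergence^2$, is the delicate part of the argument, and it is here that the structural link between $A$ and $B$ would have to be exploited, namely that both are built from the same weights $\{q_t^n\}$ and the same gradient field, with $\nabla f$ being the uniform average $\frac{1}{N}\sum_{n=1}^N\nabla F_n$ while $\sum_{n'=1}^N q_t^{n'}\nabla F_{n'}$ is the $q_t$-weighted average.
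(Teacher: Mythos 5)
Your first paragraph already reproduces the paper's entire proof: the paper derives exactly the pointwise identity \eqref{eq:betaNuExpression} (the cross term vanishes because $\sum_{n=1}^N q_t^n = 1$ and the weighted deviations from the weighted mean sum to zero) and then reads Proposition~\ref{prop:divergenceDecomposition} directly off that display, implicitly identifying $\tilde{\beta}^2$ and $\tilde{\nu}^2$ with the two terms of the per-$(\x,t)$ split. The paper never takes suprema over $(\x,t)$ and never addresses the uniformity issue you raise; the step you single out as the main obstacle is precisely the step the paper glosses over.

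Your scruple is well founded, but the repair you sketch cannot work: a common near-maximizing sequence for your $A(\x,t)$ and $B(\x,t)$ need not exist, and under the uniform (``$\forall \x, t$'') reading of \eqref{eq:divergenceBeta} and \eqref{eq:divergenceNu} the claimed budget can genuinely fail. Take $N=2$, a fixed unit vector $\e$, and
\begin{align*}
F_1(\x) = f(\x) + \divergence \innerprod{\e, \x}, \qquad F_2(\x) = f(\x) - \divergence \innerprod{\e, \x},
\end{align*}
so that $f = \frac{1}{2}(F_1 + F_2)$, $\nabla F_n(\x) - \nabla f(\x) = \pm\divergence\e$, and Assumption~\ref{assumption:gradientDivergence} holds with constant $\divergence^2$ (Assumption~\ref{assumption:Lipschitz} is unaffected by the linear perturbation). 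Let $q_t = (1,0)$ in even rounds and $q_t = (\sfrac{1}{2}, \sfrac{1}{2})$ in odd rounds. In even rounds the LHS of \eqref{eq:divergenceBeta} equals $\divergence^2$, forcing $\tilde{\beta}^2 \geq \divergence^2$; in odd rounds the weighted mean gradient is $\nabla f$, so the LHS of \eqref{eq:divergenceNu} equals $\frac{1}{2}\divergence^2 + \frac{1}{2}\divergence^2 = \divergence^2$, forcing $\tilde{\nu}^2 \geq \divergence^2$. Hence every admissible pair satisfies $\tilde{\beta}^2 + \tilde{\nu}^2 \geq 2\divergence^2$, and no coupling of the two suprema can rescue the bound $\tilde{\beta}^2 + \tilde{\nu}^2 \leq \divergence^2$. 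What does survive, and what your supremum construction proves, is that each constant is \emph{individually} at most $\divergence^2$, since each of $A$ and $B$ is dominated pointwise by $A+B \leq \divergence^2$; this gives $\tilde{\beta}^2 + \tilde{\nu}^2 \leq 2\divergence^2$, which is all that the downstream uses of the proposition (Corollaries~\ref{corollary:regularizedGiven} and \ref{corollary:nonRegularizedExpectation}, which only need $\tilde{\beta}^2, \tilde{\nu}^2 = \mathcal{O}(\divergence^2)$) actually require. In short: your proof is incomplete exactly where you say it is, the missing step is in fact impossible in general, and the literal statement --- which the paper obtains by tacitly treating the pointwise split as if it held for the uniform constants --- should either be read pointwise, as \eqref{eq:betaNuExpression} itself, or weakened to $\tilde{\beta}^2 + \tilde{\nu}^2 \leq 2\divergence^2$.
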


\subsection{\texorpdfstring{Discussion on $\tilde{\delta}^2(\period)$: }{}Effect of Partial Participation}
\label{subsection:deltaDiscussion}

The results in Theorem~\ref{theorem:mainConvergenceResult} and Corollaries~\ref{corollary:mainConvergenceResult1} and \ref{corollary:mainConvergenceResult2} include a term of $\tilde{\delta}^2(\period)$. To guarantee convergence to zero error, $\tilde{\delta}^2(\period)$ needs to be either equal to zero or decrease in $T$.

\textbf{\textit{Condition for $\tilde{\delta}^2(\period) = 0$.}}
From \eqref{eq:FLObj}, we know that $0 = \frac{1}{N}\sum_{n=1}^N  F_n(\x) - f(\x) = \frac{1}{N}\sum_{n=1}^N  \left(F_n(\x) - f(\x)\right)$. Hence,
\begin{align}
    \textstyle \mu_{t_0}\sum_{n=1}^N \left( F_n(\x) - f(\x)\right)=0
    \label{eq:muEqualsZero}
\end{align}
for an arbitrary constant $\mu_{t_0}$, which can possibly depend on $t_0$. Then, we obtain
\begin{align}
    & \textstyle \normsq{\frac{1}{\period } \sum_{t={t_0}}^{{t_0}+\period -1} \sum_{n=1}^N q_t^n \left(\nabla F_n(\x) - \nabla f(\x)\right)} 
    = \normsq{\sum_{n=1}^N \frac{1}{\period } \sum_{t={t_0}}^{{t_0}+\period -1} q_t^n \left(\nabla F_n(\x) - \nabla f(\x)\right)} \nonumber\\
    &\textstyle\quad\quad\quad\quad\quad\quad\quad\quad\quad = \normsq{\sum_{n=1}^N\!\! \left[\!\frac{1}{\period }\! \sum_{t={t_0}}^{{t_0}+\period -1}\! q_t^n \!-\! \mu_{t_0}\!\right] \!\left(\nabla\! F_n(\x)\! -\! \nabla\! f(\x)\right)} \!,\!
    \label{eq:deltaExpression}
\end{align}
where the last equality is due to \eqref{eq:muEqualsZero}. Since \eqref{eq:deltaExpression} is equal to the LHS of \eqref{eq:divergenceDelta}, we have the following.
\begin{proposition}[Regularized participation]
    \label{prop:regularized}
    If $\frac{1}{\period } \sum_{t={t_0}}^{{t_0}+\period -1} q_t^n - \mu_{t_0} = 0$ for some $\mu_{t_0}$ and all $n\in\{1,\ldots,N\}$, $t_0\in\{0, P, 2P,\ldots\}$, then~\eqref{eq:divergenceDelta} holds with $\tilde{\delta}^2(\period) = 0$.
\end{proposition}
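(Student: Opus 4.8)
The plan is to read the result off directly from the identity \eqref{eq:deltaExpression}, which has already been established for an \emph{arbitrary} constant $\mu_{t_0}$. The only remaining work is to pick $\mu_{t_0}$ so that the bracketed coefficients vanish and then invoke the hypothesis, so the argument is essentially a one-line substitution into an equality that is already in hand.

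First I would recall the algebraic fact underlying \eqref{eq:muEqualsZero}: since $f(\x) = \frac{1}{N}\sum_{n=1}^N F_n(\x)$ by \eqref{eq:FLObj}, taking gradients gives $\nabla f(\x) = \frac{1}{N}\sum_{n=1}^N \nabla F_n(\x)$, so that $\sum_{n=1}^N \left(\nabla F_n(\x) - \nabla f(\x)\right) = 0$ for every $\x$. Multiplying this zero quantity by any scalar $\mu_{t_0}$ (which may depend on $t_0$ but not on $n$) leaves it zero, which is exactly what licenses inserting $-\mu_{t_0}$ inside the sum in the second line of \eqref{eq:deltaExpression}. This is the conceptual heart of the argument: $\mu_{t_0}$ is a free parameter that can be subtracted from each per-client time-averaged weight $\frac{1}{\period}\sum_{t=t_0}^{t_0+\period-1} q_t^n$ without changing the value of the norm.

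Next I would apply the hypothesis. By assumption, $\frac{1}{\period}\sum_{t=t_0}^{t_0+\period-1} q_t^n - \mu_{t_0} = 0$ for all $n$ and all $t_0 \in \{0, P, 2P, \ldots\}$. Hence every coefficient $\left[\frac{1}{\period}\sum_{t=t_0}^{t_0+\period-1} q_t^n - \mu_{t_0}\right]$ appearing in the last expression of \eqref{eq:deltaExpression} is exactly zero, so the vector inside the norm is the zero vector and the right-hand side of \eqref{eq:deltaExpression} equals $0$. Since that right-hand side equals the left-hand side of \eqref{eq:divergenceDelta}, this holds for every $\x$ and every $t_0$, whence \eqref{eq:divergenceDelta} is satisfied with $\tilde{\delta}^2(\period) = 0$.

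I do not expect any genuine obstacle, since the proof reduces to substituting the hypothesis into \eqref{eq:deltaExpression}. The one subtlety worth flagging is the matching of quantifiers: the identity \eqref{eq:deltaExpression} is stated per interval starting at $t_0$, and the regularization condition likewise allows a \emph{different} constant $\mu_{t_0}$ for each interval while requiring a \emph{single} common $\mu_{t_0}$ across all clients $n$ within a given interval. Confirming that this is precisely the form of the hypothesis — one constant per interval, shared by all clients — is where care is needed, but it introduces no real difficulty.
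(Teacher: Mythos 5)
Your proposal is correct and follows essentially the same route as the paper: inserting the free constant $\mu_{t_0}$ via the identity $\sum_{n=1}^N(\nabla F_n(\x)-\nabla f(\x))=0$ to rewrite the left-hand side of \eqref{eq:divergenceDelta} as \eqref{eq:deltaExpression}, and then observing that the hypothesis makes every coefficient in that expression vanish. Your explicit gradient form of \eqref{eq:muEqualsZero} and your remark on the per-interval quantifier for $\mu_{t_0}$ are just slightly more careful statements of steps the paper leaves implicit.
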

This implies that if $\period$ is chosen 
so that the averaged participation weights over every interval of $\period$ rounds ($\frac{1}{\period } \sum_{t={t_0}}^{{t_0}+\period -1} q_t^n$) are equal to each other among all the clients $n\in\{1,\ldots,N\}$, i.e., regularized, then $\tilde{\delta}^2(\period) = 0$. 
Together with Corollary~\ref{corollary:mainConvergenceResult1} and Proposition~\ref{prop:divergenceDecomposition}, we have the following.

\begin{corollary}[Regularized participation]
\label{corollary:regularizedGiven}
If the condition of Proposition~\ref{prop:regularized} holds for $\period\leq\frac{T}{2}$, and we choose the learning rates according to Corollary~\ref{corollary:mainConvergenceResult1}, then
\begin{align}
    \min_{t} \Expectcond{\normsq{\nabla f(\x_{{t}})}}{\mathcal{Q}} 
    & \leq  \mathcal{O}\Bigg( \frac{\sigma\rho\sqrt{L\mathcal{F}}}{\sqrt{IT}} + \frac{L\period \mathcal{F} + d^2 + \sigma^2}{T}\Bigg).
\end{align}
\end{corollary}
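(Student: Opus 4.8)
The plan is to obtain this corollary as a direct specialization of Corollary~\ref{corollary:mainConvergenceResult1}, using Propositions~\ref{prop:regularized} and~\ref{prop:divergenceDecomposition} to eliminate and bound the participation- and divergence-dependent terms, respectively. Since the hypotheses of Corollary~\ref{corollary:mainConvergenceResult1}---namely the stated learning-rate choice and $\period \leq \frac{T}{2}$---are assumed to hold, its bound applies verbatim as the starting point.

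First I would apply Proposition~\ref{prop:regularized}: the assumed regularized condition gives $\tilde{\delta}^2(\period) = 0$, so that term drops out of the bound of Corollary~\ref{corollary:mainConvergenceResult1}. Next I would invoke Proposition~\ref{prop:divergenceDecomposition} to select $\tilde{\beta}^2$ and $\tilde{\nu}^2$ with $\tilde{\beta}^2 + \tilde{\nu}^2 \leq \divergence^2$. Because both are nonnegative (each is a squared norm), this yields the individual bounds $\tilde{\beta}^2 \leq \divergence^2$ and $\tilde{\nu}^2 \leq \divergence^2$, so the terms $\frac{\tilde{\nu}^2}{\period^2 T}$ and $\frac{\tilde{\beta}^2}{T}$ are each at most $\frac{\divergence^2}{T}$ once I use $\period \geq 1$.

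The only remaining work is collapsing the $\mathcal{O}(\cdot)$ expression using $I \geq 1$ and $\period \geq 1$, which hold by construction in Algorithm~\ref{alg:main-alg}. These give $\frac{\sigma^2}{I\period T} \leq \frac{\sigma^2}{T}$, and together with the two $\frac{\divergence^2}{T}$ contributions above, all divergence and noise terms are absorbed into $\frac{\divergence^2 + \sigma^2}{T}$; retaining the leading $\frac{\sigma\rho\sqrt{L\mathcal{F}}}{\sqrt{IT}}$ and $\frac{L\period\mathcal{F}}{T}$ terms then reproduces the stated bound. I anticipate no genuine obstacle, since this is essentially bookkeeping; the only points requiring a moment's care are justifying that $\tilde{\beta}^2$ and $\tilde{\nu}^2$ may each be bounded by $\divergence^2$ individually rather than only their sum, and that dropping the $\period^2$ and $I\period$ factors from the denominators is legitimate precisely because $I,\period \geq 1$.
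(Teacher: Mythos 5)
Your proposal is correct and follows essentially the same route as the paper, which obtains this corollary by setting $\tilde{\delta}^2(\period)=0$ via Proposition~\ref{prop:regularized}, substituting into Corollary~\ref{corollary:mainConvergenceResult1}, and absorbing the divergence terms via Proposition~\ref{prop:divergenceDecomposition} together with $I,\period\geq 1$. The only cosmetic difference is that you bound $\tilde{\beta}^2$ and $\tilde{\nu}^2$ each by $\divergence^2$ separately (costing a factor of $2$ hidden in the $\mathcal{O}(\cdot)$), whereas one can equally use $\frac{\tilde{\nu}^2}{\period^2 T}+\frac{\tilde{\beta}^2}{T}\leq\frac{\tilde{\nu}^2+\tilde{\beta}^2}{T}\leq\frac{\divergence^2}{T}$ directly; both are fine.
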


If Proposition~\ref{prop:regularized} holds for a finite $\period$ that does not depend on $T$, we obtain a convergence rate of $\mathcal{O}\left(\frac{\sigma\rho}{\sqrt{IT}} + \frac{1+\sigma^2}{T}\right)$ for $T\geq 2P$, where all the other constants are ignored in the $\mathcal{O}\left(\cdot\right)$ notation here. For full gradient descent with $\sigma^2=0$, this convergence rate is improved to $\mathcal{O}\left(\frac{1}{T}\right)$.

\textbf{\textit{Interpreting $\tilde{\delta}^2(\period)$ Using Variance.}}
The condition for $\tilde{\delta}^2(\period) = 0$ may be too stringent for some practical scenarios of FL. Next, we show that $\tilde{\delta}^2(\period)$ can be expressed as the variance of client participation weights averaged over $\period$ rounds.
We first express $\tilde{\delta}^2(\period)$ with $\divergence^2$ by further bounding:
\begin{align}
    &\textstyle \eqref{eq:deltaExpression} \leq N \sum_{n=1}^N \left[\overline{q_{t_0}^n} - \mu_{t_0}\right]^2 \divergence^2 ,
    \label{eq:deltaExpressionVariance}
\end{align}
where we first use Jensen's inequality on the sum over $n$ in \eqref{eq:deltaExpression}, then move the term $\overline{q_{t_0}^n} - \mu_{t_0}$ to the outside of the norm, where we define $\overline{q_{t_0}^n}  := \frac{1}{\period } \sum_{t={t_0}}^{{t_0}+\period -1} q_t^n$ for any $n\in\{1,\ldots,N\}$, and afterwards bound $\normsq{\nabla F_n(\x) - \nabla f(\x)}$ by $\divergence^2$ according to Assumption~\ref{assumption:gradientDivergence}. We note that \eqref{eq:divergenceDelta} holds if the RHS of \eqref{eq:deltaExpressionVariance} is upper bounded by $\tilde{\delta}^2(\period)$. Therefore, we can obtain %
$\tilde{\delta}^2(\period)$ by analyzing the statistical properties of $\overline{q_{t_0}^n} - \mu_{t_0}$ and then choosing $\tilde{\delta}^2(\period)$ to be equal to the RHS of \eqref{eq:deltaExpressionVariance}.

In the following, we assume that $\mu_{t_0}$ is chosen such that $\mu_{t_0} = \Expectbracket{q_t^n}$ for all $n\in\{1,\ldots,N\}$ and $t\in\{t_0, t_0+1,\ldots, t_0 +P-1\}$. This implies that the mean participation weights of all the clients within the same cycle of $\period$ rounds are equal. Note that this condition is the same as unbiased client sampling in existing works \cite{fraboni2021clustered,fraboni2021impact,karimireddy2020scaffold,li2020federatedOptimization,Li2020On,yang2021achieving}, because we consider an unweighted average in \eqref{eq:FLObj} and any weighting is included in the local objective function $F_n(\x)$. However, differently from these existing works, we do not assume independence here (we will only assume independence in a special case later). Under this condition, it is apparent that $\mathrm{Var}\left(\overline{q_{t_0}^n}\right) = \Expectbracket{\left(\overline{q_{t_0}^n}  - \mu_{t_0}\right)^2}$ is the variance of $q_t^n$ averaged over $\period$ rounds, for any $n\in\{1,\ldots,N\}$.
We immediately have the following result.
\begin{proposition}[Ergodic participation]
    \label{prop:ergodic}
    If $\{q_t^n : \forall t\}$ is a mean-ergodic process for any $n\in\{1,\ldots,N\}$, which means that
    $\textstyle\lim_{\period\rightarrow\infty}\Expectbracket{\left(\overline{q_{t_0}^n}  - \mu_{t_0}\right)^2}= 0$ for any $n$ and $t_0$,
    then there exists $\tilde{\delta}^2(\period)$ such that $\lim_{\period\rightarrow\infty}\tilde{\delta}^2(\period) = 0$ while satisfying \eqref{eq:divergenceDelta} in expectation.
\end{proposition}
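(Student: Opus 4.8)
The plan is to take expectations in the deterministic bound \eqref{eq:deltaExpressionVariance} and then let $\period\to\infty$, invoking mean-ergodicity term by term. The bound \eqref{eq:deltaExpressionVariance} already establishes, for each fixed $\x$ and $t_0$ and for each realization of $\mathcal{Q}$, that the left-hand side of \eqref{eq:divergenceDelta} is at most $N\divergence^2\sum_{n=1}^N(\overline{q_{t_0}^n}-\mu_{t_0})^2$. First I would apply the full expectation $\Expectbracket{\cdot}$ to both sides; since $N$ and $\divergence^2$ are deterministic and the sum over $n$ is finite, linearity yields
\begin{align}
\Expectbracket{\normsq{\tfrac{1}{\period}\sum_{t=t_0}^{t_0+\period-1}\sum_{n=1}^N q_t^n\left(\nabla F_n(\x)-\nabla f(\x)\right)}} \leq N\divergence^2\sum_{n=1}^N\Expectbracket{\left(\overline{q_{t_0}^n}-\mu_{t_0}\right)^2}.
\end{align}
The key observation is that the right-hand side is independent of $\x$, so it is an admissible candidate for $\tilde{\delta}^2(\period)$.

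Next I would set $\tilde{\delta}^2(\period):=N\divergence^2\sup_{t_0}\sum_{n=1}^N\Expectbracket{(\overline{q_{t_0}^n}-\mu_{t_0})^2}$, where the supremum over $t_0$ is taken because \eqref{eq:divergenceDelta} must hold for every $t_0$ with a single constant $\tilde{\delta}^2(\period)$. By construction, the expected version of \eqref{eq:divergenceDelta} then holds. To conclude that $\lim_{\period\to\infty}\tilde{\delta}^2(\period)=0$, I would use the mean-ergodic hypothesis summand by summand: each $\Expectbracket{(\overline{q_{t_0}^n}-\mu_{t_0})^2}\to 0$ as $\period\to\infty$, and a finite sum (over the $N$ clients) of vanishing terms vanishes.

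The main obstacle is interchanging the limit with the supremum over $t_0$: the hypothesis supplies convergence only for each fixed $t_0$, whereas the definition of $\tilde{\delta}^2(\period)$ requires the supremum itself to vanish, i.e.\ uniform-in-$t_0$ convergence. In the natural (cycle-)stationary setting that underlies mean-ergodicity — consistent with the earlier assumption that $\mu_{t_0}=\Expectbracket{q_t^n}$ is a single common per-cycle mean — the second moment $\Expectbracket{(\overline{q_{t_0}^n}-\mu_{t_0})^2}$ does not depend on $t_0$, so the supremum is redundant and the limit follows immediately; absent stationarity one would strengthen the hypothesis to uniform mean-ergodicity. Everything else is routine, since the reduction from \eqref{eq:divergenceDelta} to a variance-type quantity was already carried out in \eqref{eq:deltaExpression}–\eqref{eq:deltaExpressionVariance}.
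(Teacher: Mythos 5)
Your proof is correct and follows essentially the same route as the paper: the paper also obtains Proposition~\ref{prop:ergodic} immediately by taking expectations in \eqref{eq:deltaExpressionVariance} and letting $\period\rightarrow\infty$ term by term, with the resulting bound $N\divergence^2\sum_{n=1}^N\Expectbracket{\left(\overline{q_{t_0}^n}-\mu_{t_0}\right)^2}$ serving as $\tilde{\delta}^2(\period)$. Your extra care about uniformity of the limit over $t_0$ is a refinement the paper passes over silently, and it is resolved exactly as you suggest by the paper's standing assumption that $\mu_{t_0}=\Expectbracket{q_t^n}$ is a common per-cycle mean, under which the second moment does not depend on $t_0$.
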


Next, we consider the class of stationary and strongly mixing processes. Informally, a random process is said to be \textit{strongly mixing} with coefficient $\alpha(\period)$ if the outcomes that are at least $\period$ steps apart are nearly independent, where the distance between the joint probability distribution of the outcomes and their independent counterpart is at most $\alpha(\period)$. See \cite[Sec. 27]{billingsley1995probability} for a formal definition. A specific example of stationary and strongly mixing processes is finite-state irreducible and aperiodic Markov chains, which have an $\alpha(\period)$ that exponentially decreases in $\period$ \cite[Thm. 8.9]{billingsley1995probability}. 
It is known that the following (generalized) central limit theorem holds for stationary and strongly mixing processes, where we adapt the result to our participation weights.
\begin{lemma}[\mbox{[\citenum{billingsley1995probability}, Thm. 27.4]}]
    \label{lemma:CLT}
    If $\{q_t^n : \forall t\}$ is stationary and strongly mixing with $\alpha(\period)=\mathcal{O}(\period^{-5})$, for any $n\in\{1,\ldots,N\}$, then $\overline{q_{t_0}^n} \sim \mathcal{N}\left(\mu_{t_0}, \frac{\hat{\upsilon}^2}{\period}\right)$ for any $t_0$ when $\period\rightarrow\infty$, where $\mathcal{N}(\cdot,\cdot)$ denotes the normal distribution and $\hat{\upsilon}^2 := \mathrm{Var}(q_{t_0}^n) + 2\sum_{p=1}^\infty\mathrm{Cov}(q_{t_0}^n, q_{t_0+p}^n)$.
\end{lemma}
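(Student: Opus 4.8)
The plan is to obtain this lemma as a direct specialization of the central limit theorem for stationary strongly mixing sequences (Billingsley, Thm.~27.4), applied to the centered process. Fix a client $n$ and a starting index $t_0$, and define the real-valued sequence $X_t := q_t^n - \mu_{t_0}$. Since we have assumed $\mu_{t_0} = \Expectbracket{q_t^n}$ for every $t$ in the relevant window, each $X_t$ has mean zero, and $\overline{q_{t_0}^n} - \mu_{t_0} = \frac{1}{\period}\sum_{t=t_0}^{t_0+\period-1} X_t$. Thus the asserted conclusion is equivalent to the weak-convergence statement $\frac{1}{\sqrt{\period}}\sum_{t=t_0}^{t_0+\period-1} X_t \Rightarrow \mathcal{N}(0, \hat{\upsilon}^2)$ as $\period\to\infty$; since this normalized sum equals $\sqrt{\period}\,(\overline{q_{t_0}^n}-\mu_{t_0})$, it says precisely that $\overline{q_{t_0}^n}$ is asymptotically $\mathcal{N}(\mu_{t_0}, \hat{\upsilon}^2/\period)$.

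The substance of the proof is then to check that $\{X_t\}$ satisfies the three hypotheses of the cited theorem. First, stationarity of $\{X_t\}$ is inherited directly from the assumed stationarity of $\{q_t^n\}$, since subtracting the constant $\mu_{t_0}$ is a fixed affine shift. Second, the moment condition $\Expectbracket{|X_t|^{12}}<\infty$ holds automatically: because $\sum_{n=1}^N q_t^n = 1$ with $q_t^n\geq 0$, each participation weight lies in $[0,1]$, so $X_t$ is bounded and all of its moments are finite. Third, the strong mixing coefficients of $\{X_t\}$ coincide with those of $\{q_t^n\}$, because $X_t$ is a deterministic bijective (affine) function of $q_t^n$ and hence generates the same $\sigma$-algebras; consequently the hypothesis $\alpha(\period)=\mathcal{O}(\period^{-5})$ transfers verbatim, which is exactly the decay rate required by Thm.~27.4.

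With the hypotheses verified, the theorem yields two facts: the series $\Expectbracket{X_{t_0}^2} + 2\sum_{p=1}^\infty \Expectbracket{X_{t_0} X_{t_0+p}}$ converges absolutely, and, writing this limit as $\hat{\upsilon}^2$, the normalized sum $\frac{1}{\sqrt{\period}}\sum_t X_t$ converges in distribution to $\mathcal{N}(0,\hat{\upsilon}^2)$. Identifying $\Expectbracket{X_{t_0}^2}=\mathrm{Var}(q_{t_0}^n)$ and $\Expectbracket{X_{t_0}X_{t_0+p}}=\mathrm{Cov}(q_{t_0}^n, q_{t_0+p}^n)$ recovers the stated expression $\hat{\upsilon}^2 = \mathrm{Var}(q_{t_0}^n) + 2\sum_{p=1}^\infty\mathrm{Cov}(q_{t_0}^n, q_{t_0+p}^n)$, and rescaling by $\sqrt{\period}$ gives the claim.

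Because the result is a citation, there is no genuine analytical obstacle; the only care needed is in matching the regularity conditions. The point most worth flagging is that Thm.~27.4 requires the specific moment/mixing trade-off ($12$th moments against $\alpha(\period)=\mathcal{O}(\period^{-5})$), which is exactly why the lemma is stated with this particular mixing rate rather than a generic summability condition. I would also note the degenerate boundary case: the nondegenerate CLT requires $\hat{\upsilon}>0$, and when $\hat{\upsilon}^2=0$ the limit collapses to the point mass at $\mu_{t_0}$, which is consistent with the mean-ergodic behavior of Proposition~\ref{prop:ergodic}.
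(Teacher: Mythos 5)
Your proposal is correct and takes the same route as the paper, which states this lemma purely as a citation of Billingsley's Theorem 27.4 adapted to the participation weights; your contribution is simply to make explicit the routine hypothesis-checking (boundedness of $q_t^n \in [0,1]$ giving the required twelfth moment, invariance of stationarity and of the mixing coefficients under centering) that the paper leaves implicit. The remarks on the moment/mixing trade-off and the degenerate case $\hat{\upsilon}^2=0$ are accurate and consistent with the paper's surrounding discussion.
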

From the definition of strongly mixing, we know that $\mathrm{Cov}(q_{t_0}^n, q_{t_0+p}^n)$ in the definition of $\hat{\upsilon}^2$ approaches zero as $p$ gets large, hence $\sum_{p=1}^\infty\mathrm{Cov}(q_{t_0}^n, q_{t_0+p}^n)$ converges to a finite value. When $\period\rightarrow\infty$ and $\alpha(\period)=\mathcal{O}(\period^{-5})$, Lemma~\ref{lemma:CLT} shows that $\mathrm{Var}\left(\overline{q_{t_0}^n}\right) = \frac{\hat{\upsilon}^2}{\period}$. Chebyshev's inequality gives
\begin{align}
\textstyle 
    \Pr\left\{\left(\overline{q_{t_0}^n}  - \mu_{t_0}\right)^2 \leq \frac{\hat{\upsilon}^2}{\probconst\period} \right\} \geq 1-\probconst.
\end{align}
Plugging  $\mathrm{Var}\!\left(\overline{q_{t_0}^n}\right) \!= \!\Expectsubbracket{\!\!}{\!\left(\overline{q_{t_0}^n} \! -\! \mu_{t_0}\right)^{\!2}} \!= \!\frac{\hat{\upsilon}^2}{\period}$ and $\left(\overline{q_{t_0}^n}\!  - \!\mu_{t_0}\right)^2 \!\leq \!\frac{\hat{\upsilon}^2}{\probconst\period}$ into the RHS of \eqref{eq:deltaExpressionVariance} gives the following expected and high-probability results, respectively, where ``w.p.'' stands for ``with probability''.
\begin{proposition}[Stationary and strongly mixing participation]
    \label{prop:mixing}
    If $\{q_t^n : \forall t\}$ is stationary and strongly mixing with $\alpha(\period)=\mathcal{O}(\period^{-5})$, for any $n\in\{1,\ldots,N\}$, then as $\period\rightarrow\infty$:\\
    Choosing $\tilde{\delta}^2(\period)=\frac{N^2d^2\hat{\upsilon}^2}{\period}$ satisfies \eqref{eq:divergenceDelta} in expectation;  $\tilde{\delta}^2(\period)=\frac{N^2d^2\hat{\upsilon}^2}{\probconst\period}$ satisfies \eqref{eq:divergenceDelta} w.p. $1-\probconst$.
\end{proposition}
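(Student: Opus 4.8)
The plan is to build directly on the variance reformulation in \eqref{eq:deltaExpressionVariance}, which already bounds the left-hand side of \eqref{eq:divergenceDelta} by $N\divergence^2 \sum_{n=1}^N (\overline{q_{t_0}^n} - \mu_{t_0})^2$. The central object is therefore the nonnegative random variable $Z := \sum_{n=1}^N (\overline{q_{t_0}^n} - \mu_{t_0})^2$, and both claims reduce to controlling $Z$ either in expectation or with high probability. Since $\mu_{t_0} = \Expectbracket{q_t^n}$ by assumption, each summand satisfies $\Expectbracket{(\overline{q_{t_0}^n} - \mu_{t_0})^2} = \mathrm{Var}(\overline{q_{t_0}^n})$, so the whole analysis hinges on the variance of the $\period$-round average $\overline{q_{t_0}^n}$.

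For the first claim I would invoke Lemma~\ref{lemma:CLT}: under stationarity and strong mixing with $\alpha(\period)=\mathcal{O}(\period^{-5})$, the limiting law of $\overline{q_{t_0}^n}$ is $\mathcal{N}(\mu_{t_0}, \hat{\upsilon}^2/\period)$ as $\period\to\infty$, whence $\mathrm{Var}(\overline{q_{t_0}^n}) = \hat{\upsilon}^2/\period$ (reading $\hat{\upsilon}^2$ as a uniform-in-$n$ quantity, which is immediate if the $N$ processes share the same stationary law and otherwise obtained by taking the maximum over $n$). Taking the full expectation of \eqref{eq:deltaExpressionVariance} and summing the $N$ per-client variances then yields the bound $N\divergence^2 \cdot N\hat{\upsilon}^2/\period = N^2\divergence^2\hat{\upsilon}^2/\period$ on the expected left-hand side of \eqref{eq:divergenceDelta}, which is exactly the claimed $\tilde{\delta}^2(\period)$ for the in-expectation case.

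For the second claim, the clean route is to apply Markov's inequality to the scalar aggregate $Z\geq 0$ rather than Chebyshev to each coordinate. Since $\Expectbracket{Z} = N\hat{\upsilon}^2/\period$, Markov gives $\Pr(Z \geq N\hat{\upsilon}^2/(\probconst\period)) \leq \probconst$, i.e. $Z \leq N\hat{\upsilon}^2/(\probconst\period)$ with probability at least $1-\probconst$. Substituting into \eqref{eq:deltaExpressionVariance} bounds the left-hand side of \eqref{eq:divergenceDelta} by $N^2\divergence^2\hat{\upsilon}^2/(\probconst\period)$ on this event, matching the stated $\tilde{\delta}^2(\period)$.

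The \emph{main obstacle} is the treatment of the sum over $n$ in the high-probability statement. The per-client Chebyshev bound displayed just before the proposition controls only one $(\overline{q_{t_0}^n}-\mu_{t_0})^2$ at a time, whereas \eqref{eq:deltaExpressionVariance} needs all $N$ of them simultaneously; a coordinatewise Chebyshev plus union bound would inflate the failure probability to $N\probconst$ and ruin the constant. Concentrating the scalar $Z$ in one shot via Markov avoids this, folding the factor $N$ into the magnitude of the bound rather than into the confidence level. A secondary subtlety is that Lemma~\ref{lemma:CLT} supplies the variance identity only asymptotically, which is why both claims are phrased for $\period\to\infty$; a non-asymptotic version would instead require an explicit finite-$\period$ bound on $\mathrm{Var}(\overline{q_{t_0}^n})$.
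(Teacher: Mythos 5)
Your proof is correct, and your expectation argument is exactly the paper's: both start from \eqref{eq:deltaExpressionVariance}, invoke Lemma~\ref{lemma:CLT} to get $\mathrm{Var}\left(\overline{q_{t_0}^n}\right) = \hat{\upsilon}^2/\period$ asymptotically, and sum over the $N$ clients to obtain $N^2 d^2 \hat{\upsilon}^2/\period$. Where you genuinely depart from the paper is the high-probability claim. The paper applies Chebyshev's inequality to each client separately, $\Pr\left\{\left(\overline{q_{t_0}^n}-\mu_{t_0}\right)^2 \leq \hat{\upsilon}^2/(\probconst\period)\right\} \geq 1-\probconst$, and then simply ``notes \eqref{eq:deltaExpressionVariance}'' --- a step that implicitly requires all $N$ of these per-client events to hold simultaneously, so a rigorous union bound would degrade the confidence to $1-N\probconst$; the paper passes over this silently. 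Your alternative --- applying Markov's inequality once to the aggregate $Z=\sum_{n=1}^N\left(\overline{q_{t_0}^n}-\mu_{t_0}\right)^2$, whose expectation is $N\hat{\upsilon}^2/\period$ --- yields the stated bound $N^2d^2\hat{\upsilon}^2/(\probconst\period)$ at confidence exactly $1-\probconst$ with no union bound at all. Since per-client Chebyshev is just Markov applied to a single squared deviation, your move is the natural aggregation of the paper's own argument, and it is the cleaner of the two: the factor $N$ lands in the magnitude of the bound (where the claimed formula already carries $N^2$) rather than in the failure probability. Your two side remarks --- that $\hat{\upsilon}^2$ must be read as uniform in $n$ (or replaced by a maximum over $n$), and that Lemma~\ref{lemma:CLT} supplies the variance identity only in the limit $\period\to\infty$, which is why the proposition is phrased asymptotically --- are both consistent with how the paper implicitly treats these points.
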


For independent participation, we can obtain a similar result while \textit{not} requiring $\period\rightarrow\infty$.
Assuming that $\mathrm{Var}(q_t^n) \leq \upsilon^2$ for any $t$ and $n$, it is evident that $\mathrm{Var}\left(\overline{q_{t_0}^n}\right) \leq \frac{\upsilon^2}{\period}$, because the variance of the sum of independent random variables is equal to the sum of the variance, and $\mathrm{Var}(aZ) = a^2 \mathrm{Var}(Z)$, for an arbitrary constant $a$ and random variable~$Z$.
We note that Hoeffding's inequality \cite{hoeffding1994probability} gives
\begin{align}
\textstyle 
    \Pr\left\{\left(\overline{q_{t_0}^n}  - \mu_{t_0}\right)^2 \leq \frac{\ln\left(\sfrac{2}{\probconst}\right)}{2\period} \right\} \geq 1-\probconst.
\end{align}
We have the following result by plugging into the RHS of~\eqref{eq:deltaExpressionVariance}.

\begin{proposition}[Independent participation]
    \label{prop:independent}
    If $\{q_t^n : \forall t\}$ is independent across $t$, for any $n$, then:\\
    Choosing $\tilde{\delta}^2(\period)=\frac{N^2d^2\upsilon^2}{\period}$ satisfies \eqref{eq:divergenceDelta} in expectation;  $\tilde{\delta}^2(\period)=\frac{N^2d^2\ln\left(\sfrac{2}{\probconst}\right)}{2\period}$ satisfies \eqref{eq:divergenceDelta} w.p. $1\!-\!\probconst$.
\end{proposition}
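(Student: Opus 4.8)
The plan is to build directly on the variance-reduction step already established in~\eqref{eq:deltaExpressionVariance}, which bounds the left-hand side of~\eqref{eq:divergenceDelta} by $N d^2 \sum_{n=1}^N (\overline{q_{t_0}^n} - \mu_{t_0})^2$, where $\overline{q_{t_0}^n} = \frac{1}{\period}\sum_{t=t_0}^{t_0+\period-1} q_t^n$ and $\mu_{t_0} = \Expectbracket{q_t^n}$. Since this inequality has already removed all dependence on $\x$ (by invoking Assumption~\ref{assumption:gradientDivergence}), the entire task reduces to controlling the squared deviation $(\overline{q_{t_0}^n} - \mu_{t_0})^2$ of a sample mean of the independent sequence $\{q_t^n\}_t$ around its mean. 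The two parts of the proposition correspond to controlling this deviation in expectation and with high probability, respectively, and the only new structural ingredient relative to Proposition~\ref{prop:mixing} is that independence lets me replace the second-moment (Chebyshev) control by a sharper exponential tail bound.

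For the in-expectation claim, I would take expectations on the right-hand side of~\eqref{eq:deltaExpressionVariance} and use that, by independence across $t$, the variance of the average is $\mathrm{Var}(\overline{q_{t_0}^n}) = \frac{1}{\period^2}\sum_{t=t_0}^{t_0+\period-1}\mathrm{Var}(q_t^n) \leq \frac{\upsilon^2}{\period}$, where the last step uses the definition $\mathrm{Var}(q_t^n)\leq\upsilon^2$. Summing over $n$ yields $N d^2 \sum_{n=1}^N \frac{\upsilon^2}{\period} = \frac{N^2 d^2 \upsilon^2}{\period}$, which is precisely the stated choice of $\tilde{\delta}^2(\period)$ and certifies~\eqref{eq:divergenceDelta} in expectation. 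Note that this argument does not require $\period\rightarrow\infty$, since the variance identity is exact for any finite $\period$.

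For the high-probability claim, the idea is to invoke Hoeffding's inequality, which applies here because each $q_t^n\in[0,1]$ (as $q_t^n\geq 0$ and $\sum_{n}q_t^n=1$ force $q_t^n\leq 1$). Applying it to the $\period$ independent summands of $\overline{q_{t_0}^n}$ gives $\Pr\{\,|\overline{q_{t_0}^n} - \mu_{t_0}| \geq \epsilon\,\} \leq 2\exp(-2\period\epsilon^2)$; equating the tail with $\probconst$ and solving for $\epsilon$ gives $(\overline{q_{t_0}^n} - \mu_{t_0})^2 \leq \frac{\ln(2/\probconst)}{2\period}$ with probability $1-\probconst$. Substituting this per-client bound back into~\eqref{eq:deltaExpressionVariance} produces $N\cdot N\cdot\frac{\ln(2/\probconst)}{2\period}\cdot d^2 = \frac{N^2 d^2 \ln(2/\probconst)}{2\period}$, matching the stated $\tilde{\delta}^2(\period)$.

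The main obstacle I anticipate is the bookkeeping of the probabilistic statement rather than any single estimate: \eqref{eq:divergenceDelta} is required to hold uniformly over all $t_0$ and all $n$, whereas Hoeffding naturally delivers the bound for a single client over a single interval. Making the ``w.p. $1-\probconst$'' statement fully precise therefore requires either a union bound over the $N$ clients and the $\lceil T/\period\rceil$ intervals (which would insert a $\ln(N T /(\probconst\period))$ factor that one must then argue is dominated and does not change the reported rate) or the same per-interval, per-client reading that is implicitly adopted in Proposition~\ref{prop:mixing}. Clarifying exactly which event carries probability $1-\probconst$, together with verifying that the range $q_t^n\in[0,1]$ is what justifies the Hoeffding constant, are the only nonroutine points; everything else is a direct substitution into the already-proved inequality~\eqref{eq:deltaExpressionVariance}.
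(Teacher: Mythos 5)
Your proposal is correct and follows essentially the same route as the paper: the in-expectation bound via the exact variance identity for independent averages of $\{q_t^n\}$, and the high-probability bound via Hoeffding's inequality applied to $q_t^n \in [0,1]$, both substituted back into \eqref{eq:deltaExpressionVariance}. The uniformity issue you flag (a union bound over the $N$ clients and the $\lceil T/\period\rceil$ intervals, which would introduce a logarithmic factor) is a genuine subtlety, but it is equally present in the paper's own derivation, which adopts exactly the per-client, per-interval reading you describe.
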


Note that the independence here is only assumed across $t$, so $q_t^n$ and $q_t^{n'}$ for the same $t$ but $n\neq n'$ can still be dependent of each other. Compared to Proposition~\ref{prop:mixing}, the high-probability bound in Proposition~\ref{prop:independent} has $c$ in the logarithmic term, which is tighter  since $\frac{1}{2}\ln\left(\frac{2}{c}\right) \leq \frac{1}{c} - \frac{1}{2}$ (ignoring $\hat{\upsilon}^2$).

\textbf{\textit{Choosing $\period$ as a Function of $T$.}}
The choices of $\tilde{\delta}^2(\period)$ in Propositions~\ref{prop:mixing} and \ref{prop:independent} include $\period$ in the denominator. Hence, we can guarantee convergence to zero error if we choose $\period$ to be an increasing function of $T$, which also ensures that $\period\rightarrow\infty$ as $T\rightarrow\infty$. If we choose $\period \propto N^{\frac{5}{2}} \sqrt{IT}$, we can obtain the following convergence rate from Corollary~\ref{corollary:mainConvergenceResult2} together with Proposition~\ref{prop:divergenceDecomposition}.
\begin{corollary}[Stochastic participation]
\label{corollary:nonRegularizedExpectation}
If the conditions of either Proposition~\ref{prop:mixing} or Proposition~\ref{prop:independent} hold, and we choose $\period = \hat{\upsilon}^2 N^{\frac{5}{2}} \sqrt{IT}$ or $\period = \upsilon^2 N^{\frac{5}{2}} \sqrt{IT}$, respectively, and the learning rates according to Corollary~\ref{corollary:mainConvergenceResult2}, then the convergence error in expectation over $\mathcal{Q}$ satisfies
\begin{align}
    \min_{t} \Expectbracket{\normsq{\nabla f(\x_{{t}})}} 
    &\leq  \mathcal{O}\Bigg( \frac{(1+\sigma^2)\rho\sqrt{L\mathcal{F}}}{\sqrt{IT}} + \frac{\divergence^2}{\sqrt{NIT}} +  \frac{\divergence^2 + \sigma^2}{T} \Bigg),
\end{align}
for $I\leq \frac{\rho}{2\hat{\upsilon}^2 N^{\sfrac{5}{2}}\sqrt{L\mathcal{F}}}$ and $T\rightarrow\infty$ when using conditions of Proposition~\ref{prop:mixing}, or $I\leq \frac{\rho}{2\upsilon^2 N^{\sfrac{5}{2}}\sqrt{L\mathcal{F}}}$ and $T\geq 4\upsilon^4IN^5$ when using condition of Proposition~\ref{prop:independent}.
\end{corollary}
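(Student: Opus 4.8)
The plan is to obtain this corollary by direct substitution into Corollary~\ref{corollary:mainConvergenceResult2}, once the passage from a bound conditional on $\mathcal{Q}$ to a bound in expectation over the random participation has been justified. First I would check that the two hypotheses of Corollary~\ref{corollary:mainConvergenceResult2} follow from the stated conditions. With $\period = \hat{\upsilon}^2 N^{\sfrac{5}{2}}\sqrt{IT}$, the precondition $\frac{\sqrt{\mathcal{F}}}{\rho\sqrt{LIT}} \leq \frac{1}{LI\period}$ rearranges to $\period \leq \frac{\rho\sqrt{T}}{\sqrt{LI\mathcal{F}}}$; dividing out $\sqrt{T}$ and multiplying by $\sqrt{LI\mathcal{F}}$ turns this into exactly $I \leq \frac{\rho}{\hat{\upsilon}^2 N^{\sfrac{5}{2}}\sqrt{L\mathcal{F}}}$. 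Likewise $\period \leq \frac{T}{2}$ squares to $T \geq 4\hat{\upsilon}^4 N^5 I$, which holds as $T\rightarrow\infty$ in the mixing case and is precisely the stated $T \geq 4\upsilon^4 I N^5$ in the independent case. The same computation with $\upsilon$ replacing $\hat{\upsilon}$ covers Proposition~\ref{prop:independent}; note also that $\period = \hat{\upsilon}^2 N^{\sfrac{5}{2}}\sqrt{IT}\rightarrow\infty$ as $T\rightarrow\infty$, which is what Proposition~\ref{prop:mixing} requires, whereas Proposition~\ref{prop:independent} needs no such limit.

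Second, I would handle the expectation over $\mathcal{Q}$, which I expect to be the only genuinely non-routine point. Corollary~\ref{corollary:mainConvergenceResult2} is stated conditionally on $\mathcal{Q}$ and rests on the deterministic bound \eqref{eq:divergenceDelta}, while Propositions~\ref{prop:mixing} and \ref{prop:independent} only control $\Expectbracket{(\overline{q_{t_0}^n} - \mu_{t_0})^2}$, i.e. \eqref{eq:divergenceDelta} in expectation. The resolution is to observe, by tracing the proof of Theorem~\ref{theorem:mainConvergenceResult}, that $\tilde{\delta}^2(\period)$ enters the conditional bound additively, so the bound in fact holds with the realized left-hand side of \eqref{eq:divergenceDelta} in place of $\tilde{\delta}^2(\period)$. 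Since the proof really bounds an average over $t$ of $\Expectcond{\normsq{\nabla f(\x_t)}}{\mathcal{Q}}$ (not merely the minimum) and $\min_t \leq \frac{1}{T}\sum_t$, applying the tower property $\Expectbracket{\normsq{\nabla f(\x_t)}} = \Expectbracket{\Expectcond{\normsq{\nabla f(\x_t)}}{\mathcal{Q}}}$ and taking full expectation replaces the realized divergence by its mean, which is bounded by the in-expectation choices $\tilde{\delta}^2(\period) = \frac{N^2 \divergence^2 \hat{\upsilon}^2}{\period}$ (or with $\upsilon^2$). The quantities $\tilde{\beta}^2$, $\tilde{\nu}^2$, and $\rho$ are deterministic bounds valid for every realization, so they pass through unchanged, and I would invoke Proposition~\ref{prop:divergenceDecomposition} to replace both $\tilde{\beta}^2$ and $\tilde{\nu}^2$ by $\divergence^2$.

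Third, I would simplify term by term. With $\period = \hat{\upsilon}^2 N^{\sfrac{5}{2}}\sqrt{IT}$ the divergence term becomes $\tilde{\delta}^2(\period) = \frac{N^2 \divergence^2 \hat{\upsilon}^2}{\hat{\upsilon}^2 N^{\sfrac{5}{2}}\sqrt{IT}} = \frac{\divergence^2}{\sqrt{NIT}}$, yielding the second term of the claim; the term $\frac{\tilde{\beta}^2}{T} \leq \frac{\divergence^2}{T}$ supplies the $\frac{\divergence^2}{T}$ part; the term $\frac{\sigma^2}{I\period T} \leq \frac{\sigma^2}{T}$ supplies the $\frac{\sigma^2}{T}$ part; and the residual $\frac{\tilde{\nu}^2}{\period^2 T} \leq \frac{\divergence^2}{\hat{\upsilon}^4 N^5 I T^2}$ is of strictly lower order and absorbed into $\frac{\divergence^2}{T}$. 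The leading term $\frac{(1+\sigma^2)\rho\sqrt{L\mathcal{F}}}{\sqrt{IT}}$ carries over verbatim. Collecting these gives the stated bound, and the identical computation with $\upsilon$ in place of $\hat{\upsilon}$ dispatches the independent case. The only step requiring care is the expectation-exchange argument in the second paragraph; the rest is algebraic substitution and $\mathcal{O}(\cdot)$ absorption.
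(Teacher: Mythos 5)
Your proposal is correct and follows essentially the same route as the paper: substitute $\period = \hat{\upsilon}^2 N^{\sfrac{5}{2}}\sqrt{IT}$ (or $\upsilon^2 N^{\sfrac{5}{2}}\sqrt{IT}$) into Corollary~\ref{corollary:mainConvergenceResult2}, check that its preconditions reduce to the stated bounds on $I$ and $T$, use Proposition~\ref{prop:divergenceDecomposition} to replace $\tilde{\beta}^2$ and $\tilde{\nu}^2$ by $\divergence^2$, and pass to full expectation via the observation that $\tilde{\delta}^2(\period)$ enters the proof of Theorem~\ref{theorem:mainConvergenceResult} additively. Your second paragraph merely spells out in detail the expectation-exchange argument that the paper compresses into its closing remark of Section~\ref{sec:interpret}, so the two proofs coincide in substance.
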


\textbf{Remark.}
Although the upper bound of $I$ in Corollary~\ref{corollary:nonRegularizedExpectation} may appear restrictive, note that the optimal solution $\x^*$ to \eqref{eq:FLObj} remains the same when the objective $f(\x)$ is scaled by a multiplicative positive constant, which in turn scales $\sqrt{L\mathcal{F}}$. 
More specifically, consider an arbitrary $a>0$,
we define $F'(\x) := a F(\x)$ and $f'(\x) := \frac{1}{N}\sum_{n=1}^N F'_n(\x) = a f(\x)$. Then, we have $f'(\x_0)-f'^* = af(\x_0)-af^* = a\mathcal{F}$. We also have $\norm{\nabla F'_n(\x) - \nabla F'_n(\y)} = a\norm{\nabla F_n(\x) - \nabla F_n(\y)} \leq aL \norm{\x - \y}$ from Assumption~\ref{assumption:Lipschitz}, due to the linearity of gradients. Hence, by choosing $a\in(0,1)$ and replacing $F(\x)$ with $F'(\x)$, we can make $\sqrt{L\mathcal{F}}$ arbitrarily small and $\frac{\rho}{2\hat{\upsilon}^2 N^{\sfrac{5}{2}}\sqrt{L\mathcal{F}}}$ or $\frac{\rho}{2\upsilon^2 N^{\sfrac{5}{2}}\sqrt{L\mathcal{F}}}$ (the upper bound of~$I$ in Corollary~\ref{corollary:nonRegularizedExpectation}) arbitrarily large without affecting the optimal solution $\x^*$. Thus, we can potentially allow arbitrarily large $I$ by scaling $f(\x)$ without changing the optimal solution $\x^*$. 
We leave the in-depth study of this phenomenon for future work, where we recognize that such a scaling will affect the LHS of the convergence bound too, although the optimal solution $\x^*$ does not change.

We also give the following further insight related to Corollary~\ref{corollary:nonRegularizedExpectation}. In Corollary~\ref{corollary:nonRegularizedExpectation}, we choose $\period$ to be proportional to the variance $\hat{\upsilon}^2$ or $\upsilon^2$. This is intuitive because when the clients' participation weights have higher variance, we would like to wait for more rounds before amplifying, so that the contributions of clients are more balanced. In the same way, $\period$ increases with the number of clients $N$. Recall that $\sum_{n=1}^N q_t^n = 1$. Intuitively, when the participation patterns of individual clients remain ``unchanged'', $q_t^n$ scales with $\frac{1}{N}$ and its variance $\hat{\upsilon}^2$ or $\upsilon^2$ scales with $\frac{1}{N^2}$, so the product $\hat{\upsilon}^2 N^{\frac{5}{2}} \sim \sqrt{N}$ or $\upsilon^2 N^{\frac{5}{2}} \sim \sqrt{N}$, implying that $P$ effectively scales with $\sqrt{N}$ when this intuition holds.

While the result in Corollary~\ref{corollary:nonRegularizedExpectation} is in expectation, the high-probability bound is similar (see Table~\ref{table:mainResults}).
In general, since \eqref{eq:divergenceDelta} is used as an upper bound in the proof of Theorem~\ref{theorem:mainConvergenceResult},
if $\tilde{\delta}^2(\period)$ is chosen so that \eqref{eq:divergenceDelta} holds in expectation or with a certain probability, such as in Propositions~\ref{prop:ergodic}, \ref{prop:mixing} and \ref{prop:independent}, then the final convergence result also holds in expectation or with a certain probability, respectively.

\section{\texorpdfstring{Discussions and Insights}{Discussions and Insights}}
\label{sec:discussions}

\textbf{Interpretation of ``Linear Speedup''.}
Linear speedup is a desirable property seen in existing works that consider idealized client participation \cite{yu2019linear,yu2019parallel,yang2021achieving}. It essence, it means that the same convergence error can be achieved by increasing the number of participating clients $S$ (with $S\leq N$) and reducing the number of rounds $T$, while keeping the product $ST$ unchanged. In our case of arbitrary client participation, the coefficient $\rho$ is a generalization of the $\sfrac{1}{\sqrt{S}}$ term for linear speedup in existing works that select a fixed number of $S$ clients in each round.
When $S$ clients participate with equal weight, because $\sum_{n=1}^N q_t^n = 1$, we have $q_t^n=\sfrac{1}{S}$ for the participating clients and $q_t^n=0$ for the non-participating clients, so $\rho=\big[\sum_{n=1}^N (q_t^n)^2\big]^{\sfrac{1}{2}}=\sfrac{1}{\sqrt{S}}$. Plugging this $\rho$ back to the convergence results, we achieve a convergence rate of $\mathcal{O}\left(\sfrac{\sigma}{\sqrt{SIT}}\right)$ in Corollary~\ref{corollary:regularizedGiven} and $\mathcal{O}\left(\sfrac{(1+\sigma^2)}{\sqrt{SIT}}\right)$ in Corollary~\ref{corollary:nonRegularizedExpectation}, for sufficiently large $T$ while ignoring the other variables in $\mathcal{O}(\cdot)$, as shown in Table~\ref{table:mainResults}. This shows that we can achieve linear speedup in $S$ in this special case. We can also generalize to having at least $S_\mathrm{min}$ clients participating with equal weight in each round. Following the same argument, we have $\rho \leq \sfrac{1}{\sqrt{S_\mathrm{min}}}$ in this case, giving a linear speedup in $S_\mathrm{min}$. In general, we achieve a linear speedup factor of $\sfrac{1}{\rho^2}$ in the case of arbitrary participation.

\textbf{Matching Lower Bound or State-of-the-Art Results.} We continue to assume sufficiently large $T$. For regularized participation (Corollary~\ref{corollary:regularizedGiven}), our convergence rate of $\mathcal{O}\left(\sfrac{\sigma}{\sqrt{SIT}}\right)$ \textit{matches the lower bound of convergence error for centralized SGD} \cite{arjevani2019lower}. The lower bound states that to reach a convergence error of $\epsilon$, there exists an objective function such that at least $\Omega\left(\sfrac{\sigma^2}{\epsilon^2}\right)$ stochastic gradient oracle calls are required.\footnote{Note that \cite{arjevani2019lower} defines the convergence error as $\norm{\nabla f(\x_{{t}})}$ whereas we define it as $\normsq{\nabla f(\x_{{t}})}$. Hence, the order of our $\epsilon$ is different from that in \cite{arjevani2019lower}.} We have $SIT$ oracle calls to reach $\epsilon = \mathcal{O}\left(\sfrac{\sigma}{\sqrt{SIT}}\right)$ according to our result, thus our upper bound matches this lower bound and is asymptotically optimal. Note that our upper bound still matches the lower bound if we include the coefficient $\sqrt{L\mathcal{F}}$ in  $\mathcal{O}\left(\cdot\right)$ and $L\mathcal{F}$ in $\Omega\left(\cdot\right)$. For stochastic participation (Corollary~\ref{corollary:nonRegularizedExpectation}), our convergence rate of $\mathcal{O}\left(\sfrac{(1+\sigma^2)}{\sqrt{SIT}}\right)$ \textit{matches state-of-the-art results of FedAvg} \cite{fraboni2021clustered,karimireddy2020scaffold,yang2021achieving} that were obtained for the more idealized case of clients being selected to participate according to a specific independent sampling scheme. 

\textbf{Special Case of Waiting for All Clients.} 
We consider a specific configuration of FedAvg where all the clients wait for $P$ rounds before proceeding with SGD. In every iteration $i$ of round $t$ within these $P$ rounds, each participating client $n$ may call the stochastic gradient oracle to obtain $\g_n(\x_t)$, but it does not perform local updates. Instead, it accumulates and averages the sampled instances of $\g_n(\x_t)$. SGD is performed once after these $P$ rounds using the \textit{average} $\g_n(\x_t)$ from each client $n$. Assume that $P$ is large enough so that each client $n$ computes $\g_n(\x_t)$ at least $M\geq 1$ times that are averaged afterwards. If the $M$ instances of $\g_n(\x_t)$ are all independent from each other, the noise $\sigma^2$ (defined in Assumption~\ref{assumption:gradientNoise}) becomes $\sfrac{\sigma^2}{M}$. 
This configuration can be considered as a special case of Algorithm~\ref{alg:main-alg}, where all $N$ clients perform SGD once in $P$ rounds, so there are $\sfrac{T}{P}$ updates in total, with a reduced stochastic gradient noise of $\sfrac{\sigma^2}{M}$. By replacing $\sigma$ and $SIT$ in our bound $\mathcal{O}\left(\sfrac{\sigma}{\sqrt{SIT}}\right)$ with $\sfrac{\sigma}{\sqrt{M}}$ and $\sfrac{NT}{P}$, respectively, we obtain a convergence rate of $\mathcal{O}\left(\sfrac{\sigma\sqrt{P}}{\sqrt{MNT}}\right)$ for regularized participation with this ``wait-for-all'' method.

In theory, when considering regularized participation and $M$ is large enough, the convergence rate of this ``wait-for-all'' method can potentially match with that of the original configuration of Algorithm~\ref{alg:main-alg}. In this way, the convergence error upper bound of both methods can match the lower bound of SGD convergence error (see discussion above). However, in practice, reducing the stochastic gradient noise as in this ``wait-for-all'' method may only give limited improvement. One reason is that practical settings have a finite number of training data samples, so the noise can only be reduced by a certain degree. In addition, having some noise in SGD can prevent the model parameter from being trapped in saddle points and local minima in practice. The empirical results in Section~\ref{sec:experiments} show that Algorithm~\ref{alg:main-alg} with amplification performs better than waiting for all the clients. We also note that our results for stochastic participation (Corollary~\ref{corollary:nonRegularizedExpectation}) \textit{does not} require all the clients to participate %
within $P$ rounds.

\textbf{Achieving Regularized Participation in Practice.}
In theory, regularized participation requires that $\frac{1}{\period } \sum_{t={t_0}}^{{t_0}+\period -1} q_t^n$ is equal to each other for all the clients $n\in\{1,\ldots,N\}$, for a properly chosen~$P$.
When all the clients are connected to the system and client sampling is performed to limit the computation and communication overhead, this condition holds when the clients are selected according to a permutation, i.e., all the clients participate once before the same client can be selected again. In the case where clients get disconnected from time to time, the system can try to properly schedule the participation of connected clients and their weights $\{q_t^n\}$, so that within a cycle of $\period$ rounds, all the clients participate with equal averaged weights. In practice, it can be sufficient if this regularized condition is only approximately satisfied, especially when multiple clients have similar datasets so that $\tilde{\delta}^2(\period)$ is small although not necessarily zero.
The empirical results in Section~\ref{sec:experiments} show that Algorithm~\ref{alg:main-alg} with amplification can give good performance even if $P$ is less than the participation cycle of all clients.
This suggests that, as long as the subset of clients that participate in $P$ rounds are more representative of the overall data distribution than the (usually much smaller) subset of clients that participate in a single round, amplifying the updates every $P$ rounds can be useful.

\section{\texorpdfstring{Experiments}{Experiments}}
\label{sec:experiments}

We ran experiments of training convolutional neural networks (CNNs) with FashionMNIST~\cite{FashionMNIST} and CIFAR-10~\cite{CIFAR10} datasets, each of which has images in $10$ different classes.  We set the total number of clients to $N=250$. 
Similar to existing works~\cite{yang2021achieving,ding2020distributed,eichner2019semi}, we partition the data into clients so that each client has data of one majority class label, with $5\%$ of data of other (minority) labels, to simulate a setup with non-IID data distribution which is often encountered in FL scenarios.
We assume that the clients' availability exhibits a periodic pattern inspired by~\cite{ding2020distributed,eichner2019semi}. Namely, in the first $100$ rounds, only clients with the first two majority labels are available; in the next $100$ rounds, only clients with the next two majority labels are available, and so on. In each round, $S=10$ clients participate in a regularized manner, out of all the currently available clients.
To speed up initial rounds of training, similar to practical deep learning implementations, we start with standard FedAvg with a relatively large initial learning rate. The initial rates are $\gamma=0.1$ and $\gamma=0.05$ without amplification (i.e., $\eta=1$) for FashionMNIST and CIFAR-10, respectively, which were obtained using grid search in a separate scenario of always participation. 
After an initial training of $2,000$ rounds for FashionMNIST and $4,000$ rounds for CIFAR-10, we study the performance of different approaches with their own learning rates.  
Additional setup details and results are given in Appendix~\ref{sec:appendixExperiment}.

\textbf{Comparing Different Methods.}
We show the results of Algorithm~\ref{alg:main-alg} both with and without amplification. When using amplification, we set $\eta=10$ and $P=500$.
We also compare to an algorithm that waits for all the clients,  related to our discussion in Section~\ref{sec:discussions}, for two settings where each client computes its gradients either on a minibatch or on the entire dataset, which we refer to as ``wait-minibatch'' and ``wait-full'', respectively.
The best learning rate $\gamma$ of each approach was separately found on a grid that is $\{1, 0.1, 0.01, 0.001, 0.0001\}$ times the initial learning rate, with the periodic participation pattern described above.
The results are shown in Figure~\ref{fig:periodicAvailable}.

\begin{figure*}[b]
\includegraphics[width=\linewidth]{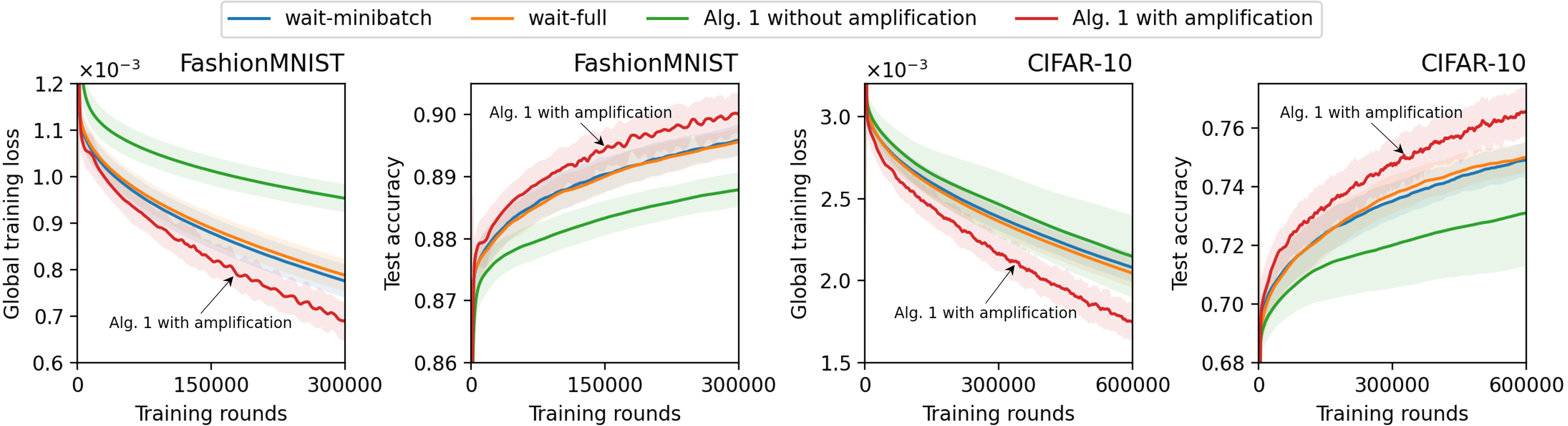}%
\vspace{-0.5em}
\caption{Results for different approaches with periodically connected clients ($\period=500$).
}
\label{fig:periodicAvailable}
\end{figure*}

We make a few key observations as follows.
First, we clearly see that \textit{Algorithm~\ref{alg:main-alg} with amplification gives the best performance}. By choosing $P=500$, we match with the participation period of clients, as described in the setup above. This corresponds to our setting with regularized participation. In this case, Algorithm~\ref{alg:main-alg} with amplification allows different groups of clients to make small progress among themselves first. After $P$ rounds, each client has made its own share of contribution, and the updates get amplified so that the model parameter progresses faster towards the direction that is overall beneficial for all the clients. As our theory predicts, this approach gives a desirable convergence rate, which is now verified by experiments. Next, we observe that \textit{the ``wait-minibatch'' and ``wait-full'' methods perform worse than Algorithm~\ref{alg:main-alg} with amplification}.
We also observe that ``wait-full'', which computes the gradient on the entire dataset, does not provide substantial gain compared to ``wait-minibatch'' in the experiments. These observations align with our discussion in Section~\ref{sec:discussions} on the practical performance of different methods.
Finally, we observe that \textit{standard FedAvg (i.e., Algorithm~\ref{alg:main-alg} without amplification) gives the lowest performance}. This is because without amplification, the algorithm cannot put more emphasis on the collective updates by the cohort of all clients, and the parameter updates may diverge from the overall optimal direction.

\textbf{Different Values of $P$.}
Next, we fix $\eta=10$ in Algorithm~\ref{alg:main-alg} and study the effect of choosing different values of $P$. We use the same learning rate $\gamma$ that is obtained from grid search in the previous experiment.
Note that the clients' availability pattern remains the same as described above.
The choice of different $P$ simulates the practical scenario where the estimation of $P$ may not perfectly align with the actual participation cycle. The results are shown in Figure~\ref{fig:periodicAvailableDiffP}.

We observe that $P=1$, which corresponds to the ``classical'' setting of FedAvg with two-sided learning rates \cite{karimireddy2020scaffold,yang2021achieving}, does not give the best performance. Interestingly, compared to $P=500$, we see that $P=100$ and $P=300$ give a similar performance, and even slightly better performance in the case of CIFAR-10 data. 
Due to the random offset applied to the first participation cycle in each experiment (see Appendix~\ref{subsec:appendixExperimentSetup} for details), every interval of $100$ rounds can include the ``partial'' contributions by two subsets of clients (with data in $4$, out of $10$, majority classes). The results suggest that amplifying such partial contributions by multiple subsets of clients can still improve performance. In the case of CIFAR-10 data, the reason for $P=500$ being slightly worse may be that the accumulated update within $P$ rounds generally becomes larger as $P$ gets larger, in which case amplification causes a bigger change in the model parameter. Depending on the landscape of the loss function, this change may be too big so that the model performance decreases. For a similar reason, the performances of $P=700$ and $P=900$ are even worse. 
Nevertheless, we expect that choosing a smaller learning rate $\gamma$ can improve the performance for large $P$ values. 

\begin{figure}[tb]
\centering
\includegraphics[width=1\linewidth]{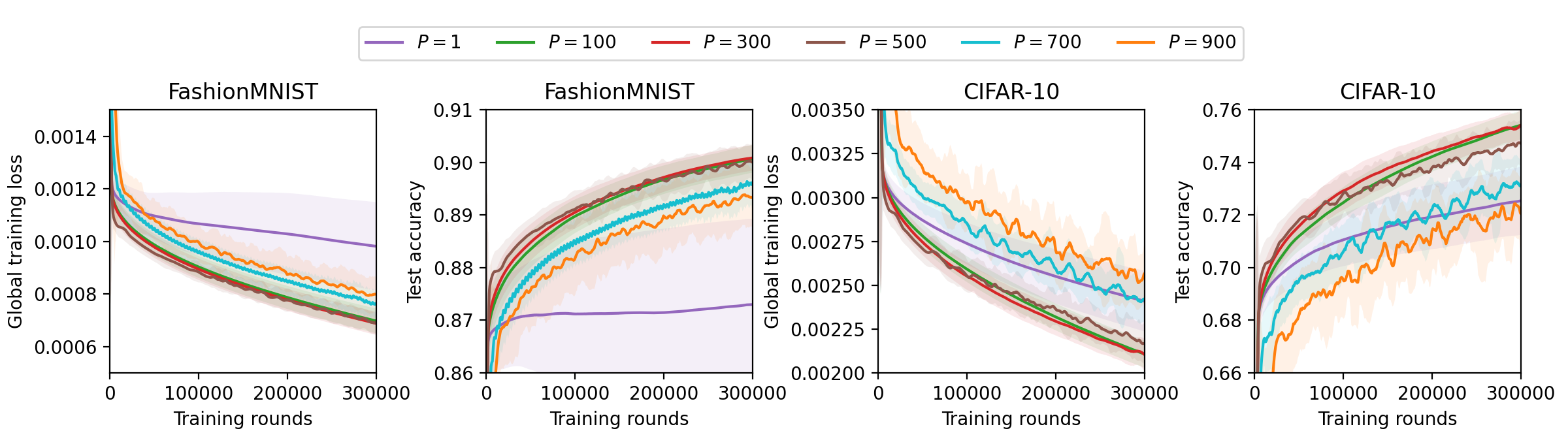}%
\vspace{-0.5em}
\caption{Algorithm~\ref{alg:main-alg} with amplification and different $P$, with periodically connected clients.
}
\label{fig:periodicAvailableDiffP}
\end{figure}

\section{Conclusion}
In this paper, we have studied FL with arbitrary client participation. 
For a generalized FedAvg algorithm that amplifies parameter updates every $\period$ rounds, we have developed a unified framework for convergence analysis and obtained convergence rates for a variety of client participation patterns. 
Our findings suggest that regularized participation with finite $\period$ gives the best performance and matches the lower bound of SGD convergence error when $T$ is sufficiently large. For stochastic participation, first, we have formally proven that convergence is guaranteed when the participation process is ergodic. Then, for two generic classes of participation processes, we have proven that with a properly chosen $\period$, our convergence rate matches state-of-the-art FedAvg convergence rates that were derived for the idealized case of independent and unbiased participation. The empirical results have confirmed that amplification is useful and also provided further insights.
Future directions include analysis of advanced FL algorithms and more detailed empirical study.

\clearpage

\bibliography{references}
\bibliographystyle{abbrvnat}

\clearpage

\begin{center}
\LARGE \textbf{Appendix}
\end{center}

\appendix

\numberwithin{equation}{section}
\counterwithin{figure}{section}
\counterwithin{algocf}{section}
\counterwithin{theorem}{section}

\counterwithin{table}{section}

\startcontents[sections]
\printcontents[sections]{l}{1}{\setcounter{tocdepth}{2}}

\clearpage

\section{Additional Related Works}
\label{sec:appendixAdditionalRelatedWork}

We complement Section~\ref{sec:introduction} by discussing a few additional related works as follows.

Client unavailability was also studied from the perspective of block-cyclic SGD~\cite{ding2020distributed,eichner2019semi}, which trains \textit{multiple} models each for a block (group) of clients with known identity and was analyzed for convex objectives. Different from those works, our goal is to train a \textit{single} model without requiring block structure, and we focus on non-convex objectives that frequently arise in modern problems involving neural networks.

In the literature of SGD in the centralized setting, the effect of different ways of minibatch sampling has also been studied, such as IID sampling from an arbitrary distribution \cite{gower2019sgd} and random reshuffling \cite{gurbuzbalaban2021random,mishchenko2020random}. However, the problem in FL is much more complex and needs to incorporate the diversity of different clients' local objectives. Moreover, we focus on a unified framework that is more general than the special cases of IID sampling and random reshuffling.

FedAvg with two-sided learning rates has been studied in the literature \cite{karimireddy2020scaffold,yang2021achieving}, where a global learning rate is applied to the updates at the end of each round, giving an improved convergence bound compared to using a single-sided learning rate. This is equivalent to our Algorithm~\ref{alg:main-alg} with $\period=1$ and the amplification factor $\eta$ being the global learning rate. We extend this setup by allowing any $\period\geq 1$. Moreover, most existing works including \cite{karimireddy2020scaffold,yang2021achieving} have only shown the benefit of two-sided learning rates in a theoretical setting; their experiments still uses a global learning rate of $1.0$, which is equivalent to the case of a single-sided learning rate. Although some recent works have used different global learning rates in experiments, the best setting often remains to be choosing $1.0$ as the global learning rate in the case of FedAvg \cite[Table 8]{reddi2021adaptive}. In contrast, we have verified the benefit of amplification in our experiments (Section~\ref{sec:experiments}), for scenarios with more challenging client participation patterns than existing works.
The idea of amplification is also related to look-ahead algorithms \cite{zhang2019lookahead,wang2020lookahead}, which consider a two-agent setting, e.g., a server and a single client. Different from these works, we consider the unique challenges in federated learning with multiple arbitrarily participating clients, where the data is non-IID across clients.

It is also worth pointing out that our discussion related to waiting for all clients in Section~\ref{sec:discussions} has some analogy to the comparison between local SGD and minibatch SGD \cite{woodworth2020local,woodworth2020minibatch,yun2022minibatch}. Even when not considering the effect of partial participation, these existing works have only identified some specific classes of convex objectives where local SGD can be shown to have a better theoretical convergence rate than minibatch SGD. An improved theoretical approach to capture the fact that local SGD often outperforms minibatch SGD in practice, for a wide range of problems and objective functions (especially non-convex objectives), remains an interesting future direction.

\section{Additional Discussions}

\subsection{Why Does Amplification Help?}
\label{sec:appendixMotivatingExample}

We give an illustrative example to explain why the use of amplification with $\eta > 1$ generally improves the performance, as shown in both our theory and experiments. Consider a simple setting with $N=3$ clients. Each client $n$ has a local objective defined as $F_n(\x) = \frac{1}{2}\normsq{\x - \z_n}$ for some constant vector $\z_n \in \mathbb{R}^m$. It is evident that the true optimal solution that minimizes the global objective $f(\x) = \frac{1}{6}\sum_{n=1}^3 \normsq{\x - \z_n}$ is $\x^* = \frac{1}{3}\sum_{n=1}^3 \z_n$, but we assume that this is not known to the system and we would like to solve this problem using Algorithm~\ref{alg:main-alg}. In this specific example, we consider a two-dimensional space (i.e., $m=2$) and define $\z_1 = (-1, 0), \z_2 = (1,0), \z_3=(0, \sqrt{3})$. We further assume that the three clients are available in a cyclic manner, so that in the first round, only client $n=1$ is available; in the second round, only client $n=2$ is available; in the third round, only client $n=3$ is available; then this cycle continues with client $n=1$ available in the fourth round, and so on. We plot the trajectory of how the model parameter $\x$ changes from the initial value $\x_0$ to $\x_{15}$ that is obtained after $T=15$ rounds. The initial parameter is set to $\x_0 = (1,2)$ in this example. The plots of the trajectories with different choices of local learning rate $\gamma$ and amplification factor $\eta$ are shown in Figures~\ref{fig:motivationalAmplification1}--\ref{fig:motivationalAmplification2}.

With the local objectives defined in this example, the local gradient can be directly computed as $\nabla F_n(\x) = \x - \z_n$. Therefore, in the case of no amplification (i.e., $\eta=1$), the solution variable $\x$ moves towards $\z_1$ in the first round when client $n=1$ is available; then, it moves towards $\z_2$ in the second round when client $n=2$ is available, and so on. As we see from Figure~\ref{fig:motivationalAmplification1}, this can lead to a pattern where $\x$ cycles around the optimal solution $\x^*$ but approaches $\x^*$ only very slowly, especially when $\gamma$ is large. In the case of small $\gamma$, this cyclic pattern is not apparent, but we can still see that $\x$ moves towards different local optimal points (i.e., $\z_n$ for different $n$) in different rounds, and the overall convergence speed is slow. This leads to the following problem: regardless of whether we choose a large or small $\gamma$, at the end of the $15$-th round, there is an apparent gap between $\x_{15}$ and~$\x^*$.

The advantage of amplification is that it increases the importance of the aggregated updates made by clients that participate in different rounds. In this specific example, a single client's update direction may be towards its own local optimum, which can be different from the direction towards the global optimum. However, the accumulated updates by all the three clients over every three rounds will be more likely towards the global optimum (or a neighborhood of it). By amplifying such accumulated updates after every three rounds (i.e., $P=3$), we can let the solution variable $\x$ move much faster towards the true (global) optimum. This intuition is confirmed by the trajectories shown in Figure~\ref{fig:motivationalAmplification2}. Here, we note that the product of $\gamma$ and $\eta$ in the corresponding sub-figures of Figures~\ref{fig:motivationalAmplification1} and \ref{fig:motivationalAmplification2} are equal. By keeping a small local learning rate $\gamma$ and amplifying the updates accumulated in every $P=3$ rounds, we can converge to the optimal value $\x^*$, as shown in Figure~\ref{fig:motivationalAmplification2}.

Note that in practice, $P$ does not need to perfectly align with the ``cycle'' of participation, as seen in Figure~\ref{fig:periodicAvailableDiffP} in Section~\ref{sec:experiments} of the main paper.

\begin{figure}[H]
    \centering
    \begin{subfigure}[b]{0.245\textwidth}
        \centering
        \includegraphics[width=\textwidth]{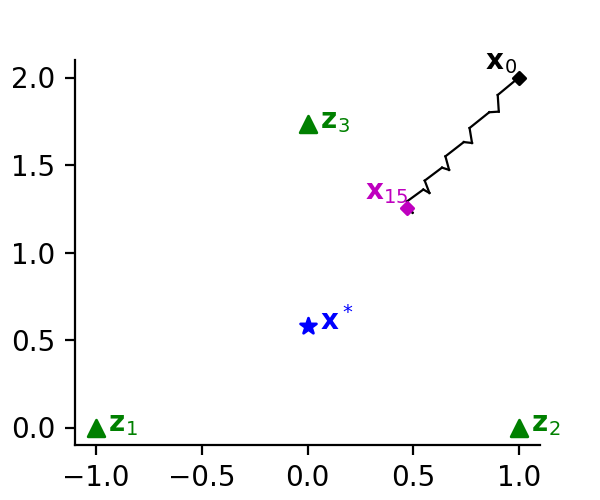}
        \caption{$\gamma=0.05, \eta=1$}
    \end{subfigure}
    \begin{subfigure}[b]{0.245\textwidth}
        \centering
        \includegraphics[width=\textwidth]{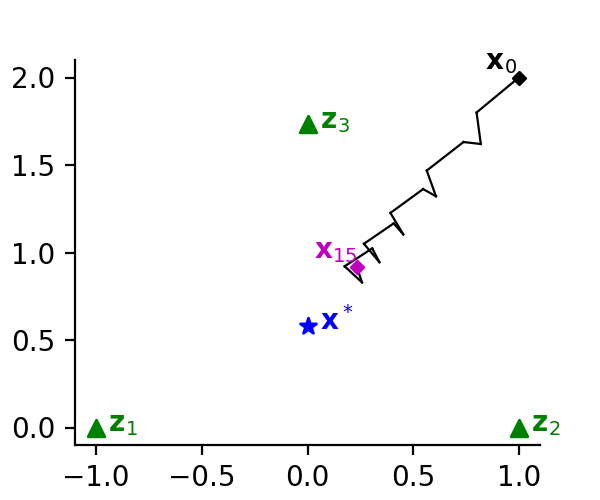}
        \caption{$\gamma=0.1, \eta=1$}
    \end{subfigure}
    \begin{subfigure}[b]{0.245\textwidth}
        \centering
        \includegraphics[width=\textwidth]{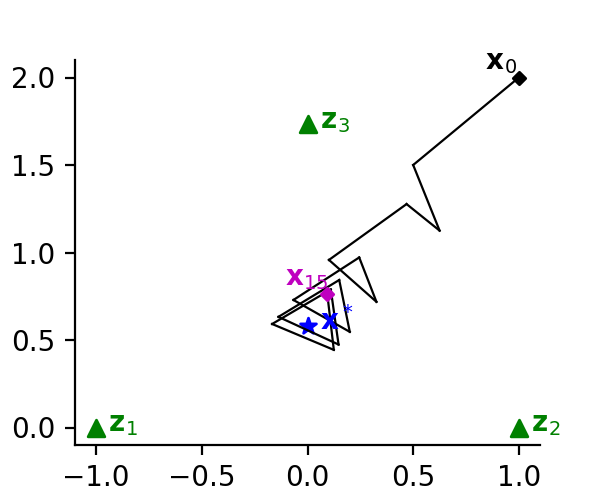}
        \caption{$\gamma=0.25, \eta=1$}
    \end{subfigure}
    \begin{subfigure}[b]{0.245\textwidth}
        \centering
        \includegraphics[width=\textwidth]{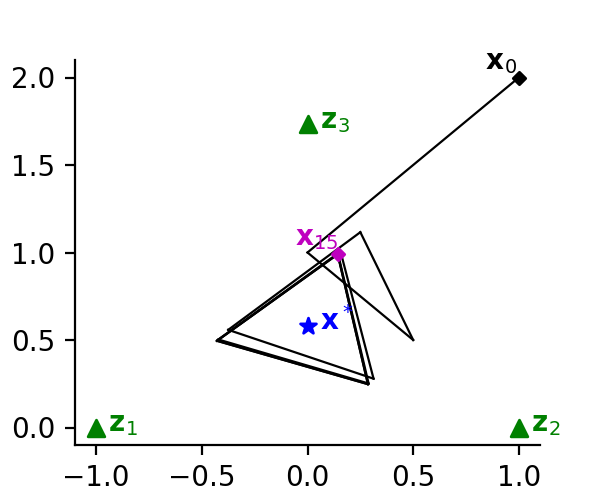}
        \caption{$\gamma=0.5, \eta=1$}
    \end{subfigure}    
    \caption{Motivating example: different local learning rates $\gamma$ without amplification (i.e., $\eta=1$). The trajectory from $\x_0$ to $\x_{15}$ shows how the model parameter changes from round $t=0$ to round $t=15$.}
    \label{fig:motivationalAmplification1}
\end{figure}
\begin{figure}[H]
    \centering
    \begin{subfigure}[b]{0.245\textwidth}
        \centering
        \includegraphics[width=\textwidth]{motivational_gamma0.05_eta1.png}
        \caption{$\gamma=0.05, \eta=1$}
    \end{subfigure}
    \begin{subfigure}[b]{0.245\textwidth}
        \centering
        \includegraphics[width=\textwidth]{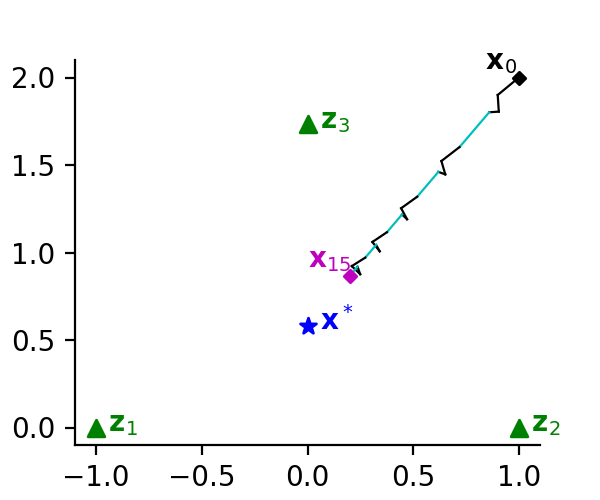}
        \caption{$\gamma=0.05, \eta=2$}
    \end{subfigure}
    \begin{subfigure}[b]{0.245\textwidth}
        \centering
        \includegraphics[width=\textwidth]{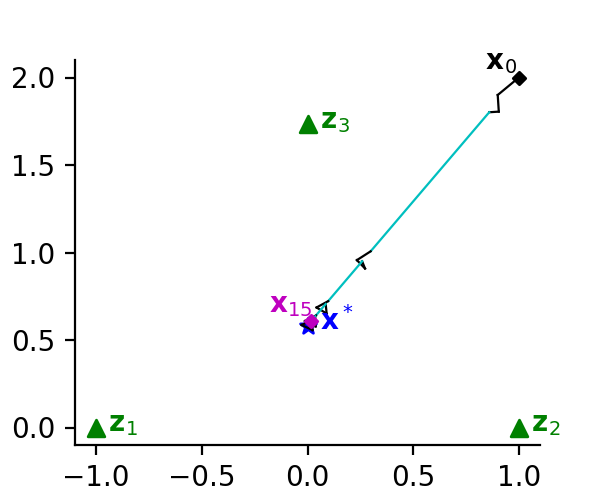}
        \caption{$\gamma=0.05, \eta=5$}
    \end{subfigure}
    \begin{subfigure}[b]{0.245\textwidth}
        \centering
        \includegraphics[width=\textwidth]{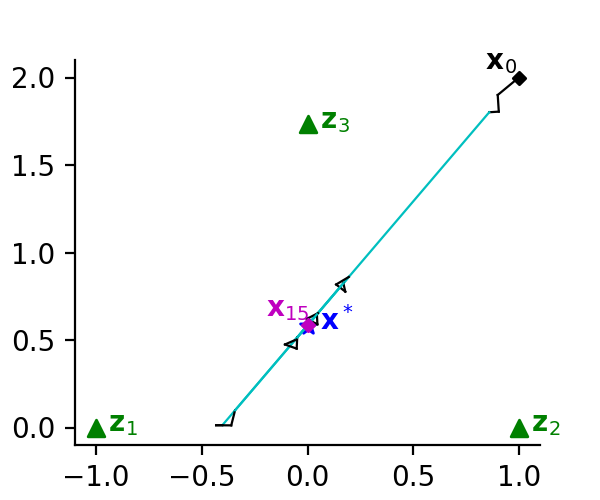}
        \caption{$\gamma=0.05, \eta=10$}
    \end{subfigure}    
    \caption{Motivating example: fixed local learning rate $\gamma=0.05$ with different amplification factors $\eta$. The trajectory from $\x_0$ to $\x_{15}$ shows how the model parameter changes from round $t=0$ to round $t=15$. The segments in \textcolor{cyan}{cyan} color shows the change in parameter $\x$ due to amplification, while the segments in black color shows the parameter change due to regular SGD operation.}
    \label{fig:motivationalAmplification2}
\end{figure}

\subsection{Relating Different Participation Patterns to Practical FL Scenarios}

In the following, we discuss some possible connections between practical FL application scenarios and the different participation patterns that are considered in our theoretical analysis.

The case of periodic shifts, which is often seen in practical cross-device FL scenarios, is mostly related to \textit{regularized participation}. In this case, the clients participate equally over the entire period, but only a subset of clients from a specific population (and with a specific data distribution) may participate in each ``period'' that includes a certain number of consecutive rounds. In practice, this ``equal participation'' only needs to hold approximately, and our empirical results at the end of Section~\ref{sec:experiments} show that our Algorithm~\ref{alg:main-alg} can still give good performance even if $P$ is less than the full cycle (period) of participation.

Among the stochastic participation patterns, \textit{ergodic} is the most generic. Intuitively, it says that the participation weights of different clients are the same when averaged over a long enough time. This can represent a cross-device FL scenario, where the participation of each client at any time instance can be highly random, but the long-term average statistics of clients are the same. The case of \textit{strongly mixing} participation includes the special case where the participation follows a Markov chain, i.e., if the client is currently unavailable, there is a certain (possibly small) probability that it will become available in the next round, and vice versa. This can also represent a cross-device FL setting, where the devices get connected and disconnected over time, and there is some randomness in the exact time it gets connected or disconnected. Finally, the case of \textit{independent} participation may represent a cross-silo FL setting, where the reason for not participating in a certain round is not because the client is disconnected for an extended period of time, but rather because the client has some other higher-priority tasks to run so that it cannot participate in all rounds.

\section{Proofs}
\label{sec:appendixProof}

\subsection{Preliminaries}

The following inequalities are frequently used throughout our proofs. We will use them without further explanation.

From Jensen's inequality, for any $\z_m \in \mathbb{R}^d, m\in\{1,2,\ldots,M\}$, we have
\begin{align}
    \normsq{ \frac{1}{M}\sum_{m=1}^M \z_m } &\leq \frac{1}{M}\sum_{m=1}^M \normsq{\z_m }
    \label{eq:Jensen1}
\end{align}
which directly gives
\begin{align}
    \normsq{ \sum_{m=1}^M \z_m } &\leq M \sum_{m=1}^M \normsq{ \z_m } .
    \label{eq:Jensen2}
\end{align}

Peter-Paul inequality (also known as Young's inequality) gives
\begin{equation}
    \langle \z_1, \z_2 \rangle \leq \frac{b \normsq{ \z_1 }}{2} + \frac{\normsq{ \z_2 }}{2b}
\end{equation}
for any $b > 0$ and any $\z_1, \z_2  \in \mathbb{R}^d$.
Hence, we also have
\begin{equation}
    \normsq{ \z_1 + \z_2 } \leq (1+b) \normsq{ \z_1 } + \left(1 +\frac{1}{b}\right)\normsq{ \z_2 }.
\end{equation}

Throughout the analysis, we use the short-hand notation $\Expectsubbracket{\cdot|\mathcal{Q}}{\cdot}$ to denote $\Expectbracket{\cdot|\mathcal{Q}}$.

In addition, we use the notations in Algorithm~\ref{alg:main-alg} and Assumptions~\ref{assumption:Lipschitz}, \ref{assumption:gradientNoise}, and \ref{assumption:gradientDivergenceAlternative}.

\subsection{Lemmas}
\begin{lemma}
\label{lemma:yDivergence}
When $\gamma \leq \frac{1}{12 LI\period }$, we have
\begin{align}
    &\sum_{n=1}^N q_t^n \Expectsubbracket{\cdot|\mathcal{Q}}{\normsq{\y^n_{t,i} - \sum_{n'=1}^N q_t^{n'}\y^{n'}_{t,i}}} \leq 3I\gamma^2\left( 6I \tilde{\nu}^2 +  \sigma^2 \left(1-\sum_{n=1}^N (q_t^n)^2\right)\right).
\end{align}
\end{lemma}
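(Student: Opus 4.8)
I would work conditionally on $\mathcal{Q}$ throughout, so the only randomness is the stochastic-gradient noise. Write $\bar{\y}_{t,i} := \sum_{n'} q_t^{n'}\y_{t,i}^{n'}$ for the weighted-average iterate, $\e_i^n := \y_{t,i}^n - \bar{\y}_{t,i}$ for the deviation, and $A_i := \sum_n q_t^n \Expectcond{\normsq{\e_i^n}}{\mathcal{Q}}$, which is exactly the quantity to be bounded. I would decompose each stochastic gradient as $\g_n(\y_{t,i}^n) = \nabla F_n(\y_{t,i}^n) + \zeta_i^n$, where $\zeta_i^n$ is conditionally mean-zero given the history $\mathcal{F}_i$ up to the start of local step $i$ and $\Expect\normsq{\zeta_i^n}\le\sigma^2$. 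Since $\y_{t,0}^n=\x_t$ for every $n$, we have $\e_0^n=0$ and $A_0=0$. Subtracting the weighted average of the local-update rule (Line~\ref{alg-line:localUpdate}) from client $n$'s own update gives the recursion $\e_{i+1}^n = \e_i^n - \gamma(D_i^n + Z_i^n)$, where $D_i^n := \nabla F_n(\y_{t,i}^n) - \sum_{n'}q_t^{n'}\nabla F_{n'}(\y_{t,i}^{n'})$ is a deterministic divergence and $Z_i^n := \zeta_i^n - \sum_{n'}q_t^{n'}\zeta_i^{n'}$ is the centered noise. The plan is to convert this into a scalar recursion for $A_i$ and solve it.

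\textbf{Bounding the two driving terms.} Taking the conditional expectation at step $i$ kills the cross term, since $\e_i^n-\gamma D_i^n$ is $\mathcal{F}_i$-measurable while $Z_i^n$ is conditionally mean-zero, so $A_{i+1} = \sum_n q_t^n\Expect\normsq{\e_i^n - \gamma D_i^n} + \gamma^2\sum_n q_t^n\Expect\normsq{Z_i^n}$. For the noise term I would use the variance identity $\sum_n q_t^n\normsq{\zeta_i^n - \sum_{n'}q_t^{n'}\zeta_i^{n'}} = \sum_n q_t^n\normsq{\zeta_i^n} - \normsq{\sum_{n'}q_t^{n'}\zeta_i^{n'}}$ together with the standard independence of stochastic-gradient noise across clients; this yields $\sum_n q_t^n\Expect\normsq{Z_i^n} = \sum_n q_t^n(1-q_t^n)\Expect\normsq{\zeta_i^n} \le \sigma^2\big(1-\sum_n(q_t^n)^2\big)$, exactly the noise contribution in the claim. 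For the divergence term I would relate $D_i^n$ to $\tilde\nu^2$ by inserting the gradients at the common point $\bar{\y}_{t,i}$, decomposing $D_i^n$ into $\nabla F_n(\y_{t,i}^n)-\nabla F_n(\bar{\y}_{t,i})$, the pointwise divergence $\nabla F_n(\bar{\y}_{t,i})-\sum_{n'}q_t^{n'}\nabla F_{n'}(\bar{\y}_{t,i})$, and $\sum_{n'}q_t^{n'}\big(\nabla F_{n'}(\bar{\y}_{t,i})-\nabla F_{n'}(\y_{t,i}^{n'})\big)$; applying $\normsq{a+b+c}\le 3(\normsq{a}+\normsq{b}+\normsq{c})$, Assumption~\ref{assumption:Lipschitz}, Jensen's inequality, and \eqref{eq:divergenceNu} gives $\sum_n q_t^n\Expect\normsq{D_i^n} \le 6L^2 A_i + 3\tilde\nu^2$. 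The appearance of $A_i$ on the right is what makes the recursion self-referential.

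\textbf{Closing the recursion.} Using the Peter--Paul inequality $\normsq{\e_i^n - \gamma D_i^n} \le (1+\beta)\normsq{\e_i^n} + (1+\tfrac1\beta)\gamma^2\normsq{D_i^n}$ and inserting both bounds yields $A_{i+1} \le \big(1+\beta+6(1+\tfrac1\beta)\gamma^2 L^2\big)A_i + 3(1+\tfrac1\beta)\gamma^2\tilde\nu^2 + \gamma^2\sigma^2\big(1-\sum_n(q_t^n)^2\big)$. I would take $\beta=\tfrac{1}{2I}$, so $1+\tfrac1\beta = 1+2I \le 3I$, and use the hypothesis $\gamma\le\tfrac{1}{12LI\period}$ with $\period\ge 1$ (hence $\gamma^2L^2\le\tfrac{1}{144I^2}$) to bound the multiplicative factor by $r := 1+\tfrac{5}{8I}$. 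Unrolling $A_{i+1}\le rA_i + b$ from $A_0=0$ gives $A_i \le b\,(r^i-1)/(r-1)$; since $i\le I$ and $r^I\le e^{5/8}$, the geometric factor is at most a constant multiple of $I$, and with $b \le 9I\gamma^2\tilde\nu^2 + \gamma^2\sigma^2(1-\sum_n(q_t^n)^2)$ this collapses to the stated bound $3I\gamma^2\big(6I\tilde\nu^2 + \sigma^2(1-\sum_n(q_t^n)^2)\big)$, the coefficients $18$ and $3$ leaving ample slack.

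\textbf{Main obstacle.} The one genuine subtlety is the self-referential recursion: $\sum_n q_t^n\Expect\normsq{D_i^n}$ depends through smoothness on the very deviation $A_i$ being bounded, so the naive direct expansion $\e_i^n = -\gamma\sum_{j<i}(D_j^n+Z_j^n)$ fails because the cross terms $\innerprod{D_j^n, Z_k^n}$ with $j>k$ do not vanish. The resolution is to keep the $6L^2A_i$ term on the right, absorb it into the multiplicative factor, and exploit the learning-rate constraint so that this factor stays below $1+O(1/I)$; compounding it over the at most $I$ local steps then costs only a constant. Everything else is routine tracking of the Peter--Paul and Jensen constants.
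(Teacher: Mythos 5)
Your proof is correct and follows essentially the same route as the paper's: the same bias--variance split of the stochastic gradients, the same three-term drift decomposition around the weighted average $\sum_{n'} q_t^{n'}\y_{t,i}^{n'}$ bounded via smoothness, Jensen's inequality, and \eqref{eq:divergenceNu}, and the same Peter--Paul-plus-geometric-unrolling argument with step parameter of order $1/I$ and $(1+x)^{1/x}\leq e$. The only differences are cosmetic --- your cleaner scalar recursion in $A_i$, the weighted-variance identity for the noise term in place of the paper's per-client expansion (both yielding the factor $1-\sum_n (q_t^n)^2$), and slightly different constants ($1+\tfrac{5}{8I}$ versus the paper's $1+\tfrac{7}{12(I-1/2)}$) --- none of which change the argument.
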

\begin{proof}
Note that
\begin{align}
    &\sum_{n=1}^N q_t^n\Expectsubbracket{\cdot|\mathcal{Q}}{\normsq{\y^n_{t,i+1} - \sum_{n'=1}^N q_t^{n'}\y^{n'}_{t,i+1}}} \nonumber \\*
    &= \sum_{n=1}^N q_t^n \Expectsubbracket{\cdot|\mathcal{Q}}{\normsq{\y^n_{t,i} - \gamma \g_n(\y^n_{t,i}) - \sum_{n'=1}^N q_t^{n'}(\y^{n'}_{t,i} - \gamma \g_{n'}(\y^{n'}_{t,i}))}}\nonumber \\
    &\overset{(a)}{=} \sum_{n=1}^N q_t^n \Expectsubbracket{\cdot|\mathcal{Q}}{\normsq{\y^n_{t,i} - \gamma \nabla F_n(\y^n_{t,i}) - \sum_{n'=1}^N q_t^{n'}(\y^{n'}_{t,i} - \gamma \nabla F_{n'}(\y^{n'}_{t,i}))}} \nonumber \\
    &\quad + \sum_{n=1}^N q_t^n \Expectsubbracket{\cdot|\mathcal{Q}}{\normsq{\gamma \left(\g_n(\y^n_{t,i}) - \nabla F_n(\y^n_{t,i})\right) - \gamma\sum_{n'=1}^N q_t^{n'} \left(\g_{n'}(\y^{n'}_{t,i}) - \nabla F_{n'}(\y^{n'}_{t,i}) \right)}}\nonumber \\
    &= \sum_{n=1}^N q_t^n \Expectsubbracket{\cdot|\mathcal{Q}}{\normsq{\y^n_{t,i} - \gamma \nabla F_n(\y^n_{t,i}) - \sum_{n'=1}^N q_t^{n'}(\y^{n'}_{t,i} - \gamma \nabla F_{n'}(\y^{n'}_{t,i}))}} \nonumber \\
    &\quad + \sum_{n=1}^N q_t^n \Expect_{\cdot|\mathcal{Q}}\Bigg[\Bigg\Vert - \gamma\sum_{n'\in\{1,\ldots,N\}\setminus \{n\}} q_t^{n'} \left(\g_{n'}(\y^{n'}_{t,i}) - \nabla F_{n'}(\y^{n'}_{t,i}) \right) \nonumber \\
    &\quad\quad\quad\quad\quad\quad\quad\quad  - \gamma \left(q_t^{n} - 1\right)\left(\g_n(\y^n_{t,i}) - \nabla F_n(\y^n_{t,i})\right) \Bigg\Vert^2\Bigg]\nonumber \\
    &\overset{(b)}{\leq} \sum_{n=1}^N q_t^n \Expectsubbracket{\cdot|\mathcal{Q}}{\normsq{\y^n_{t,i} - \gamma \nabla F_n(\y^n_{t,i}) - \sum_{n'=1}^N q_t^{n'}(\y^{n'}_{t,i} - \gamma \nabla F_{n'}(\y^{n'}_{t,i}))}} + \gamma^2 \sigma^2 \!\left(1\!-\!\sum_{n=1}^N (q_t^n)^2\right)\nonumber \\
    &= \sum_{n=1}^N q_t^n \Expect_{\cdot|\mathcal{Q}}\left[\left\Vert\y^n_{t,i} - \sum_{n'=1}^N q_t^{n'}\y^{n'}_{t,i} - \gamma \left[\nabla F_n(\y^n_{t,i}) - \nabla F_n\left(\sum_{n''=1}^N q_t^{n''}\y^{n''}_{t,i}\right) \right.\right.\right. \nonumber \\
    &\quad\quad\quad\quad\quad\quad\quad\quad + \nabla F_n\left(\sum_{n''=1}^N q_t^{n''}\y^{n''}_{t,i}\right) - \sum_{n'=1}^N q_t^{n'} \nabla F_{n'}\left(\sum_{n''=1}^N q_t^{n''}\y^{n''}_{t,i}\right)\nonumber \\
    &\quad\quad\quad\quad\quad\quad\quad\quad \left.\left.\left. + \sum_{n'=1}^N q_t^{n'} \nabla F_{n'}\left(\sum_{n''=1}^N q_t^{n''}\y^{n''}_{t,i}\right) - \sum_{n'=1}^N q_t^{n'} \nabla F_{n'}(\y^{n'}_{t,i})\right]\right\Vert^2\right] \nonumber \\
    &\quad+ \gamma^2 \sigma^2 \left(1-\sum_{n=1}^N (q_t^n)^2\right)\nonumber \\
    &\leq \left(1+\frac{1}{2I-1}\right)\sum_{n=1}^N q_t^n \Expectsubbracket{\cdot|\mathcal{Q}}{\normsq{\y^n_{t,i} - \sum_{n'=1}^N q_t^{n'}\y^{n'}_{t,i}}}\nonumber \\
    &\quad + 2I\gamma^2 \sum_{n=1}^N q_t^n\Expect_{\cdot|\mathcal{Q}}\left[\left\Vert\nabla F_n(\y^n_{t,i}) - \nabla F_n\left(\sum_{n''=1}^N q_t^{n''}\y^{n''}_{t,i}\right) + \nabla F_n\left(\sum_{n''=1}^N q_t^{n''}\y^{n''}_{t,i}\right) \right.\right. \nonumber \\
    &\quad\quad\quad\quad\quad\quad\quad\quad\quad\quad - \sum_{n'=1}^N q_t^{n'} \nabla F_{n'}\left(\sum_{n''=1}^N q_t^{n''}\y^{n''}_{t,i}\right) + \sum_{n'=1}^N q_t^{n'} \nabla F_{n'}\left(\sum_{n''=1}^N q_t^{n''}\y^{n''}_{t,i}\right)\nonumber \\
    &\quad\quad\quad\quad\quad\quad\quad\quad\quad\quad \left.\left.  - \sum_{n'=1}^N q_t^{n'} \nabla F_{n'}(\y^{n'}_{t,i})\right\Vert^2\right] \nonumber \\
    &\quad+ \gamma^2 \sigma^2 \left(1-\sum_{n=1}^N (q_t^n)^2\right)\nonumber \\
    &\leq \left(1+\frac{1}{2I-1}\right)\sum_{n=1}^N q_t^n \Expectsubbracket{\cdot|\mathcal{Q}}{\normsq{\y^n_{t,i} - \sum_{n'=1}^N q_t^{n'}\y^{n'}_{t,i}}}\nonumber \\
    &\quad + 6I\gamma^2 \sum_{n=1}^N q_t^n\Expectsubbracket{\cdot|\mathcal{Q}}{\normsq{\nabla F_n(\y^n_{t,i}) - \nabla F_n\left(\sum_{n''=1}^N q_t^{n''}\y^{n''}_{t,i}\right)}}\nonumber \\
    &\quad + 6I\gamma^2 \sum_{n=1}^N q_t^n\Expectsubbracket{\cdot|\mathcal{Q}}{\normsq{\nabla F_n\left(\sum_{n''=1}^N q_t^{n''}\y^{n''}_{t,i}\right) - \sum_{n'=1}^N q_t^{n'} \nabla F_{n'}\left(\sum_{n''=1}^N q_t^{n''}\y^{n''}_{t,i}\right)}}\nonumber \\
    &\quad + 6I\gamma^2 \sum_{n=1}^N q_t^n\Expectsubbracket{\cdot|\mathcal{Q}}{\normsq{\sum_{n'=1}^N q_t^{n'} \nabla F_{n'}\left(\sum_{n''=1}^N q_t^{n''}\y^{n''}_{t,i}\right) - \sum_{n'=1}^N q_t^{n'} \nabla F_{n'}(\y^{n'}_{t,i})}} \nonumber \\
    &\quad+ \gamma^2 \sigma^2 \left(1-\sum_{n=1}^N (q_t^n)^2\right)\nonumber \\
    &\overset{(c)}{\leq} \left(1+\frac{1}{2I-1}\right)\sum_{n=1}^N q_t^n \Expectsubbracket{\cdot|\mathcal{Q}}{\normsq{\y^n_{t,i} - \sum_{n'=1}^N q_t^{n'}\y^{n'}_{t,i}}} \nonumber \\
    &\quad + 6IL^2\gamma^2 \sum_{n=1}^N q_t^n\Expectsubbracket{\cdot|\mathcal{Q}}{\normsq{\y^n_{t,i} - \sum_{n''=1}^N q_t^{n''}\y^{n''}_{t,i}}} + 6I\gamma^2 \tilde{\nu}^2 \nonumber \\
    &\quad + 6IL^2\gamma^2 \sum_{n=1}^N q_t^n \sum_{n'=1}^N q_t^{n'}\Expectsubbracket{\cdot|\mathcal{Q}}{\normsq{ \y^{n'}_{t,i} - \sum_{n''=1}^N q_t^{n''}\y^{n''}_{t,i} }} + \gamma^2 \sigma^2 \left(1-\sum_{n=1}^N (q_t^n)^2\right) \nonumber \\
    &\overset{(d)}{=} \left(1+\frac{1}{2I-1} + 12IL^2\gamma^2\right)\sum_{n=1}^N q_t^n \Expectsubbracket{\cdot|\mathcal{Q}}{\normsq{\y^n_{t,i} - \sum_{n'=1}^N q_t^{n'}\y^{n'}_{t,i}}} \nonumber \\
    &\quad + 6I\gamma^2 \tilde{\nu}^2 + \gamma^2 \sigma^2 \left(1-\sum_{n=1}^N (q_t^n)^2\right) \nonumber \\
    &\overset{(e)}{\leq} \left(1+\frac{7}{12\left(I-\frac{1}{2}\right)}\right)\sum_{n=1}^N q_t^n \Expectsubbracket{\cdot|\mathcal{Q}}{\normsq{\y^n_{t,i} - \sum_{n'=1}^N q_t^{n'}\y^{n'}_{t,i}}} + 6I\gamma^2 \tilde{\nu}^2 + \gamma^2 \sigma^2 \left(1-\sum_{n=1}^N (q_t^n)^2\right).
    \label{eq:proofyDivergenceRecursion}
\end{align}
In the above, $(a)$ is due to $\Expectsubbracket{\cdot|\mathcal{Q}}{\normsq{\z}} = \normsq{\Expectsubbracket{\cdot|\mathcal{Q}}{\z}} + \Expectsubbracket{\cdot|\mathcal{Q}}{\normsq{\z - \Expectsubbracket{\cdot|\mathcal{Q}}{\z}}}$; $(b)$ is because the stochastic gradient noise is independent across clients, the variance of the sum of independent random variables is equal to the sum of the variance and $\mathrm{Var}(a\z) = a^2 \mathrm{Var}(\z)$ where $\mathrm{Var}(\z) := \Expectsubbracket{\cdot|\mathcal{Q}}{\normsq{\z - \Expectsubbracket{\cdot|\mathcal{Q}}{\z}}}$, and we also use the definition of $\sigma^2$ in Assumption~\ref{assumption:gradientNoise} while noting the law of total expectation as well as
\begin{align*}
    &\sum_{n=1}^N q_t^n \left[\sum_{n'\in\{1,\ldots,N\}\setminus \{n\} } (q_t^{n'})^2 + (1-q_t^n)^2\right] =\sum_{n=1}^N q_t^n \left[\sum_{n'=1}^N (q_t^{n'})^2 - (q_t^n)^2 + (1-q_t^n)^2\right] \\
    &= \sum_{n=1}^N (q_t^n)^2 + \sum_{n=1}^N q_t^n (1-2q_t^n) = \sum_{n=1}^N (q_t^n)^2 + \sum_{n=1}^N q_t^n - 2\sum_{n=1}^N (q_t^n)^2 = 1-\sum_{n=1}^N (q_t^n)^2;
\end{align*}
$(c)$ uses smoothness in the second term, the definition of $\tilde{\nu}$ in the third term, and Jensen's inequality followed by smoothness in the fourth term, while noting that $\sum_{n=1}^N q_t^n = 1$;
$(d)$ combines the second and fourth terms in the above line, while (again) noting that $\sum_{n=1}^N q_t^n = 1$; $(e)$ is because
\begin{align*}
    \frac{1}{2I-1} + 12IL^2\gamma^2 &\leq \frac{1}{2I-1} + \frac{1}{12I} \leq \frac{1}{2\left(I-\frac{1}{2}\right)} + \frac{1}{12\left(I-\frac{1}{2}\right)} = \frac{7}{12\left(I-\frac{1}{2}\right)}
\end{align*}
where the first inequality is due to $\gamma \leq \frac{1}{12 LI\period }  \leq \frac{1}{12 LI}$ since $\period \geq 1$.

For $i=0$, we note that
\begin{align}
    \sum_{n=1}^N q_t^n \Expectsubbracket{\cdot|\mathcal{Q}}{\normsq{\y^n_{t,0} - \sum_{n'=1}^N q_t^{n'}\y^{n'}_{t,0}}} = 0
    \label{eq:proofyDivergenceInitial}
\end{align}
because $\y^n_{t,0} = \x_t$ for all $n$ as specified in Algorithm~\ref{alg:main-alg}, and $\sum_{n=1}^N q_t^n=1$.

By combining \eqref{eq:proofyDivergenceRecursion} and \eqref{eq:proofyDivergenceInitial} and unrolling the recursion, we have for any $i$ that
\begin{align*}
    &\sum_{n=1}^N q_t^n \Expectsubbracket{\cdot|\mathcal{Q}}{\normsq{\y^n_{t,i} - \sum_{n'=1}^N q_t^{n'}\y^{n'}_{t,i}}}\\
    &\leq \sum_{i=0}^{I-1}\left(1+\frac{7}{12\left(I-\frac{1}{2}\right)}\right)^i \left(6I\gamma^2 \tilde{\nu}^2 + \gamma^2 \sigma^2 \left(1-\sum_{n=1}^N (q_t^n)^2\right)\right)\\
    &\overset{(a)}{=} \left[\left(1+\frac{7}{12\left(I-\frac{1}{2}\right)}\right)^I-1\right]\cdot \frac{12\left(I-\frac{1}{2}\right)}{7} \cdot\left( 6I\gamma^2 \tilde{\nu}^2 + \gamma^2 \sigma^2 \left(1-\sum_{n=1}^N (q_t^n)^2\right)\right)\\
    &= \left[\!\left(\! 1\!+\!\frac{7}{12\!\left(I\!-\!\frac{1}{2}\right)}\right)^{\!\!I-\frac{1}{2}}\!\left(\!1\!+\!\frac{7}{12\!\left(I\!-\!\frac{1}{2}\right)}\right)^{\!\!\frac{1}{2}}\!\!-\!1\right]\!\cdot\! \frac{12\left(I\!-\!\frac{1}{2}\right)}{7} \!\cdot\!\left( \!6I\gamma^2 \tilde{\nu}^2 \!+\! \gamma^2 \sigma^2 \left(1\!-\!\sum_{n=1}^N (q_t^n)^2\right)\!\right)\\
    &\overset{(b)}{\leq} \left[ e^{\frac{7}{12}} \cdot \sqrt{\frac{13}{6}}-1\right]\cdot \frac{12\left(I-\frac{1}{2}\right)}{7} \cdot\left( 6I\gamma^2 \tilde{\nu}^2 + \gamma^2 \sigma^2 \left(1-\sum_{n=1}^N (q_t^n)^2\right)\right)\\
    &\overset{(c)}{\leq} 3I\gamma^2\left( 6I \tilde{\nu}^2 +  \sigma^2 \left(1-\sum_{n=1}^N (q_t^n)^2\right)\right)
\end{align*}
where $(a)$ uses the expression of the sum of geometric progression series; $(b)$ uses $\left(1+x\right)^\frac{1}{x} \leq e$ and $I \geq 1$ hence $\left(1+\frac{7}{12\left(I-\frac{1}{2}\right)}\right)^{\frac{1}{2}} \leq \left(1+\frac{7}{6}\right)^{\frac{1}{2}} = \sqrt{\frac{13}{6}}$; $(c)$ is because $\left[ e^{\frac{7}{12}} \cdot \sqrt{\frac{13}{6}}-1\right]\frac{12}{7} \leq 3$ and $I-\frac{1}{2}\leq I$.
\end{proof}

\begin{lemma}
\label{lemma:overallChange}
When $\gamma \leq \frac{1}{12 LI\period }$, we have
\begin{align}
    &\Expectsubbracket{\cdot|\mathcal{Q}}{\normsq{\sum_{n=1}^N q_t^n\y^n_{t,i} - \x_{t_0}}} \nonumber \\
    &\leq 3\gamma^2 I^2 \tilde{\nu}^2  +   24\gamma^2 I^2 \period ^2  \tilde{\beta}^2  + 24 \gamma^2 I^2 \period ^2 \Expectsubbracket{\cdot|\mathcal{Q}}{\normsq{\nabla f(\x_{t_0})}} \nonumber\\
    &\quad+ 3 \gamma^2 I\period  \sigma^2 \left[ \rho^2   + \frac{1}{6\period }  \left(1-\sum_{n=1}^N (q_t^n)^2\right) \right]
\end{align}
for $t_0\leq t\leq t_0+\period-1$.
\end{lemma}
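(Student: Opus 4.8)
The plan is to reduce the left-hand side to a single scalar recursion in the combined (round, local-step) index and then solve it. Write $\overline{\y}_{s,j} := \sum_{n=1}^N q_s^n \y^n_{s,j}$ for the participation-weighted average iterate. Since $\y^n_{s,0}=\x_s$ for all $n$ and $\sum_{n=1}^N q_s^n = 1$, we have $\overline{\y}_{s,0}=\x_s$; and because Line~\ref{alg-line:globalUpdate} gives $\x_{s+1}=\x_s+\sum_{n=1}^N q_s^n\Delta_s^n = \overline{\y}_{s,I}$ for each round transition inside the interval (all of which precede the amplification at round $t_0+\period-1$), the sequence $\overline{\y}_{s,j}$ is continuous across round boundaries throughout $[t_0,t_0+\period-1]$. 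Hence $\overline{\y}_{t_0,0}=\x_{t_0}$, and the local update $\overline{\y}_{s,j+1}=\overline{\y}_{s,j}-\gamma\sum_{n=1}^N q_s^n\g_n(\y^n_{s,j})$ lets me track $e_{s,j}:=\overline{\y}_{s,j}-\x_{t_0}$ as one trajectory of at most $\period I$ steps starting from $e_{t_0,0}=\mathbf{0}$.

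The core step is a one-step expansion of $\Expectsubbracket{\cdot|\mathcal{Q}}{\normsq{e_{s,j+1}}}$. Splitting each stochastic gradient into its mean and a zero-mean noise, and using that the noise at step $(s,j)$ is conditionally independent of $e_{s,j}$ and independent across clients, the mean--noise cross term vanishes and I obtain $\Expectsubbracket{\cdot|\mathcal{Q}}{\normsq{e_{s,j+1}}} = \Expectsubbracket{\cdot|\mathcal{Q}}{\normsq{e_{s,j}-\gamma M_{s,j}}} + \gamma^2\Expectsubbracket{\cdot|\mathcal{Q}}{\normsq{\sum_{n=1}^N q_s^n(\g_n(\y^n_{s,j})-\nabla F_n(\y^n_{s,j}))}}$, with $M_{s,j}:=\sum_{n=1}^N q_s^n\nabla F_n(\y^n_{s,j})$ and the noise term bounded by $\gamma^2\sigma^2\sum_{n=1}^N(q_s^n)^2\le\gamma^2\sigma^2\rho^2$. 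I then apply the Peter--Paul inequality to $\normsq{e_{s,j}-\gamma M_{s,j}}$ with a parameter $a$, and bound $\normsq{M_{s,j}}$ by splitting (Jensen, factor $3$) into a drift term $\sum_{n=1}^N q_s^n[\nabla F_n(\y^n_{s,j})-\nabla F_n(\x_{t_0})]$, a bias term $\sum_{n=1}^N q_s^n[\nabla F_n(\x_{t_0})-\nabla f(\x_{t_0})]$ controlled by $\tilde{\beta}^2$ via \eqref{eq:divergenceBeta}, and $\nabla f(\x_{t_0})$. The drift term is bounded by smoothness and the triangle inequality as $L^2\sum_{n=1}^N q_s^n\normsq{\y^n_{s,j}-\x_{t_0}}\le 2L^2\sum_{n=1}^N q_s^n\normsq{\y^n_{s,j}-\overline{\y}_{s,j}}+2L^2\normsq{e_{s,j}}$, where the consensus error is exactly the left-hand side of Lemma~\ref{lemma:yDivergence}. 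This yields the nested recursion $\Expectsubbracket{\cdot|\mathcal{Q}}{\normsq{e_{s,j+1}}}\le A\,\Expectsubbracket{\cdot|\mathcal{Q}}{\normsq{e_{s,j}}}+B_{s,j}$, where $A=(1+a)+6(1+\tfrac1a)\gamma^2L^2$ and $B_{s,j}$ collects the $\tilde{\nu}^2$, $\tilde{\beta}^2$, $\normsq{\nabla f(\x_{t_0})}$, and noise contributions.

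I would then solve this recursion. Choosing $a\propto 1/(\period I)$ and invoking $\gamma\le\frac{1}{12LI\period}$ makes $A\le 1+\mathcal{O}(1/(\period I))$, so unrolling over the $K\le\period I$ steps from $e_{t_0,0}=\mathbf{0}$ keeps the geometric prefactor $\frac{A^K-1}{A-1}=\mathcal{O}(\period I)$. Summing $B_{s,j}$ then gives the stated bound once the higher-order pieces are absorbed: the $\tilde{\beta}^2$ and $\normsq{\nabla f(\x_{t_0})}$ terms each pick up two factors of $\period I$ (one from $1+\tfrac1a$, one from the geometric sum) and land at order $\gamma^2 I^2\period^2$; the direct noise accumulates over all $K$ steps into order $\gamma^2 I\period\sigma^2\rho^2$; and the $\tilde{\nu}^2$ and consensus-noise pieces, which enter through Lemma~\ref{lemma:yDivergence} and therefore carry an extra $\gamma^2L^2I^2$ factor, are shrunk by exactly $\gamma^2L^2I^2\period^2\le\frac{1}{144}$ so that their two stray factors of $\period$ cancel, leaving the $\tilde{\nu}^2$ term at order $\gamma^2 I^2$ (no $\period$) and the consensus noise at the order of the stated $\frac{1}{6\period}(1-\sum_{n=1}^N(q_t^n)^2)$ correction.

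The main obstacle is the recursion itself: unlike Lemma~\ref{lemma:yDivergence}, whose right-hand side is free of $e_{s,j}$, here the drift term reintroduces $\normsq{e_{s,j}}$ through $\normsq{\y^n_{s,j}-\x_{t_0}}$, so the recurrence for $e$ is genuinely nested inside the consensus-error bound — this is the ``nested recurrence'' that the proof sketch of Theorem~\ref{theorem:mainConvergenceResult} refers to. The delicate bookkeeping is to pick $a$ so that the multiplicative factor $A$ stays below a form $1+\mathcal{O}(1/K)$ analogous to Lemma~\ref{lemma:yDivergence}, and then to track which contributions should retain both factors of $\period$ and which must shed them via the learning-rate constraint, so that the final constants $3$, $24$, $24$, and $3$ emerge as stated.
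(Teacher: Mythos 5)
Your proposal follows essentially the same route as the paper's proof: you track the same quantity $\normsq{\sum_{n=1}^N q_t^n\y^n_{t,i}-\x_{t_0}}$ as a single recursion over the combined (round, local-step) index, use the same observation that $\x_{t+1}=\sum_{n=1}^N q_t^n\y^n_{t,I}$ makes the trajectory continuous across round boundaries inside the amplification interval, perform the same mean/noise variance decomposition with noise bounded by $\gamma^2\rho^2\sigma^2$, bound the mean part by a Peter--Paul-plus-Jensen split into consensus, drift, $\tilde{\beta}^2$, and $\nabla f(\x_{t_0})$ pieces, close the nested recurrence with Lemma~\ref{lemma:yDivergence}, and unroll geometrically over $I\period$ steps under $\gamma\leq\frac{1}{12LI\period}$.

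The one substantive issue is quantitative, and it matters because the lemma asserts explicit constants. You split $\normsq{M_{s,j}}$ three ways (Jensen factor $3$) and then split the drift by a triangle inequality (factor $2$), so the consensus error enters the per-step bound with coefficient $3\cdot 2\cdot\left(1+\frac{1}{a}\right)\gamma^2L^2=12I\period\gamma^2L^2$ under the natural choice $1+\frac{1}{a}=2I\period$. The paper instead adds and subtracts $\nabla F_n\left(\sum_{n'}q_t^{n'}\y^{n'}_{t,i}\right)$ \emph{inside} the norm before applying Jensen, giving a four-way split whose consensus coefficient is $8I\period\gamma^2L^2$. Carrying your coefficient through Lemma~\ref{lemma:yDivergence} and the geometric prefactor (about $2.8I\period$, from your multiplicative factor $1+\frac{7}{12\left(I\period-\frac{1}{2}\right)}$), the $\tilde{\nu}^2$ term lands near $4.2\,\gamma^2I^2\tilde{\nu}^2$ and the consensus-noise correction near $0.7\,\gamma^2 I\sigma^2\left(1-\sum_{n}(q_t^n)^2\right)$, exceeding the stated $3\gamma^2I^2\tilde{\nu}^2$ and $0.5\,\gamma^2I\sigma^2\left(1-\sum_{n}(q_t^n)^2\right)$. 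Retuning the Peter--Paul parameter or using unequal weights in the triangle inequality trades this excess against the requirement that the prefactor stay below $3I\period$ (needed for the $\rho^2\sigma^2$ term), and I do not see a choice of parameters that meets all four stated budgets simultaneously with your decomposition. So either adopt the paper's four-term add-and-subtract split, or restate the lemma with slightly larger constants; everything else in your argument goes through exactly as in the paper.
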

\begin{proof}

We have
\begin{align}
    & \Expectsubbracket{\cdot|\mathcal{Q}}{\normsq{\sum_{n=1}^N q_t^n\y^n_{t,i+1} - \x_{t_0}}}  \nonumber \\
    &= \Expectsubbracket{\cdot|\mathcal{Q}}{\normsq{\sum_{n=1}^N q_t^n\left(\y^n_{t,i} - \gamma \g_n(\y^n_{t,i})\right) - \x_{t_0}}}  \nonumber \\
    &= \Expectsubbracket{\cdot|\mathcal{Q}}{\normsq{\sum_{n=1}^N q_t^n\left(\y^n_{t,i} -  \gamma \nabla F_n(\y^n_{t,i})\right)  - \x_{t_0}}} + \Expectsubbracket{\cdot|\mathcal{Q}}{\normsq{\sum_{n=1}^N q_t^n\gamma\left( \g_n(\y^n_{t,i}) - \nabla F_n(\y^n_{t,i})\right)}}  \nonumber \\
    &\leq \Expect_{\cdot|\mathcal{Q}}\left[\left\Vert\sum_{n=1}^N q_t^n\left[\y^n_{t,i} - \gamma \left(\nabla F_n(\y^n_{t,i}) - \nabla F_n\left(\sum_{n'=1}^N q_t^{n'}\y^{n'}_{t,i}\right) + \nabla F_n\left(\sum_{n'=1}^N q_t^{n'}\y^{n'}_{t,i}\right) \right.\right.\right.\right. \nonumber \\
    &\quad\quad\quad\quad\quad\quad  - \nabla F_n(\x_{t_0}) + \nabla F_n(\x_{t_0}) - \nabla f(\x_{t_0}) + \nabla f(\x_{t_0}) \Bigg) \Bigg]  - \x_{t_0}\Bigg\Vert^2\Bigg] \nonumber + \gamma^2 \rho^2 \sigma^2    \nonumber \\
    &\leq \left(1+\frac{1}{2I\period -1}\right) \Expectsubbracket{\cdot|\mathcal{Q}}{\normsq{\sum_{n=1}^N q_t^n\y^n_{t,i}  - \x_{t_0}}} \nonumber \\
    &\quad + 8I\period \gamma^2 \Expectsubbracket{\cdot|\mathcal{Q}}{\normsq{\sum_{n=1}^N q_t^n\left[\nabla F_n(\y^n_{t,i}) - \nabla F_n\left(\sum_{n'=1}^N q_t^{n'}\y^{n'}_{t,i}\right)\right]}}  \nonumber \\
    &\quad + 8I\period \gamma^2 \Expectsubbracket{\cdot|\mathcal{Q}}{\normsq{\sum_{n=1}^N q_t^n\left[\nabla F_n\left(\sum_{n'=1}^N q_t^{n'}\y^{n'}_{t,i}\right) - \nabla F_n(\x_{t_0})\right]}} \nonumber \\
    &\quad + 8I\period \gamma^2 \Expectsubbracket{\cdot|\mathcal{Q}}{\normsq{\sum_{n=1}^N q_t^n\left[\nabla F_n(\x_{t_0}) - \nabla f(\x_{t_0})\right]}} \nonumber \\
    &\quad  + 8I\period \gamma^2 \Expectsubbracket{\cdot|\mathcal{Q}}{\normsq{\sum_{n=1}^N q_t^n \nabla f(\x_{t_0})}} + \gamma^2 \rho^2 \sigma^2   \nonumber \\
    &\overset{(a)}{\leq} \left(1+\frac{1}{2I\period -1}\right) \Expectsubbracket{\cdot|\mathcal{Q}}{\normsq{\sum_{n=1}^N q_t^n\y^n_{t,i}  - \x_{t_0}}} \nonumber \\
    &\quad + 8I\period \gamma^2 \sum_{n=1}^N q_t^n \Expectsubbracket{\cdot|\mathcal{Q}}{\normsq{\nabla F_n(\y^n_{t,i}) - \nabla F_n\left(\sum_{n'=1}^N q_t^{n'}\y^{n'}_{t,i}\right)}}  \nonumber \\
    &\quad + 8I\period \gamma^2 \sum_{n=1}^N q_t^n\Expectsubbracket{\cdot|\mathcal{Q}}{\normsq{\nabla F_n\left(\sum_{n'=1}^N q_t^{n'}\y^{n'}_{t,i}\right) - \nabla F_n(\x_{t_0})}} \nonumber \\
    &\quad + 8I\period \gamma^2 \tilde{\beta}^2  + 8I\period \gamma^2 \Expectsubbracket{\cdot|\mathcal{Q}}{\normsq{\nabla f(\x_{t_0})}} + \gamma^2 \rho^2 \sigma^2   \nonumber \\
    &\overset{(b)}{\leq} \left(1+\frac{1}{2I\period -1}\right) \Expectsubbracket{\cdot|\mathcal{Q}}{\normsq{\sum_{n=1}^N q_t^n\y^n_{t,i}  - \x_{t_0}}} \nonumber \\
    &\quad + 8I\period L^2\gamma^2 \sum_{n=1}^N q_t^n \Expectsubbracket{\cdot|\mathcal{Q}}{\normsq{\y^n_{t,i} - \sum_{n'=1}^N q_t^{n'}\y^{n'}_{t,i}}}  \nonumber \\
    &\quad + 8I\period L^2\gamma^2 \Expectsubbracket{\cdot|\mathcal{Q}}{\normsq{\sum_{n'=1}^N q_t^{n'}\y^{n'}_{t,i} - \x_{t_0}}} \nonumber \\
    &\quad + 8I\period \gamma^2 \tilde{\beta}^2 + 8I\period \gamma^2 \Expectsubbracket{\cdot|\mathcal{Q}}{\normsq{\nabla f(\x_{t_0})}} + \gamma^2 \rho^2 \sigma^2    \nonumber \\
    &\overset{(c)}{\leq} \left(1+\frac{1}{2I\period -1} + 8I\period L^2\gamma^2\right) \Expectsubbracket{\cdot|\mathcal{Q}}{\normsq{\sum_{n=1}^N q_t^n\y^n_{t,i}  - \x_{t_0}}} \nonumber \\
    &\quad + 24I^2\period L^2\gamma^4 \left( 6I \tilde{\nu}^2 +  \sigma^2 \left(1-\sum_{n=1}^N (q_t^n)^2\right)\right)     \nonumber \\
    &\quad +   8I\period \gamma^2 \tilde{\beta}^2  + 8I\period \gamma^2 \Expectsubbracket{\cdot|\mathcal{Q}}{\normsq{\nabla f(\x_{t_0})}} + \gamma^2 \rho^2 \sigma^2   \nonumber \\
    &\overset{(d)}{\leq} \left(1+\frac{5}{9\left(I\period -\frac{1}{2}\right)} \right) \Expectsubbracket{\cdot|\mathcal{Q}}{\normsq{\sum_{n=1}^N q_t^n\y^n_{t,i}  - \x_{t_0}}} +    144\gamma^4 I^3 \period  L^2 \tilde{\nu}^2  \nonumber \\
    &\quad +   8I\period \gamma^2 \tilde{\beta}^2  + 8I\period \gamma^2 \Expectsubbracket{\cdot|\mathcal{Q}}{\normsq{\nabla f(\x_{t_0})}} + \gamma^2 \sigma^2 \left[ \rho^2   + 24\gamma^2 I^2\period L^2  \left(1-\sum_{n=1}^N (q_t^n)^2\right) \right]\nonumber \\
    &\overset{(e)}{\leq} \left(1+\frac{5}{9\left(I\period -\frac{1}{2}\right)} \right) \Expectsubbracket{\cdot|\mathcal{Q}}{\normsq{\sum_{n=1}^N q_t^n\y^n_{t,i}  - \x_{t_0}}} +    \frac{\gamma^2 I \tilde{\nu}^2}{\period }  \nonumber \\
    &\quad +   8I\period \gamma^2 \tilde{\beta}^2  + 8I\period \gamma^2 \Expectsubbracket{\cdot|\mathcal{Q}}{\normsq{\nabla f(\x_{t_0})}} + \gamma^2 \sigma^2 \left[ \rho^2   + \frac{1}{6\period }  \left(1-\sum_{n=1}^N (q_t^n)^2\right) \right]
    .
    \label{eq:proofLemmaOverallChangeRecursion}
\end{align}

In the above, the stochastic gradient variance decomposition is the same as the proof of Lemma~\ref{lemma:yDivergence}, while noting that $\sum_{n=1}^N (q_t^n)^2 \leq \rho^2$; $(a)$ uses Jensen's inequality, the definition of $\tilde{\beta}^2$, and $\sum_{n=1}^N q_t^n =1$; $(b)$ uses $L$-smoothness; $(c)$ is from Lemma~\ref{lemma:yDivergence}; $(d)$ is due to
\begin{align*}
    \frac{1}{2I\period -1} + 8I\period L^2\gamma^2 &\leq \frac{1}{2I\period -1} + \frac{1}{18I\period } \leq \frac{1}{2\left(I\period -\frac{1}{2}\right)} + \frac{1}{18\left(I\period -\frac{1}{2}\right)} = \frac{5}{9\left(I\period -\frac{1}{2}\right)}
\end{align*}
where the first inequality is due to $\gamma \leq \frac{1}{12 LI\period }$; $(e)$ uses $\gamma \leq \frac{1}{12 LI\period }$ in the second and last terms.

We note that, for $t_0 \leq t \leq t_0 +\period-2$, we have
\begin{align}
    &\Expectsubbracket{\cdot|\mathcal{Q}}{\normsq{\sum_{n=1}^N q_{t+1}^n\y^n_{t+1,0} - \x_{t_0}}} \nonumber \\
    &= \Expectsubbracket{\cdot|\mathcal{Q}}{\normsq{\sum_{n=1}^N q_{t+1}^n\x_{t+1} - \x_{t_0}}} = \Expectsubbracket{\cdot|\mathcal{Q}}{\normsq{\x_{t+1} - \x_{t_0}}} = \Expectsubbracket{\cdot|\mathcal{Q}}{\normsq{\sum_{n=1}^N q_t^n\y^n_{t,I} - \x_{t_0}}}
    \label{eq:proofLemmaOverallChangeRecursionAcrosst1}
\end{align}
because $\y^n_{t,0} = \x_t$ for all $n$ as specified in Algorithm~\ref{alg:main-alg}, and $\sum_{n=1}^N q_t^n=1$ hence $\x_{t+1}=\x_t + \sum_{n=1}^N q_t^n (\y_{t,I}^n - \x_t) = \sum_{n=1}^N q_t^n \y_{t,I}^n$ according to Algorithm~\ref{alg:main-alg}.

Therefore, the same recursion in \eqref{eq:proofLemmaOverallChangeRecursion} also holds between $t+1$ and $t$, i.e.,
\begin{align}
    &\Expectsubbracket{\cdot|\mathcal{Q}}{\normsq{\sum_{n=1}^N q_{t+1}^n\y^n_{t+1,1} - \x_{t_0}}}  \nonumber \\*
    &\leq \left(1+\frac{5}{9\left(I\period -\frac{1}{2}\right)} \right) \Expectsubbracket{\cdot|\mathcal{Q}}{\normsq{\sum_{n=1}^N q_t^n\y^n_{t,I}  - \x_{t_0}}} \nonumber \\
    &\quad\quad  + \frac{\gamma^2 I \tilde{\nu}^2}{\period }  +   8I\period \gamma^2 \tilde{\beta}^2  + 8I\period \gamma^2 \Expectsubbracket{\cdot|\mathcal{Q}}{\normsq{\nabla f(\x_{t_0})}}+ \gamma^2 \sigma^2 \left[ \rho^2   + \frac{1}{6\period }  \left(1-\sum_{n=1}^N (q_t^n)^2\right) \right] .
    \label{eq:proofLemmaOverallChangeRecursionAcrosst2}
\end{align}

When $t=t_0$ and $i=0$, note that
\begin{align}
    \Expectsubbracket{\cdot|\mathcal{Q}}{\normsq{\sum_{n=1}^N q_{t_0}^n\y^n_{t_0,0} - \x_{t_0}}}=0
    \label{eq:proofLemmaOverallChangeInitial}
\end{align}
because $\y^n_{t_0,0} = \x_{t_0}$ for all $n$ as specified in Algorithm~\ref{alg:main-alg}, and $\sum_{n=1}^N q_t^n=1$.

We combine \eqref{eq:proofLemmaOverallChangeRecursion}, \eqref{eq:proofLemmaOverallChangeRecursionAcrosst2}, and \eqref{eq:proofLemmaOverallChangeInitial}. Since $t_0\leq t\leq t_0+\period-1$, unrolling the recursion gives
\begin{align*}
    &\Expectsubbracket{\cdot|\mathcal{Q}}{\normsq{\sum_{n=1}^N q_t^n\y^n_{t,i} - \x_{t_0}}} \\
    &\leq \sum_{\kappa = 0}^{I\period -1}\left(1+\frac{5}{9\left(I\period -\frac{1}{2}\right)} \right)^\kappa \Bigg(   \frac{\gamma^2 I \tilde{\nu}^2}{\period }  +   8I\period \gamma^2 \tilde{\beta}^2  + 8I\period \gamma^2 \Expectsubbracket{\cdot|\mathcal{Q}}{\normsq{\nabla f(\x_{t_0})}} \\
    &\quad\quad\quad\quad\quad\quad\quad\quad\quad\quad\quad\quad\quad\quad\quad\quad  + \gamma^2 \sigma^2 \left[ \rho^2   + \frac{1}{6\period }  \left(1-\sum_{n=1}^N (q_t^n)^2\right) \right] \Bigg) \\
    &= \left[\!\left(1+\frac{5}{9\left(I\period \!-\!\frac{1}{2}\right)} \right)^{I\period } \!\!\!- 1 \right] \!\cdot\frac{9\left(I\period \!-\!\frac{1}{2}\right)}{5}\cdot  \Bigg(   \frac{\gamma^2 I \tilde{\nu}^2}{\period }  +   8I\period \gamma^2 \tilde{\beta}^2  + 8I\period \gamma^2 \Expectsubbracket{\cdot|\mathcal{Q}}{\normsq{\nabla f(\x_{t_0})}} \\
    &\quad\quad\quad\quad\quad\quad\quad\quad\quad\quad\quad\quad\quad\quad\quad\quad+ \gamma^2 \sigma^2 \left[ \rho^2   + \frac{1}{6\period }  \left(1-\sum_{n=1}^N (q_t^n)^2\right) \right] \Bigg)\\
    &= \left[\!\left(1+\frac{5}{9\left(I\period \!-\!\frac{1}{2}\right)} \right)^{\!\!I\period -\frac{1}{2}} \left(1+\frac{5}{9\left(I\period -\frac{1}{2}\right)} \right)^{\!\!\frac{1}{2}} - 1 \right] \cdot\frac{9\left(I\period \!-\!\frac{1}{2}\right)}{5}\cdot  \Bigg(   \frac{\gamma^2 I \tilde{\nu}^2}{\period }  +   8I\period \gamma^2 \tilde{\beta}^2 \\
    &\quad\quad\quad\quad\quad\quad\quad\quad\quad\quad+ 8I\period \gamma^2 \Expectsubbracket{\cdot|\mathcal{Q}}{\normsq{\nabla f(\x_{t_0})}} + \gamma^2 \sigma^2 \left[ \rho^2   + \frac{1}{6\period }  \left(1-\sum_{n=1}^N (q_t^n)^2\right) \right] \Bigg)\\
    &\leq \left[e^{\frac{5}{9}}\!\cdot\!\sqrt{\frac{19}{9}}  - 1 \right] \cdot\frac{9\left(I\period \!-\!\frac{1}{2}\right)}{5}\cdot  \Bigg(   \frac{\gamma^2 I \tilde{\nu}^2}{\period }  +   8I\period \gamma^2 \tilde{\beta}^2  + 8I\period \gamma^2 \Expectsubbracket{\cdot|\mathcal{Q}}{\normsq{\nabla f(\x_{t_0})}} \\
    &\quad\quad\quad\quad\quad\quad\quad\quad\quad\quad\quad\quad\quad\quad\quad\quad + \gamma^2 \sigma^2 \left[ \rho^2   + \frac{1}{6\period }  \left(1\!-\!\sum_{n=1}^N (q_t^n)^2\right) \right] \Bigg)\\
    &\leq 3I\period  \left(   \frac{\gamma^2 I \tilde{\nu}^2}{\period }  +   8I\period \gamma^2 \tilde{\beta}^2  + 8I\period \gamma^2 \Expectsubbracket{\cdot|\mathcal{Q}}{\normsq{\nabla f(\x_{t_0})}} + \gamma^2 \sigma^2 \left[ \rho^2   + \frac{1}{6\period } \! \left(1\!-\!\sum_{n=1}^N (q_t^n)^2\right)\! \right]\! \right)\\
    &\leq 3\gamma^2 I^2 \tilde{\nu}^2  \!+ \!  24\gamma^2 I^2 \period ^2  \tilde{\beta}^2  \!+\! 24 \gamma^2 I^2 \period ^2 \Expectsubbracket{\cdot|\mathcal{Q}}{\normsq{\nabla f(\x_{t_0})}} \!+ \!3 \gamma^2 I\period  \sigma^2 \!\left[ \rho^2  \! + \!\frac{1}{6\period }  \!\left(\!1\!-\!\sum_{n=1}^N (q_t^n)^2\!\right) \!\right]
\end{align*}
where the rationale behind the steps is similar to the proof of Lemma~\ref{lemma:yDivergence}.
\end{proof}

\subsection{Proof of Theorem~\ref{theorem:mainConvergenceResult}}

Consider ${t_0}=k\period $ for $k=0,1,2,...$.

Due to smoothness,
\begin{align}
    &\Expectsubbracket{\cdot|\mathcal{Q},{t_0}}{f(\x_{{t_0}+\period })}\nonumber
    \\ 
    &\leq f(\x_{{t_0}}) +\Expectsubbracket{\cdot|\mathcal{Q},{t_0}}{\innerprod{\nabla f(\x_{{t_0}}), \x_{{t_0}+\period } -\x_{{t_0}} }} + \frac{L}{2}\Expectsubbracket{\cdot|\mathcal{Q},{t_0}}{\normsq{\x_{{t_0}+\period } -\x_{{t_0}}}}\nonumber \\
    &\leq f(\x_{{t_0}}) -\gamma \eta \innerprod{\nabla f(\x_{{t_0}}), \Expectsubbracket{\cdot|\mathcal{Q},{t_0}}{\sum_{t={t_0}}^{{t_0}+\period -1}\sum_{n=1}^N q_t^n \sum_{i=0}^{I-1}\g_n(\y^n_{t,i})}} \nonumber \\
    &\quad \quad + \frac{\gamma^2 \eta^2 L}{2}\Expectsubbracket{\cdot|\mathcal{Q},{t_0}}{\normsq{\sum_{t={t_0}}^{{t_0}+\period -1}\sum_{n=1}^N q_t^n \sum_{i=0}^{I-1}\g_n(\y^n_{t,i})}}\nonumber \\
    &\overset{(a)}{=} f(\x_{{t_0}}) -\gamma \eta \innerprod{\nabla f(\x_{{t_0}}), \Expectsubbracket{\cdot|\mathcal{Q},{t_0}}{\sum_{t={t_0}}^{{t_0}+\period -1}\sum_{n=1}^N q_t^n \sum_{i=0}^{I-1} \Expectbracket{\g_n(\y^n_{t,i}) \big| \mathcal{Q}, \y^n_{t,i}, \x_{{t_0}}}}} \nonumber\\
    &\quad \quad + \frac{\gamma^2\eta^2 L}{2}\Expectsubbracket{\cdot|\mathcal{Q},{t_0}}{\normsq{\sum_{t={t_0}}^{{t_0}+\period -1}\sum_{n=1}^N q_t^n \sum_{i=0}^{I-1}\g_n(\y^n_{t,i})}}\nonumber \\
    &\overset{(b)}{=} f(\x_{{t_0}}) -\gamma \eta \innerprod{\nabla f(\x_{{t_0}}), \Expectsubbracket{\cdot|\mathcal{Q},{t_0}}{\sum_{t={t_0}}^{{t_0}+\period -1}\sum_{n=1}^N q_t^n \sum_{i=0}^{I-1} \nabla F_n(\y^n_{t,i})}} \nonumber \\
    &\quad \quad + \frac{\gamma^2\eta^2 L}{2}\Expectsubbracket{\cdot|\mathcal{Q},{t_0}}{\normsq{\sum_{t={t_0}}^{{t_0}+\period -1}\sum_{n=1}^N q_t^n \sum_{i=0}^{I-1} \g_n(\y^n_{t,i})}} \nonumber %
\end{align}
where $\Expectsubbracket{\cdot|\mathcal{Q},{t_0}}{\z}$ is a short-hand notation for $\Expectbracket{\z \big|\mathcal{Q}, \x_{{t_0}}}$; $(a)$ is due to the law of total expectation, where the expectation is taken over $\y^n_{t,i}$; $(b)$ is due to the unbiasedness of stochastic gradients. In other parts of the proof, we may use the law of total expectation in a similar way without explanation. 

Taking expectation on both sides over $\x_{t_0}$, we obtain
\begin{align}
    \Expectsubbracket{\cdot|\mathcal{Q}}{f(\x_{{t_0}+\period })} 
    &\leq \Expectsubbracket{\cdot|\mathcal{Q}}{f(\x_{{t_0}})} -\Expectsubbracket{\cdot|\mathcal{Q}}{\gamma \eta \innerprod{\nabla f(\x_{{t_0}}), \sum_{t={t_0}}^{{t_0}+\period -1}\sum_{n=1}^N q_t^n \sum_{i=0}^{I-1}\nabla F_n(\y^n_{t,i})}} \nonumber\\
    &\quad\quad + \frac{\gamma^2\eta^2 L}{2}\Expectsubbracket{\cdot|\mathcal{Q}}{\normsq{\sum_{t={t_0}}^{{t_0}+\period -1}\sum_{n=1}^N q_t^n \sum_{i=0}^{I-1}\g_n(\y^n_{t,i})}}. \label{eq:proofSmoothness}
\end{align}

Consider the second term in \eqref{eq:proofSmoothness},
\begin{align*}
    & -\gamma \eta \innerprod{\nabla f(\x_{{t_0}}), \sum_{t={t_0}}^{{t_0}+\period -1}\sum_{n=1}^N q_t^n \sum_{i=0}^{I-1}\nabla F_n(\y^n_{t,i})}\\
    &= -\frac{\gamma \eta}{I\period } \innerprod{I\period  \nabla f(\x_{{t_0}}), \sum_{t={t_0}}^{{t_0}+\period -1}\sum_{n=1}^N q_t^n \sum_{i=0}^{I-1}\nabla F_n(\y^n_{t,i})}\\
    &\overset{(a)}{=} \frac{\gamma \eta}{2I\period } \normsq{\sum_{t={t_0}}^{{t_0}+\period -1}\sum_{n=1}^N q_t^n \sum_{i=0}^{I-1}\nabla F_n(\y^n_{t,i}) - I\period \nabla f(\x_{{t_0}})} \\
    & \quad\quad  - \frac{\gamma \eta I\period }{2} \normsq{\nabla f(\x_{{t_0}})}  - \frac{\gamma \eta}{2I\period } \normsq{\sum_{t={t_0}}^{{t_0}+\period -1}\sum_{n=1}^N q_t^n \sum_{i=0}^{I-1}\nabla F_n(\y^n_{t,i})}
    \\
    &= \frac{\gamma \eta}{2I\period } \normsq{\sum_{t={t_0}}^{{t_0}+\period -1}\sum_{n=1}^N q_t^n \sum_{i=0}^{I-1} \left[ \nabla F_n(\y^n_{t,i}) - \nabla f(\x_{{t_0}})\right]} \\
    & \quad\quad - \frac{\gamma \eta I\period }{2} \normsq{\nabla f(\x_{{t_0}})} - \frac{\gamma \eta}{2I\period } \normsq{\sum_{t={t_0}}^{{t_0}+\period -1}\sum_{n=1}^N q_t^n \sum_{i=0}^{I-1}\nabla F_n(\y^n_{t,i})}
    \\
    &= \frac{\gamma \eta}{2I\period } \Bigg\Vert\sum_{t={t_0}}^{{t_0}+\period -1}\sum_{n=1}^N q_t^n \sum_{i=0}^{I-1} \Bigg[ \nabla F_n(\y^n_{t,i}) - \nabla F_n\left(\sum_{n'=1}^N q_t^{n'} \y^{n'}_{t,i}\right) + \nabla F_n\left(\sum_{n'=1}^N q_t^{n'} \y^{n'}_{t,i}\right) \\
    & \quad\quad\quad\quad\quad\quad\quad\quad\quad\quad - \nabla F_n(\x_{{t_0}}) + \nabla F_n(\x_{{t_0}}) - \nabla f(\x_{{t_0}})\Bigg]\Bigg\Vert^2 \\
    &\quad\quad
    - \frac{\gamma \eta I\period }{2} \normsq{\nabla f(\x_{{t_0}})}
    - \frac{\gamma \eta}{2I\period } \normsq{\sum_{t={t_0}}^{{t_0}+\period -1}\sum_{n=1}^N q_t^n \sum_{i=0}^{I-1}\nabla F_n(\y^n_{t,i})}
    \\
    &\overset{(b)}{\leq} \frac{3\gamma \eta}{2I\period } \normsq{\sum_{t={t_0}}^{{t_0}+\period -1}\sum_{n=1}^N q_t^n \sum_{i=0}^{I-1} \left[ \nabla F_n(\y^n_{t,i}) - \nabla F_n\left(\sum_{n'=1}^N q_t^{n'} \y^{n'}_{t,i}\right)\right]} \\  
    &\quad\quad + \frac{3\gamma \eta}{2I\period } \normsq{\sum_{t={t_0}}^{{t_0}+\period -1}\sum_{n=1}^N q_t^n \sum_{i=0}^{I-1} \left[ \nabla F_n\left(\sum_{n'=1}^N q_t^{n'} \y^{n'}_{t,i}\right) - \nabla F_n(\x_{{t_0}})\right]}\\
    &\quad\quad + \frac{3\gamma \eta}{2I\period } \normsq{\sum_{t={t_0}}^{{t_0}+\period -1}\sum_{n=1}^N q_t^n \sum_{i=0}^{I-1} \left[ \nabla F_n(\x_{{t_0}}) - \nabla f(\x_{{t_0}})\right]} \\
    &\quad\quad
    - \frac{\gamma \eta I\period }{2} \normsq{\nabla f(\x_{{t_0}})}
    - \frac{\gamma \eta}{2I\period } \normsq{\sum_{t={t_0}}^{{t_0}+\period -1}\sum_{n=1}^N q_t^n \sum_{i=0}^{I-1}\nabla F_n(\y^n_{t,i})}
    \\
    &\overset{(c)}{\leq} \frac{3\gamma \eta}{2} \sum_{t={t_0}}^{{t_0}+\period -1}\sum_{n=1}^N  \sum_{i=0}^{I-1} q_t^n \normsq{  \nabla F_n(\y^n_{t,i}) - \nabla F_n\left(\sum_{n'=1}^N q_t^{n'} \y^{n'}_{t,i}\right)}\\
    &\quad\quad + \frac{3\gamma \eta}{2} \sum_{t={t_0}}^{{t_0}+\period -1}\sum_{n=1}^N  \sum_{i=0}^{I-1} q_t^n \normsq{  \nabla F_n\left(\sum_{n'=1}^N q_t^{n'} \y^{n'}_{t,i}\right) - \nabla F_n(\x_{{t_0}})} \\
    &\quad\quad + \frac{3\gamma \eta I\period  \tilde{\delta}^2(\period)}{2}  
    - \frac{\gamma \eta I\period }{2} \normsq{\nabla f(\x_{{t_0}})}
    - \frac{\gamma \eta}{2I\period } \normsq{\sum_{t={t_0}}^{{t_0}+\period -1}\sum_{n=1}^N q_t^n \sum_{i=0}^{I-1}\nabla F_n(\y^n_{t,i})}
    \\
    &\leq \frac{3\gamma \eta L^2}{2} \sum_{t={t_0}}^{{t_0}+\period -1}\sum_{n=1}^N  \sum_{i=0}^{I-1} q_t^n \normsq{ \y^n_{t,i} - \sum_{n'=1}^N q_t^{n'} \y^{n'}_{t,i}}\\
    &\quad\quad + \frac{3\gamma \eta L^2}{2} \sum_{t={t_0}}^{{t_0}+\period -1} \sum_{i=0}^{I-1} \normsq{  \sum_{n'=1}^N q_t^{n'} \y^{n'}_{t,i} - \x_{{t_0}}} \\
    &\quad\quad + \frac{3\gamma \eta I\period  \tilde{\delta}^2(\period)}{2}  
    - \frac{\gamma \eta I\period }{2} \normsq{\nabla f(\x_{{t_0}})}
    - \frac{\gamma \eta}{2I\period } \normsq{\sum_{t={t_0}}^{{t_0}+\period -1}\sum_{n=1}^N q_t^n \sum_{i=0}^{I-1}\nabla F_n(\y^n_{t,i})}
\end{align*}
where $(a)$ uses $-\innerprod{\mathbf{a}, \mathbf{b}} = \frac{1}{2}(\normsq{\mathbf{a}-\mathbf{b}} - \normsq{\mathbf{a}} - \normsq{\mathbf{b}})$;
$(b)$ uses Jensen's inequality in the first term; $(c)$ uses Jensen's inequality in the first two terms and the definition of $\tilde{\delta}^2(\period)$ in the third term.

Hence,
\begin{align}
    & -\Expectsubbracket{\cdot|\mathcal{Q}}{\gamma \eta \innerprod{\nabla f(\x_{{t_0}}), \sum_{t={t_0}}^{{t_0}+\period -1}\sum_{n=1}^N q_t^n \sum_{i=0}^{I-1}\nabla F_n(\y^n_{t,i})}} \nonumber\\
    &\leq \frac{3\gamma \eta L^2}{2} \sum_{t={t_0}}^{{t_0}+\period -1}\sum_{n=1}^N  \sum_{i=0}^{I-1} q_t^n \Expectsubbracket{\cdot|\mathcal{Q}}{\normsq{ \y^n_{t,i} - \sum_{n'=1}^N q_t^{n'} \y^{n'}_{t,i}}} \nonumber\\
    &\quad + \frac{3\gamma \eta L^2}{2} \sum_{t={t_0}}^{{t_0}+\period -1} \sum_{i=0}^{I-1} \Expectsubbracket{\cdot|\mathcal{Q}}{\normsq{  \sum_{n'=1}^N q_t^{n'} \y^{n'}_{t,i} - \x_{{t_0}}}} \nonumber \\
    &\quad+ \frac{3\gamma \eta I\period  \tilde{\delta}^2(\period)}{2}  
    - \frac{\gamma \eta I\period }{2} \Expectsubbracket{\cdot|\mathcal{Q}}{\normsq{\nabla f(\x_{{t_0}})}}
    - \frac{\gamma \eta}{2I\period } \Expectsubbracket{\cdot|\mathcal{Q}}{\normsq{\sum_{t={t_0}}^{{t_0}+\period -1}\sum_{n=1}^N q_t^n \sum_{i=0}^{I-1}\nabla F_n(\y^n_{t,i})}} \nonumber\\
    &\overset{(a)}{\leq} \frac{3\gamma \eta L^2}{2} \sum_{t={t_0}}^{{t_0}+\period -1} \sum_{i=0}^{I-1} 3I\gamma^2\left( 6I \tilde{\nu}^2 +  \sigma^2 \left(1-\sum_{n=1}^N (q_t^n)^2\right)\right) \nonumber\\
    &\quad + \frac{3\gamma \eta L^2}{2} \sum_{t={t_0}}^{{t_0}+\period -1} \sum_{i=0}^{I-1} \Bigg(3\gamma^2 I^2 \tilde{\nu}^2  +   24\gamma^2 I^2 \period ^2  \tilde{\beta}^2  + 24 \gamma^2 I^2 \period ^2 \Expectsubbracket{\cdot|\mathcal{Q}}{\normsq{\nabla f(\x_{t_0})}} \nonumber\\
    &\quad\quad\quad\quad\quad\quad\quad\quad\quad\quad\quad\quad + 3 \gamma^2 I\period  \sigma^2 \left[ \rho^2   \!+\! \frac{1}{6\period }  \left(1\!-\!\sum_{n=1}^N (q_t^n)^2\right) \right] \Bigg)  \nonumber\\
    &\quad+ \frac{3\gamma \eta I\period  \tilde{\delta}^2(\period)}{2}  
    - \frac{\gamma \eta I\period }{2} \Expectsubbracket{\cdot|\mathcal{Q}}{\normsq{\nabla f(\x_{{t_0}})}}
    - \frac{\gamma \eta}{2I\period } \Expectsubbracket{\cdot|\mathcal{Q}}{\normsq{\sum_{t={t_0}}^{{t_0}+\period -1}\sum_{n=1}^N q_t^n \sum_{i=0}^{I-1}\nabla F_n(\y^n_{t,i})}} \nonumber\\
    &= \frac{9\gamma^3 \eta L^2 I^2\period }{2} \left( 6I \tilde{\nu}^2 +  \sigma^2 \left(1-\sum_{n=1}^N (q_t^n)^2\right)\right) \nonumber\\
    &\quad + \frac{3\gamma \eta L^2 I\period }{2}  \Bigg(3\gamma^2 I^2 \tilde{\nu}^2  +   24\gamma^2 I^2 \period ^2  \tilde{\beta}^2  + 24 \gamma^2 I^2 \period ^2 \Expectsubbracket{\cdot|\mathcal{Q}}{\normsq{\nabla f(\x_{t_0})}} \nonumber\\
    &\quad\quad\quad\quad\quad\quad\quad\quad\quad\quad\quad\quad + 3 \gamma^2 I\period  \sigma^2 \left[ \rho^2   + \frac{1}{6\period }  \left(1-\sum_{n=1}^N (q_t^n)^2\right) \right]\Bigg)  \nonumber\\
    &\quad+ \frac{3\gamma \eta I\period  \tilde{\delta}^2(\period)}{2}  
    - \frac{\gamma \eta I\period }{2} \Expectsubbracket{\cdot|\mathcal{Q}}{\normsq{\nabla f(\x_{{t_0}})}}
    - \frac{\gamma \eta}{2I\period } \Expectsubbracket{\cdot|\mathcal{Q}}{\normsq{\sum_{t={t_0}}^{{t_0}+\period -1}\sum_{n=1}^N q_t^n \sum_{i=0}^{I-1}\nabla F_n(\y^n_{t,i})}}
    \label{eq:proofTheoremIntermediate1}
\end{align}
where $(a)$ uses Lemmas~\ref{lemma:yDivergence} and \ref{lemma:overallChange}.

Note also that
\begin{align}
    &\Expectsubbracket{\cdot|\mathcal{Q}}{\normsq{\sum_{t={t_0}}^{{t_0}+\period -1}\sum_{n=1}^N q_t^n \sum_{i=0}^{I-1}\g_n(\y^n_{t,i})}} \nonumber\\
    &\overset{(a)}{\leq} 2\Expectsubbracket{\cdot|\mathcal{Q}}{\normsq{\sum_{t={t_0}}^{{t_0}+\period -1}\sum_{n=1}^N q_t^n \sum_{i=0}^{I-1}\nabla F_n(\y^n_{t,i})}} \nonumber\\
    &\quad\quad + 2\Expectsubbracket{\cdot|\mathcal{Q}}{\normsq{\sum_{t={t_0}}^{{t_0}+\period -1}\sum_{n=1}^N q_t^n \sum_{i=0}^{I-1}\left[\g_n(\y^n_{t,i}) - \nabla F_n(\y^n_{t,i})\right]}} \nonumber\\
    &\overset{(b)}{=} 2\Expectsubbracket{\cdot|\mathcal{Q}}{\normsq{\sum_{t={t_0}}^{{t_0}+\period -1}\sum_{n=1}^N q_t^n \sum_{i=0}^{I-1}\nabla F_n(\y^n_{t,i})}} \nonumber\\
    &\quad\quad + 2\sum_{t={t_0}}^{{t_0}+\period -1}\sum_{n=1}^N \left(q_t^n\right)^2 \sum_{i=0}^{I-1} \Expectsubbracket{\cdot|\mathcal{Q}}{\normsq{\g_n(\y^n_{t,i}) - \nabla F_n(\y^n_{t,i})}} \nonumber\\
    &\overset{(c)}{\leq} 2\Expectsubbracket{\cdot|\mathcal{Q}}{\normsq{\sum_{t={t_0}}^{{t_0}+\period -1}\sum_{n=1}^N q_t^n \sum_{i=0}^{I-1}\nabla F_n(\y^n_{t,i})}} + 2I \period  \rho^2 \sigma^2 
    \label{eq:proofTheoremIntermediate2}
\end{align}
where $(a)$ uses Jensen's inequality; $(b)$ is because the cross terms after expanding the norm in the second term has zero mean, due to the unbiasedness of the stochastic gradient for any parameter $\y^n_{t,i}$ which implies that $\g_n(\y^n_{t,i}) - \nabla F_n(\y^n_{t,i})$ has zero mean when conditioned on any $\y^n_{t,i}$;
$(c)$ uses the definitions of $\sigma^2$ and $\rho^2$.

Plugging \eqref{eq:proofTheoremIntermediate1} and \eqref{eq:proofTheoremIntermediate2} back into \eqref{eq:proofSmoothness}, we get
\begin{align*}
    &\Expectsubbracket{\cdot|\mathcal{Q}}{f(\x_{{t_0}+\period })}\nonumber
    \\ 
    &\leq \Expectsubbracket{\cdot|\mathcal{Q}}{f(\x_{{t_0}})} -\Expectsubbracket{\cdot|\mathcal{Q}}{\gamma \eta \innerprod{\nabla f(\x_{{t_0}}), \sum_{t={t_0}}^{{t_0}+\period -1}\sum_{n=1}^N q_t^n \sum_{i=0}^{I-1}\nabla F_n(\y^n_{t,i})}} \\
    &\quad + \frac{\gamma^2\eta^2 L}{2}\Expectsubbracket{\cdot|\mathcal{Q}}{\normsq{\sum_{t={t_0}}^{{t_0}+\period -1}\sum_{n=1}^N q_t^n \sum_{i=0}^{I-1}\g_n(\y^n_{t,i})}} \\
    &\leq \Expectsubbracket{\cdot|\mathcal{Q}}{f(\x_{{t_0}})} + \frac{9 \gamma^3 \eta L^2 I^2\period }{2} \left( 6I \tilde{\nu}^2 +  \sigma^2 \left(1-\sum_{n=1}^N (q_t^n)^2\right)\right)\\
    &\quad + \frac{3\gamma \eta L^2 I\period }{2}  \left(3\gamma^2 I^2 \tilde{\nu}^2  +   24\gamma^2 I^2 \period ^2  \tilde{\beta}^2  + 24 \gamma^2 I^2 \period ^2 \Expectsubbracket{\cdot|\mathcal{Q}}{\normsq{\nabla f(\x_{t_0})}} \right.\\
    &\quad\quad\quad\quad\quad\quad\quad\quad\quad\quad \left.+ 3 \gamma^2 I\period  \sigma^2 \left[ \rho^2   + \frac{1}{6\period }  \left(1-\sum_{n=1}^N (q_t^n)^2\right) \right] \right) \\
    &\quad+ \frac{3\gamma \eta I\period  \tilde{\delta}^2(\period)}{2}  
    - \frac{\gamma \eta I\period }{2} \Expectsubbracket{\cdot|\mathcal{Q}}{\normsq{\nabla f(\x_{{t_0}})}}
    - \frac{\gamma \eta}{2I\period } \Expectsubbracket{\cdot|\mathcal{Q}}{\normsq{\sum_{t={t_0}}^{{t_0}+\period -1}\sum_{n=1}^N q_t^n \sum_{i=0}^{I-1}\nabla F_n(\y^n_{t,i})}}\\
    &\quad + \gamma^2\eta^2 L\cdot\Expectsubbracket{\cdot|\mathcal{Q}}{\normsq{\sum_{t={t_0}}^{{t_0}+\period -1}\sum_{n=1}^N q_t^n \sum_{i=0}^{I-1}\nabla F_n(\y^n_{t,i})}} + \gamma^2\eta^2 L  I \period  \rho^2 \sigma^2 \\
    &= \Expectsubbracket{\cdot|\mathcal{Q}}{f(\x_{{t_0}})} + \frac{9 \gamma^3 \eta L^2 I^2\period }{2}  \left( 6I \tilde{\nu}^2 +  \sigma^2 \left(1-\sum_{n=1}^N (q_t^n)^2\right)\right)\\
    &\quad + \frac{3\gamma \eta L^2 I\period }{2}  \left(3\gamma^2 I^2 \tilde{\nu}^2  +   24\gamma^2 I^2 \period ^2  \tilde{\beta}^2   + 3 \gamma^2 I\period  \sigma^2 \left[ \rho^2   + \frac{1}{6\period }  \left(1-\sum_{n=1}^N (q_t^n)^2\right) \right] \right) \\
    &\quad+ \frac{3\gamma \eta I\period  \tilde{\delta}^2(\period)}{2}  
    - \frac{\gamma \eta I\period }{2} \left(1 - 72 \gamma^2 L^2 I^2 \period ^2\right) \Expectsubbracket{\cdot|\mathcal{Q}}{\normsq{\nabla f(\x_{{t_0}})}}\\  
    &\quad
    - \left(\frac{\gamma \eta}{2I\period } - \gamma^2\eta^2 L\right) \Expectsubbracket{\cdot|\mathcal{Q}}{\normsq{\sum_{t={t_0}}^{{t_0}+\period -1}\sum_{n=1}^N q_t^n \sum_{i=0}^{I-1}\nabla F_n(\y^n_{t,i})}}\\
    &\quad + \gamma^2\eta^2 L  I \period  \rho^2 \sigma^2 \\
    &\leq \Expectsubbracket{\cdot|\mathcal{Q}}{f(\x_{{t_0}})} + \frac{9 \gamma^3 \eta L^2 I^2\period }{2} \left( 6I \tilde{\nu}^2 +  \sigma^2 \left(1-\sum_{n=1}^N (q_t^n)^2\right)\right)\\
    &\quad + \frac{3\gamma \eta L^2 I\period }{2}  \left(3\gamma^2 I^2 \tilde{\nu}^2  +   24\gamma^2 I^2 \period ^2  \tilde{\beta}^2   + 3 \gamma^2 I\period  \sigma^2 \left[ \rho^2   + \frac{1}{6\period }  \left(1-\sum_{n=1}^N (q_t^n)^2\right) \right]  \right) \\
    &\quad+ \frac{3\gamma \eta I\period  \tilde{\delta}^2(\period)}{2}- \frac{ \gamma \eta I\period }{4}  \Expectsubbracket{\cdot|\mathcal{Q}}{\normsq{\nabla f(\x_{{t_0}})}}\\
    &\quad + \gamma^2\eta^2 L  I \period  \rho^2 \sigma^2 
\end{align*}
where the last inequality is because $\gamma \leq \frac{1}{12 LI\period }$ thus $-(1 - 72 \gamma^2 L^2 I^2 \period ^2) \leq -\frac{1}{2} $, and $\gamma \eta \leq \frac{1}{2LI\period }$ thus $\frac{1}{2I\period } - \gamma \eta L \geq 0$.

Rearranging, we have
\begin{align*}
    &\frac{ \gamma \eta I\period }{4}  \Expectsubbracket{\cdot|\mathcal{Q}}{\normsq{\nabla f(\x_{{t_0}})}}\\
    &\leq \Expectsubbracket{\cdot|\mathcal{Q}}{f(\x_{{t_0}})} - \Expectsubbracket{\cdot|\mathcal{Q}}{f(\x_{{t_0}+\period })} + \frac{9 \gamma^3 \eta L^2 I^2\period }{2} \left( 6I \tilde{\nu}^2 +  \sigma^2 \left(1-\sum_{n=1}^N (q_t^n)^2\right)\right)\\
    &\quad + \frac{3\gamma \eta L^2 I\period }{2}  \left(3\gamma^2 I^2 \tilde{\nu}^2  +   24\gamma^2 I^2 \period ^2  \tilde{\beta}^2   + 3 \gamma^2 I\period  \sigma^2 \left[ \rho^2   + \frac{1}{6\period }  \left(1-\sum_{n=1}^N (q_t^n)^2\right) \right]  \right) \\
    &\quad + \frac{3\gamma \eta I\period  \tilde{\delta}^2(\period)}{2}  + \gamma^2\eta^2 L  I \period  \rho^2 \sigma^2 .
\end{align*}

Hence,
\begin{align}
    &\Expectsubbracket{\cdot|\mathcal{Q}}{\normsq{\nabla f(\x_{{t_0}})}} \nonumber \\
    &\leq \frac{4\left(\Expectsubbracket{\cdot|\mathcal{Q}}{f(\x_{{t_0}})} - \Expectsubbracket{\cdot|\mathcal{Q}}{f(\x_{{t_0}+\period })}\right)}{\gamma \eta I\period } + 18\gamma^2 L^2 I \left( 6I \tilde{\nu}^2 +  \sigma^2 \left(1-\sum_{n=1}^N (q_t^n)^2\right)\right)\nonumber\\
    &\quad + 6 L^2  \left(3\gamma^2 I^2 \tilde{\nu}^2  +   24\gamma^2 I^2 \period ^2  \tilde{\beta}^2   + 3 \gamma^2 I\period  \sigma^2 \left[ \rho^2   + \frac{1}{6\period }  \left(1-\sum_{n=1}^N (q_t^n)^2\right) \right]  \right) \nonumber\\
    &\quad + 6 \tilde{\delta}^2(\period)  + 4\gamma\eta L \rho^2 \sigma^2 \nonumber\\
    &=\frac{4\left(\Expectsubbracket{\cdot|\mathcal{Q}}{f(\x_{{t_0}})} - \Expectsubbracket{\cdot|\mathcal{Q}}{f(\x_{{t_0}+\period })}\right)}{ \gamma \eta I\period } + 126 \gamma^2 L^2 I^2 \tilde{\nu}^2 + 144 \gamma^2 L^2 I^2 \period ^2  \tilde{\beta}^2 + 6 \tilde{\delta}^2(\period) \nonumber\\
    &\quad + 18\gamma^2 L^2 I \sigma^2 \left(1-\sum_{n=1}^N (q_t^n)^2\right) + 18 \gamma^2 L^2 I\period  \sigma^2 \left[ \rho^2   + \frac{1}{6\period }  \left(1-\sum_{n=1}^N (q_t^n)^2\right) \right] + 4\gamma\eta L \rho^2 \sigma^2 \nonumber\\
    &=\frac{4\left(\Expectsubbracket{\cdot|\mathcal{Q}}{f(\x_{{t_0}})} - \Expectsubbracket{\cdot|\mathcal{Q}}{f(\x_{{t_0}+\period })}\right)}{ \gamma \eta I\period } + 126 \gamma^2 L^2 I^2 \tilde{\nu}^2 + 144 \gamma^2 L^2 I^2 \period ^2  \tilde{\beta}^2 + 6 \tilde{\delta}^2(\period) \nonumber\\
    &\quad + 21\gamma^2 L^2 I \sigma^2 \left(1-\sum_{n=1}^N (q_t^n)^2\right) + 18 \gamma^2 L^2 I\period  \sigma^2 \rho^2  + 4\gamma\eta L \rho^2 \sigma^2 \nonumber\\
    &\leq \frac{4\left(\Expectsubbracket{\cdot|\mathcal{Q}}{f(\x_{{t_0}})} - \Expectsubbracket{\cdot|\mathcal{Q}}{f(\x_{{t_0}+\period })}\right)}{ \gamma \eta I\period } + 126 \gamma^2 L^2 I^2 \tilde{\nu}^2 + 144 \gamma^2 L^2 I^2 \period ^2  \tilde{\beta}^2 + 6 \tilde{\delta}^2(\period) \nonumber\\
    &\quad + \left(21\gamma^2 L^2 I  + 18 \gamma^2 L^2 I\period   \rho^2  + 4\gamma\eta L \rho^2\right) \sigma^2 . \label{eq:proofTheoremBeforeTelescoping}
\end{align}

We now have
\begin{align*}
    &\min_{t} \Expectsubbracket{\cdot|\mathcal{Q}}{\normsq{\nabla f(\x_{{t}})}}\\
    &\leq \min_{t_0 \in \left\{0, \period , 2\period , ..., \left(\left\lfloor \frac{T}{\period }\right\rfloor -1\right) \period \right\}} \Expectsubbracket{\cdot|\mathcal{Q}}{\normsq{\nabla f(\x_{{t_0}})}} \\
    &\leq \frac{1}{\left\lfloor \sfrac{T}{\period }\right\rfloor}\cdot \sum_{{t_0}=0, \period , 2\period , ..., \left(\left\lfloor \frac{T}{\period }\right\rfloor -1\right) \period } \Expectsubbracket{\cdot|\mathcal{Q}}{\normsq{\nabla f(\x_{{t_0}})}} \\
    &\overset{(a)}{\leq} \frac{8\left(f(\x_0) - f^*\right)}{\gamma \eta IT} + 126 \gamma^2 L^2 I^2 \tilde{\nu}^2 + 144 \gamma^2 L^2 I^2 \period ^2  \tilde{\beta}^2 + 6 \tilde{\delta}^2(\period) \\
    &\quad\quad + \left(21\gamma^2 L^2 I  + 18 \gamma^2 L^2 I\period   \rho^2  + 4\gamma\eta L \rho^2\right) \sigma^2 \\
    &= \mathcal{O}\left( \frac{\mathcal{F}}{\gamma \eta IT} +   \gamma^2 L^2 I^2 \tilde{\nu}^2 +   \gamma^2 L^2 I^2 \period ^2  \tilde{\beta}^2 +   \tilde{\delta}^2(\period) + \left( \gamma^2 L^2 I  +   \gamma^2 L^2 I\period   \rho^2  +  \gamma\eta L \rho^2\right) \sigma^2  \right)
\end{align*}
where the first term in $(a)$ is from telescoping of \eqref{eq:proofTheoremBeforeTelescoping} and $\frac{1}{\left\lfloor \sfrac{T}{\period }\right\rfloor \period } \leq \frac{1}{\left(\sfrac{T}{\period } -1\right)\period } = \frac{1}{T-\period } \leq \frac{1}{T-\sfrac{T}{2}} = \frac{2}{T}$ since $\period \leq \frac{T}{2}$, and the other terms in $(a)$ are constants independent of $t_0$.
\qed

\subsection{Proof of Corollaries~\ref{corollary:mainConvergenceResult1} and \ref{corollary:mainConvergenceResult2}}

For Corollary~\ref{corollary:mainConvergenceResult1}, we note that $\gamma\eta = \min\left\{\frac{\sqrt{\mathcal{F}}}{\sigma\rho\sqrt{LIT}}; \frac{1}{2LI\period }\right\}$ by the choice of $\gamma$ and $\eta$. Because $\gamma \leq \frac{1}{12LI\period }$ (since $T\geq 1$) and $\gamma\eta \leq \frac{1}{2LI\period }$, the conditions of Theorem~\ref{theorem:mainConvergenceResult} hold. Then, using $\gamma\eta \leq \frac{\sqrt{\mathcal{F}}}{\sigma\rho\sqrt{LIT}}$, $\frac{1}{\gamma\eta} = \max\left\{\frac{\sigma\rho\sqrt{LIT}}{\sqrt{\mathcal{F}}}; 2LI\period \right\} \leq \frac{\sigma\rho\sqrt{LIT}}{\sqrt{\mathcal{F}}} + 2LI\period $, and $\rho^2\leq 1$, we obtain the result.

For Corollary~\ref{corollary:mainConvergenceResult2}, we have $\gamma \eta = \frac{\sqrt{\mathcal{F}}}{\rho\sqrt{LIT}}$ according to the choice of $\gamma$ and $\eta$. Because $\frac{\sqrt{\mathcal{F}}}{\rho\sqrt{LIT}} \leq \frac{1}{2LIP}$, we have $\gamma \eta \leq \frac{1}{2LIP}$ and the conditions of Theorem~\ref{theorem:mainConvergenceResult} hold. The result is obtained by plugging the values of $\gamma$ and $\eta$ into the result in Theorem~\ref{theorem:mainConvergenceResult}, and noting that $\rho^2\leq 1$ in the last term.
\qed

\section{Additional Details and Results of Experiments}
\label{sec:appendixExperiment}

\subsection{Additional Details of the Setup}
\label{subsec:appendixExperimentSetup}

\textbf{Code.}
Please visit \url{https://shiqiang.wang/code/fl-arbitrary-participation}

\textbf{Datasets.}
The FashionMNIST dataset\footnote{\url{https://github.com/zalandoresearch/fashion-mnist}} has MIT license and a citation requirement \cite{FashionMNIST}. The CIFAR-10 dataset\footnote{\url{https://www.cs.toronto.edu/~kriz/cifar.html}} only has a citation requirement \cite{CIFAR10}. We have cited both papers in our main paper.
Our experiments used the original split of training and test data in each dataset, where the training data is further split into workers in a non-IID manner according to the labels of data samples with a similarity of $5\%$, as described in the main paper.

\textbf{Motivation of the Setup.}
The non-IID splitting of data to clients and the periodic connectivity are motivated by application scenarios of FL. An example of a real-world scenario is where different geographically-dispersed user groups can participate in training at different times of the day, e.g., when their phones are charging during the night. Another example is where multiple organizations have their local servers (also known as edge servers), which can participate in training only when their servers are idle, such as during the night when there are few user requests, and these servers are located in different geographical regions. Our notion of ``client'' is a general term that can stand for the phone or the local server in these specific scenarios.

\textbf{Availability of Clients.}
In the case of \textit{periodically available} clients, as described in the main paper, subsets of clients with different majority class labels take turns to become available. For example, only those clients with the first two majority classes of data are available in the first $100$ rounds; then, in the next $100$ rounds, only those clients with the next two majority classes of data are available, and so on. The majority class of each client is determined by the non-IID data partitioning. There are $10$ classes in total, so a full cycle of participation of all clients is $500$ rounds. To avoid inherent synchronicity across different simulation instances, we apply a random offset in the participation cycle at the beginning of the FL process. With this random offset, the number of rounds where the first subset of clients participate can be decreased to less than $100$.
For example, the first subset of clients may participate for $37$ rounds (an arbitrary number that is less than or equal to $100$), the second subset of clients then participates for the next $100$ rounds, and so on. This random offset varies across different simulation instances. It represents the practical scenario where the FL process can start at any time in the day. In each round, $S=10$ clients are selected to participate, according to a random permutation among the currently available clients. The selection of a small number of clients among the available clients can be due to resource efficiency considerations, for instance.

In addition to periodic availability, we also consider a setting where all the $N=250$ clients are \textit{always available}, and a subset of $S=10$ clients are selected to participate in each round. Here, we consider two participation patterns. For independent participation, the subset of clients are selected randomly according to a uniform distribution, independently across rounds. For regularized participation, the clients are first randomly permuted and then $S$ clients are selected sequentially from the permuted sequence of clients, so that all the $N$ clients participate in FL once in every $\frac{N}{S}$ rounds.

\textbf{Hyperparameter Choices.}
The initial local learning rates of $\gamma=0.1$ and $\gamma=0.05$ for FashionMNIST and CIFAR-10, respectively, were obtained from a learning rate search where the model was trained using $\gamma\in\{0.5, 0.1, 0.05, 0.01, 0.005, 0.001, 0.0005, 0.0001\}$ while fixing $\eta=1$. We considered the case of always available clients for this learning rate search. We chose the learning rate that gave the smallest training loss after $2,000$ rounds of training for FashionMNIST and $4,000$ rounds of training for CIFAR-10, where we observed that the best learning rate is the same for both independent and regularized participation with each dataset. The reason of considering always available clients for tuning the initial learning rates is because this is the ideal case which allows the model to make big steps in training, so it is reasonable to use these learning rates as starting points even in the case of periodic availability. The number of rounds that use the initial learning rate in the case of periodic availability was determined so that the model's performance no longer improves substantially afterwards, if keeping the same initial learning rate.

The subsequent learning rates for the methods shown in Figure \ref{fig:periodicAvailable}, with periodically available clients, are found from a search on a grid that is $\{1, 0.1, 0.01, 0.001, 0.0001\}$ times the initial learning rate, as described in the main paper. The optimal learning rates found from this search are shown in Table~\ref{table:optimalLearningRates}.

\begin{table}[ht]
\caption{Optimal learning rates found from grid search} 
\centering
\label{table:optimalLearningRates}
{
\small
\renewcommand{\arraystretch}{1.5}
\begin{tabular}{>{\centering\arraybackslash}p{0.4\linewidth} | >{\centering\arraybackslash}p{0.15\linewidth} | >{\centering\arraybackslash}p{0.15\linewidth} } 
\thickhline
Method & FashionMNIST & CIFAR-10\\
\thickhline
wait-minibatch & $\gamma = 0.1$ & $\gamma = 0.05$ \\
wait-full & $\gamma = 0.1$ & $\gamma = 0.05$ \\
Algorithm~\ref{alg:main-alg} without amplification ($\eta = 1$) & $\gamma = 1 \times 10^{-5}$ & $\gamma = 5 \times 10^{-5}$ \\
Algorithm~\ref{alg:main-alg} with amplification ($\eta = 10, P=500$) & $\gamma = 1 \times 10^{-5}$ & $\gamma = 5 \times 10^{-6}$ \\
\thickhline
\end{tabular}
}
\end{table}

We fix the number of local updates to $I=5$ and use a minibatch size of $16$ for local SGD at each participating client. These parameters were not tuned specifically, since the relative difference between different methods will likely follow the same trend for different $I$ and minibatch sizes. Because $I$ is a controllable parameter in our experiments, we consider a minor extension (compared to our discussion in Section~\ref{sec:discussions}) in the ``wait-*'' methods in our experiments, where each client performs $I$ local iterations (instead of one iteration in the discussion in Section~\ref{sec:discussions}) before starting to wait. The intuition remains the same as what has been discussed in Section~\ref{sec:discussions}. The reason for this extension is because the ``wait-*'' methods defined in this way is equivalent to the case where all the clients are available but the total number of rounds is reduced by $P$, so it has an easily interpretable meaning in practice. In addition, changing $I$ can change the performance of all methods (including Algorithm~\ref{alg:main-alg}, both with and without amplification, and the ``wait-*'' methods), while their relative difference can remain similar. We do not study the effect of $I$ and minibatch size in details in this work.

\textbf{Model Architecture.}
The CNN architectures used in our experiments are shown in Table~\ref{table:CNNArchitecture}, where we use Kaiming initialization \cite{he2015delving} for all the ReLU layers. All convolutional and dense layers, except for the last layer, use ReLU activation.

\begin{table}[ht]
\caption{CNN architectures} 
\centering
\label{table:CNNArchitecture}
{
\small
\renewcommand{\arraystretch}{1.5}
\begin{tabular}{>{\centering\arraybackslash}p{0.4\linewidth} | >{\centering\arraybackslash}p{0.4\linewidth} } 
\thickhline
FashionMNIST & CIFAR-10\\
\thickhline
Convolutional (input: $1$, kernel: $5\times 5$, padding: 2, output: $32$) & Convolutional (input: $3$, kernel: $5\times 5$, padding: 2, output: $32$) 
\\
MaxPool (kernel: $2\times 2$, stride: $2$) & MaxPool (kernel: $2\times 2$, stride: $2$) 
\\
Convolutional (input: $32$, kernel: $5\times 5$, padding: 2, output: $32$) & Convolutional (input: $32$, kernel: $5\times 5$, padding: 2, output: $64$)
\\
MaxPool (kernel: $2\times 2$, stride: $2$) & MaxPool (kernel: $2\times 2$, stride: $2$)
\\
Dense (input: $1,568$, output: $128$) & Dense (input: $4,096$, output: $512$)
\\
Dense (input: $128$, output: $10$) & Dense (input: $512$, output: $128$)
\\
Softmax & Dense (input: $128$, output: $10$)
\\
& Softmax 
\\
\thickhline
\end{tabular}
}
\end{table}

\textbf{Collecting Results.}
The plots in Figures \ref{fig:periodicAvailable}--\ref{fig:periodicAvailableDiffP} and all the figures showing additional results in the next section were obtained from simulations with $10$ different random seeds for FashionMNIST and $5$ different random seeds for CIFAR-10. For the best viewing experience, we further applied moving average over a window length of $3\%$ of the data points (on the $x$-axis) in the loss and accuracy results. In all the figures, the solid lines and the shaded areas show the average and standard deviation values, respectively, over the moving average window and all simulation runs for each configuration.

\textbf{Computational Resources.}
The workload of experiments was split between a desktop machine with RTX 3070 GPU and an internal GPU cluster. On RTX 3070, the FashionMNIST experiment with $300,000$ training rounds takes about a day to complete for a single simulation instance; the CIFAR-10 experiment with $600,000$ training rounds takes about three days to complete for a single simulation instance. 
The RTX 3070 has memory that can support simulation with $5$ different random seeds (i.e., $5$ different instances) at the same time. On the internal GPU cluster, the running times are slightly shorter since it has better-performing GPUs.
We also note that the ``wait-*'' methods run faster than Algorithm~\ref{alg:main-alg} in simulation because a lot of rounds are skipped, but this does not mean that they can train models faster in practice, because we consider the number of training rounds as an indication of physical time elapse. 
For example, in the case of periodic availability, each subset of clients is available for $100$ rounds in a cycle. This means that $100$ rounds of training can take place during the physical time duration that these clients are available. Algorithm~\ref{alg:main-alg} chooses to actually train models during this time. The ``wait-*'' methods choose not to train in some of the rounds, but they still need to wait until the time has passed and the next subset of clients becomes available, where the time here is measured by the number of training rounds that includes those rounds that may be skipped by the ``wait-*'' methods.

\subsection{Additional Results}

In the following, we present some additional results obtained from experiments.

\subsubsection{Always Available Clients}
\label{subsec:appendixExperimentResultsAlways}

We first consider the case of always available clients, where a subset of $S=10$ clients are selected in an independent or regularized manner, as discussed in Section~\ref{subsec:appendixExperimentSetup}. We compare the results of independent and regularized participation in Figure~\ref{fig:alwaysAvailable}. The observation is that both methods give similar performance, with regularized participation having a marginal performance improvement. As our theory predicts (see Table~\ref{table:mainResults}), both independent and regularized participation can guarantee convergence, where regularized participation gives a slightly better convergence rate.

\begin{figure}[htb]
\includegraphics[width=\linewidth]{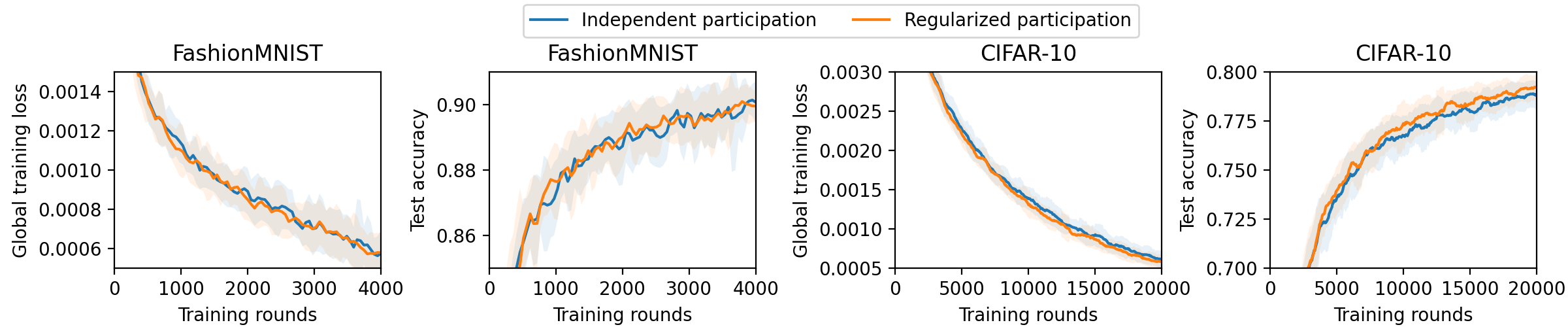}%
\caption{Comparison between independent and regularized participation in the case of always available clients.}
\label{fig:alwaysAvailable}
\end{figure}

\subsubsection{Periodically Available Clients}
\label{subsec:appendixExperimentResultsPeriodic}

\textbf{Different Learning Rates and Amplification Factors.}
Next, we consider periodically available clients as in the main paper. We study the performance of Algorithm~\ref{alg:main-alg} with different learning rate and amplification factor configurations. The results are shown in Figures~\ref{fig:periodicAvailableDiffLR} and \ref{fig:periodicAvailableDiffAmplification}. 

Figure~\ref{fig:periodicAvailableDiffLR} shows the results with different combinations of the local learning rate $\gamma$ and amplification factor $\eta$. We note that $\eta=1$ corresponds to the case without amplification and $\eta=10$ corresponds to the case with amplification (and with an amplification factor of $10$). We can observe that when $\gamma$ is large, the training is generally unstable with high fluctuation, high loss, and low accuracy. Reducing $\gamma$ improves the performance.
According to our theory, $\period=500$ gives a small $\tilde{\delta}^2(\period)$ with this experimental setup, because the cycle of all clients being available once is $500$ rounds. On the other hand, a small $\tilde{\delta}^2(\period)$ can be achieved with a much smaller $\period$ in the case of always available clients (in Section~\ref{subsec:appendixExperimentResultsAlways}). When $\period$ gets large, our theory suggests that the learning rate $\gamma$ should become small in order to guarantee convergence. This explains why non-convergence is observed in Figure~\ref{fig:periodicAvailableDiffLR} for large $\gamma$, while convergence is observed for small $\gamma$. 
For the smallest $\gamma$ in each case, i.e., $\gamma=10^{-5}$ for FashionMNIST and $\gamma=5\times 10^{-6}$ for CIFAR-10, we can see that amplification with $\eta=10$ further improves the performance over no amplification ($\eta=1$).

\begin{figure}[H]
\includegraphics[width=\linewidth]{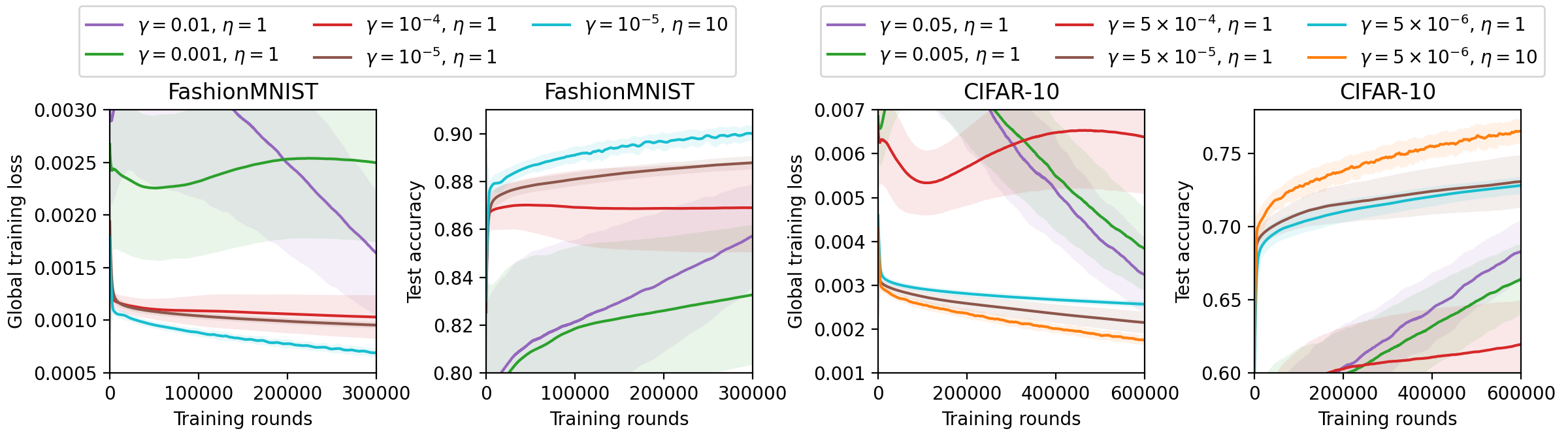}%
\caption{Algorithm~\ref{alg:main-alg} with different learning rates in the case of periodically available clients. Note: FashionMNIST with $\gamma=0.1$ is not shown in the plots due to NaN values observed when the number of training rounds is large.}
\label{fig:periodicAvailableDiffLR}
\vspace{1.5em}
\end{figure}

In Figure~\ref{fig:periodicAvailableDiffAmplification}, we further investigate the impact of different amplification factors $\eta$, while fixing $\gamma=10^{-5}$ for FashionMNIST and $\gamma=5\times 10^{-6}$ for CIFAR-10 as in the case of $\eta=10$ in Figure~\ref{fig:periodicAvailableDiffLR}. We observe that the cases with $\eta=5$, $\eta=10$, and $\eta=15$ have similar performance, especially when getting closer to convergence. The performance gets worse when $\eta$ is too small (e.g., $\eta=2$), because the advantage of amplification is not fully attained in this case. When $\eta$ is too large (e.g., $\eta=20$), we observe higher fluctuation in the loss and accuracy values, which also leads to lower performance. For both datasets and their corresponding models considered in our experiments, we observe that the cases of $\eta=5$, $\eta=10$, and $\eta=15$ all give reasonably good performance. This suggests that a coarse choice of $\eta$ based on experience can be sufficient in practice, without the need of fine-grained tuning.

\begin{figure}[H]
\includegraphics[width=\linewidth]{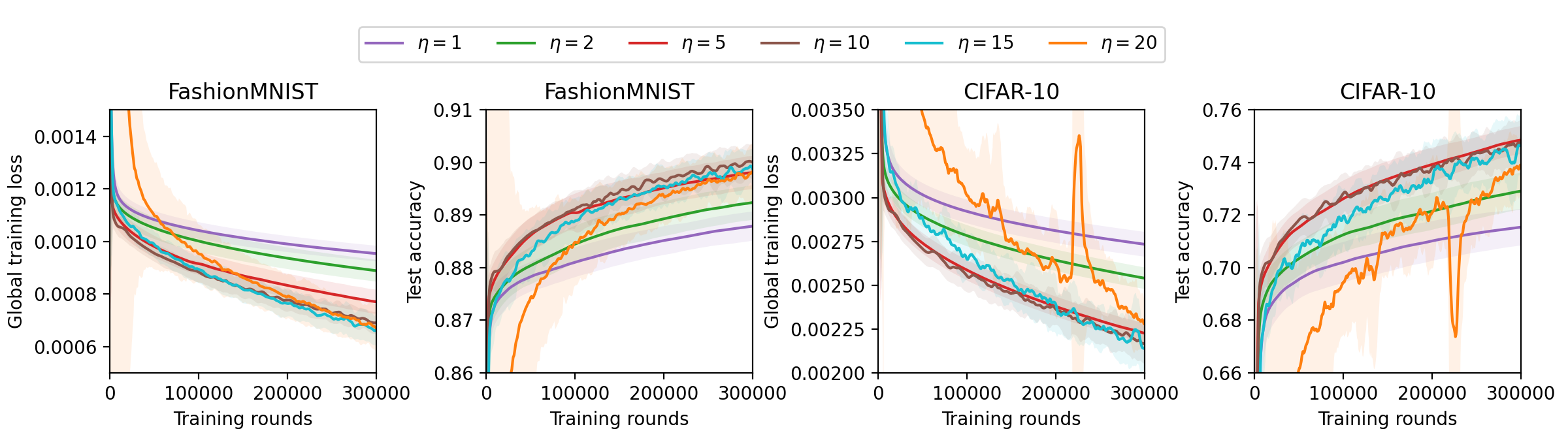}%
\caption{Algorithm~\ref{alg:main-alg} with different amplification factors $\eta$, while fixing $\gamma=10^{-5}$ for FashionMNIST and $\gamma=5\times 10^{-6}$ for CIFAR-10.}
\label{fig:periodicAvailableDiffAmplification}
\end{figure}

\end{document}